\newcommand{\inserthere}{\hspace*{-1mm}
\begin{minipage}{4cm}\vspace*{-1.4cm}
\begin{itemize}
	\item[{\color{black}$\uparrow$}] {\color{black}factor graph}
	\item[] \smallskip
	\item[] 
	\item[$\leftarrow$] \bemph{formulas (\ref{sidtycfgdity})}
\end{itemize}
\end{minipage}}
\newtheorem{theorem}{Theorem}
\newtheorem{lemma}[theorem]{Lemma}
\newtheorem{convention}{Convention}
\newtheorem{definition}[theorem]{Definition}
\newenvironment{proof}{\textit{Proof:}~}{}
\newtheorem{discussion}{Discussion}
\newtheorem{example}{Example}
\newcommand{\psys}[1]{\left(\Omega_{#1},\pi_{#1}\right)}
\newcommand{\csys}[1]{C_{#1}}
\newcommand{\qsys}[1]{Q_{#1}}
\def\sg{[\joinrel\mathrel[}
\def\sd{\mathrel]\joinrel]}
\newcommand{\keyw}[1]{{\color{blue}#1}}
\newcommand{\progpara}{\keyw{\,{\|}\,}}
\newcommand{\source}[1]{\mbox{\sf is\_source}(#1)}
\newcommand{\sem}[1]{\sg \mathrel{#1} \sd}
\newcommand{\psem}[1]{\proba_{#1}}
\newcommand{\gsem}[1]{\cN\sem{#1}}
\newcommand{\mem}{p}
\newcommand{\nil}{\mathsf{nil}}
\newcommand{\plang}{{\sf ReactiveBayes}}
\newcommand{\supp}{\mathbf{supp}}
\newcommand{\produces}[2]{#1\,{\leadsto}\,#2}
\newcommand{\parents}{\mbox{\sc pa}}
\newcommand{\Bool}{\mathbb{B}}
\newcommand{\cF}{\mathcal{F}}
\newcommand{\sigalg}{$\sigma$-algebra}
\newcommand{\cons}{C}
\newcommand{\Cons}{\mathcal{C}}
\newcommand{\system}{S}
\newcommand{\bsystem}{\mathcal{S}}
\newcommand{\Systems}{\mathbb{S}}
\newcommand{\Kernels}{\mathbb{\kernel}}
\newcommand{\atom}{\mathbf{a}}
\newcommand{\pa}{\mbox{\textsc{pa}}}
\newcommand{\spa}{\mbox{\textsc{spa}}}
\newcommand{\mmdp}{Mixed Automaton}
\newcommand{\mmdps}{Mixed Automata}
\newcounter{cexample}
\newcommand{\ancestor}[1]{#1^{\uparrow}}
\newcommand{\removalbert}[1]{}
\newcommand{\finremovalbert}[1]{}
\newcommand{\myparagraph}[1]{~\\{\vspace*{-5mm}}{\\}\noindent\textrm{\emph{#1}}:}
\newcommand{\consist}[1]{#1^{\sf c}}
\newcommand{\prog}[1]{{\texttt{#1}}}
\newcommand{\wemph}[1]{{\color{white}#1}}
\newcommand{\bemph}[1]{{\color{blue}#1}}
\newcommand{\outproba}{\overline{\proba}}
\newcommand{\inproba}{\underline{\proba}}
\newcommand{\trace}{\mathbf{r}}
\newcommand{\simu}{\rho}
\newcommand{\NMPlift}[1]{\,#1^\Systems\,}
\newcommand{\prev}[1]{#1^{\!<}}
\newcommand{\trans}[4]{{#1}\stackrel{#2}{\longrightarrow}_{#4}{#3}}
\newcommand{\probatrans}[3]{{#1}\stackrel{#2}{{\leadsto}}{#3}}
\newcommand{\fullprobatrans}[5]{\probatrans{{\trans{#1}{#2}{#3}{#4}}}{}{#5}}
\newcommand{\transindex}[4]{{#1}\stackrel{#2}{\longrightarrow}_{#4}{#3}}
\newcommand{\la}{\leftarrow}
\newcommand{\shortproj}[1]{\mathbf{Pr}_{#1}}
\newcommand{\proj}[2]{\mathbf{Pr}_{#1\!}(#2)}
\newcommand{\margin}[2]{\mathbf{Margin}_{#1}\!\left(#2\right)}
\newcommand{\cond}[2]{\mathbf{Cond}_{#1}\!\left(#2\right)}
\newcommand{\monograph}[2]{#1\,\mbox{---}\,#2}
\newcommand{\para}{{\,\mathbin{\|}\,}}
\newcommand{\mpara}{\para}
\newcommand{\likelihood}{\ell}
\newcommand{\bayestimes}{;}
\newcommand{\depends}{\hookrightarrow}
\newcommand{\ra}{\rightarrow}
\newcommand{\Ra}{\Rightarrow}
\newcommand{\bx}{\mathbf{x}}
\newcommand{\bX}{\mathbf{X}}
\newcommand{\bP}{\mathbf{P}}
\newcommand{\bR}{\mathbb{R}}
\newcommand{\bN}{\mathbb{N}}
\newcommand{\eproof}{\hfill$\Box$ \smallskip}
\newcommand{\cG}{{\mathcal G}}
\newcommand{\cH}{{\mathcal H}}
\newcommand{\cB}{{\mathcal B}}
\newcommand{\cP}{{\mathcal P}}
\newcommand{\alphabet}{\Sigma}
\newcommand{\Alphabet}{\mathbf{\Sigma}}
\newcommand{\preset}[1]{{^{\bullet\!}{#1}}}
\newcommand{\ppreset}[2]{{^{\bullet{#1}\!}{#2}}}
\newcommand{\postset}[1]{{#1}^{\bullet}}
\newcommand{\inal}[1]{X_{{#1}}^{\rm in}}
\newcommand{\outal}[1]{X_{{#1}}^{\rm out}}
\newcommand{\action}{\alpha}
\newcommand{\Allvars}{\mathcal{X}}
\newcommand{\Vars}{X}
\newcommand{\mtrue}{\mbox{\tt T}}
\newcommand{\mfalse}{\mbox{\tt F}}
\newcommand{\true}{\mbox{\textsc{t}}}
\newcommand{\bE}{\mathbb{E}}
\newcommand{\diag}{{\it diag}}
\newcommand{\kernel}{K}
\newcommand{\proba}{\pi}
\newcommand{\Proba}{\Pi}
\newcommand{\Probas}{\mathcal{P}}
\newcommand{\beq}{\begin{eqnarray}}
\newcommand{\eeq}{\end{eqnarray}}
\newcommand{\beqq}{\begin{eqnarray*}}
\newcommand{\eeqq}{\end{eqnarray*}}
\newcommand{\bea}{\begin{array}}
\newcommand{\eea}{\end{array}}
\newcommand{\bet}{\begin{tabular}}
\newcommand{\eet}{\end{tabular}}
\newcommand{\eqdef}{\,=_{\rm def}\,}
\newcommand{\cL}{\mathcal{L}}
\newcommand{\cN}{{\cal N}}
\newcommand{\restrict}[2]{{#2}_{\left\downarrow{#1}\right.}}
\newcommand{\compat}{{\,{\bowtie}\,}}
\newcommand{\acompat}{{\,\bowtie_{\Alphabet}\;}}
\newcommand{\join}{{\,\sqcup\,}}
\newcommand{\ajoin}{{\,\sqcup_{\Alphabet}\;}}
\newcommand{\compress}[1]{\widetilde{#1}}
\newcommand{\uun}{\mathbf{1}}
\newtheorem{examp}{{\it Example}}
\newtheorem{condit}{Condition}
\begin{document}

\makeRR
\clearpage
\tableofcontents
\clearpage
\section{Introduction}
\label{leguiohu}

\subsection{Context}
\label{skdufcvsk}
Bayesian \emph{graphical modeling} and inference~\cite{dempster1968} expanded since the 1980's, with applications in numerous areas. {Graphical models} were introduced in probability and statistics to allow for a modular description of models~\cite{meent2018introduction}. Graphical models divide into two subfamilies: (directed) Bayesian Networks originally proposed by Judea Pearl \cite{DBLP:journals/ai/Pearl86} and (nondirected) Factor Graphs \cite{kindermann1980markov,Loeliger2004,meent2018introduction}. Probabilistic graphical modeling gave birth to an important sub-community of probabilistic programming~\cite{doi:10.1002/sim.3680,Plummer2003,meent2018introduction}.

\emph{Factor Graphs} allow for the modular specification of unnormalized probabilities, based on a nondirected bipartite graph $(V,F,E)$, where $V{\cup}{F}$ is the set of vertices and $E{\subseteq}V{\times}{F}$ is the set of edges; let $V_f$ be the subset of $v{\in}{V}$ such that $(v,f){\in}{E}$. $V$ is a set of random variables, and, to each \emph{factor} $f{\in}{F}$ is associated an unnormalized probability $p_f(V_f)$ for the tuple $V_f$ of random variables. This model defines the unnormalized probability distribution of $V$ as the product $P(V)=\prod_{f\in{F}}p_f(V_f)$---logarithms of probabilities are often considered instead and added, under the name of {potential}~\cite{kindermann1980markov}.

A \emph{Bayesian Network} is a tuple $(V,E,p)$, where: $V$ is a set of random variables; $(V,E)$ is an acyclic directed graph (for each $v{\in}{V}$, we let $\parents(v)$ denote its parents); $p(v | \parents(v))$ specifies, for each valuation of the parents $\parents(v)$, a conditional distribution for the variable $v$. The semantics of a Bayesian Network is that the joint distribution of $V$ factorizes as the product $P(V)=\prod_{v{\in}{V}}p(v | \parents(v))$. Bayesian Networks are thus causal graphical probabilistic models and the specification of causality comes extra to the specification of the underlying probability distribution, in the form of directed branches of the graph. As pointed out by Judea Pearl~\cite{Pearl09}, causality is an extra information relating random variables, not inferrable from their joint probability distribution.
\emph{Message passing} algorithms are a key tool for Factor Graphs, allowing to map a subclass of them to Bayesian networks, see~\cite{Loeliger2004} and Section 3.6 of~\cite{meent2018introduction}.
Through the union of underlying graphs and the compositional nature of probabilities specified by graphical probabilistic models, both frameworks of Bayesian Networks and Factor Graphs are naturally equipped with some kind of parallel composition.
All these features explain why graphical models are considered as an intermediate format targeted by some probabilistic programming tools, see, e.g.,~\cite{doi:10.1002/sim.3680,Plummer2003}, and~\cite{meent2018introduction}, chapter 3. 

\smallskip

One important issue is \emph{the combination of probabilistic and nondeterministic} behaviors. In statistical decision procedures, deciding whether the distribution of an observed sample belongs to subset $\cP_1$ or $\cP_2$ of probability distributions (where these subsets have empty intersection), exhibits nondeterminism in that the actual distribution is freely chosen within one of the two alternatives; this blending of probabilistic and nondeterministic behaviors is addressed in this case by using {generalized likelihood ratio (GLR) tests}~\cite{lehmann2005testing}. Also, the mixing of probabilistic behaviors and nondeterminism  is central in probabilistic programming~\cite{DBLP:series/mcs/McIverM05,DBLP:journals/toplas/ChatterjeeFNH18,DBLP:conf/isola/McIverM20}.
How to blend probabilistic and nondeterministic behaviors in general is, therefore, an important issue. 
 
\smallskip

\emph{Probabilistic Programming}~\cite{doi:10.1002/sim.3680,Plummer2003,JSSv076i01,DBLP:journals/toplas/OlmedoGJKKM18,DBLP:conf/birthday/KatoenGJKO15,meent2018introduction,rppl_pldi20} provides support for specifying statistical models with modularity and libraries for performing inference. 
Some probabilistic languages generate \emph{likelihood functions}~\cite{Plummer2003,doi:10.1002/sim.3680,JSSv076i01} for use by inference algorithms, whereas other generate \emph{sampling} procedures~\cite{DBLP:journals/corr/abs-1206-3255,dippl}. Recently, G. Baudart et al.~\cite{rppl_pldi20} proposed \emph{reactive probabilistic programming} of dynamical systems as a conservative extension of {synchronous languages}~\cite{DBLP:journals/pieee/BenvenisteCEHGS03}, by enhancing the Hybrid Systems modeling language \href{http://zelus.di.ens.fr/}{Zelus}~\cite{DBLP:journals/pieee/BenvenisteBCCPP18} with probabilities. Objectives of Probabilistic Programming can be categorized as follows:
\begin{enumerate}
	\item \label{kjisgvcjyt} \emph{Modeling paradigm.} Blending probability and nondeterminism, composing, comparing (equivalence), are the main issues. 
	
	\item \label{jsefdhfgsvjh} \emph{Model for proof systems.} Calculi and their decidability and complexity are central issues in this objective.
	
	\item \label{skdufvbnfgklj} \emph{Support for statistical inference, decision, and learning.} Key pillars are all limit theorems of probability and statistics (law of large numbers, central limit theorem, large deviations). These theorems rely on stationarity (or time invariance) of the underlying probabilistic model. For models with no dynamics, independent identically distributed (i.i.d.) sets of data can be sampled from the model. For models with dynamics (e.g., Markov chains), runs can be observed and used to infer model characteristics. One central difficulty in this objective is the blending of nondeterminism with probabilities, as it generally breaks the stationarity of the underlying statistical model. For example, if two different statistical models are combined with a nondeterministic choice (or an if-then-else statement with nondeterministic guard), then stationarity of the overall model no longer holds. The solution is to recover stationary models by separating the two alternatives and not mixing them. This quickly becomes cumbersome if several such constructions are used in a model. This difficult and central issue is extensively discussed in~\cite{DBLP:conf/fsttcs/HurNRS15}, where it is shown that some major probabilistic programming tools may not correctly implement Monte-Carlo based learning algorithms such as Metropolis-Hastings. 
\end{enumerate}

\subsection{Contribution}
\label{fdkvjfhdvbkjfdh}
To illustrate our purpose, we begin with a toy example. 
Throughout this paper, all variables possess finite or denumerable type---this restriction is motivated by technical reasons explained later. Hence, types will not be declared when presenting examples. In ``if-then-else'' statements, it is understood that the control variable is Boolean.
Consider the following discrete time dynamical system (universal quantifier $\forall n$ is implicit):
\beq
S_1&:&\left\{\bea{l}
\prog{\keyw{observe}}\;u \\
x_0 = c_x \\
x_n = \varphi(u_n,x_{n-1}) \\
y_n = \mbox{if } f_n \mbox{ then } \psi(x_n,v_n) \mbox{ else } x_n
\eea\right.
\label{ujsydtcfjhy}
\eeq
Model (\ref{ujsydtcfjhy}) involves \emph{signals,} i.e., sequences, indexed by the natural integer $n$, of variables having the same type: for instance, signal $x_n$ denotes the sequence $\{x_k\mid k{\in}\bN\}$.
In (\ref{ujsydtcfjhy}), $f_n$ is a boolean signal indicating the occurrence of a failure and $v_n$ is a noise, i.e., some kind of disturbance. When a failure occurs, signal $x_n$ gets corrupted by noise $v_n$, which is captured by the (unspecified) function $\psi$; otherwise, $y_n=x_n$. Since model (\ref{ujsydtcfjhy}) involves the delayed signal $x_{n-1}$, an initial condition for this signal is specified by $x_0 = c_x$, where $c_x$ is some constant of same type as signal $x_n$. Model (\ref{ujsydtcfjhy}) looks like a  dynamical system as usual, with inputs $u,f$, and $v$, state $x$, and output $y$. 

We are interested in a different interpretation, however, by which model (\ref{ujsydtcfjhy}) specifies what is observed/unobserved: $u_n$ is observed at every instant (as stated in the first line), whereas other signals are unobserved (this is the default case). From this perspective, signals $f,v,x$, and $y$ are unknown and otherwise subject to (\ref{ujsydtcfjhy}). Thus, model (\ref{ujsydtcfjhy}) involves \emph{nondeterminism}. 

Next, consider the following stochastic model for noise $v_n$:
\beq
S_2&:&v_n\; \keyw{\sim} \;\mu 
\label{sdujcytsdfjy}
\eeq
where $v_n \,\keyw{\sim}\, \mu$ means that variable $v_n$ has distribution $\mu$ at each instant $n$. As an important convention of our modeling framework, statement $v_n \,\keyw{\sim}\, \mu$, taken in isolation, also means that the random sequence $v_n$ is \emph{independent, identically distributed (i.i.d.)}. No signal is observed in this model (capturing that we are considering an unobserved disturbance).

Having the two models $S_1$ and $S_2$, we like to \emph{compose} them, thus considering $S_1\|S_2$, defined as the conjunction of the two systems of equations (\ref{ujsydtcfjhy},\ref{sdujcytsdfjy}). $S_1\|S_2$ combines stochastic behavior with nondeterminism (since failure signal $f_n$ is still unknown and unobverved). As a consequence of this composition, the nature of signal $y_n$ may or may not involve randomness, due to the if-then-else statement occurring in $S_1$.

Consider next the following $S_3$ model specifying the behavior of the failure signal $f$:
\beq
S_3&:&\left\{\bea{l}
f_0=\mfalse \\
f_n = (\mathit{rf}_n \mbox{ or } f_{n-1}) \mbox{ and not } \mathit{bk}_{n} \\
\mathit{rf}_n \;\keyw{\sim}\; \mathrm{Bernoulli}(10^{-6})
\eea\right.
\label{erliguhoiu}
\eeq
In this model, ``root failure'' signal $\mathit{rf}$ is modeled as a Bernoulli sequence, i.e., $P(\mathit{rf}=\mtrue)=10^{-6}$; boolean signal $\mathit{bk}$ indicates that a ``backup sensor'' is provided. Thus, a failure is raised ($f=\mtrue$) if a root failure occurs, and it remains subsequently raised, until a backup sensor is provided. In $S_3$, no signal is observed, thus $\mathit{bk}$ is nondeterministic. Model (\ref{erliguhoiu}) is mixed probabilistic/nondeterministic. If $\mathit{bk}$ was specified as being observed, this model would become probabilistic in that, once the value of random signal $\mathit{rf}_n$ is known, the actual value of $f_n$ is determined. 

The next step is to further compose $S_1\|{S_2}$ with $S_3$. By convention of the parallel composition, as a consequence of composing the two statements ``$v_n\; \keyw{\sim} \;\mu$'' and ``$\mathit{rf}_n \;\keyw{\sim}\; \mathrm{Bernoulli}(10^{-6})$'', \emph{the two random sequences $v_n$ and $\mathit{rf}_n$ are mutually independent.}

As a safety issue, we could be interested in evaluating the risk of missing an alarm raised by having signal $y$ exceeding some threshold: an alarm is raised when $y_n> y_{\max}$. This alarm triggers some reconfiguration, not shown here. This reconfiguration action was designed to act under the hypothesis that the system is fault-free, i.e., $y_n=x_n$ always holds. Consider the following question: what is the ``risk'' that an alarm is missed when it should have occured, due to a fault? More precisely, 
\beq
\mbox{what is the risk that ``$x_n>y_{\max}$ and $y_n\leq y_{\max}$'' occurs?} \label{sldkfjhsdbkjh}
\eeq
So far we did not define what we mean by ``risk''. It cannot be measured by a probability, since $S_1\|S_2\|S_3$ mixes probability with nondeterminism. By ``risk'' we mean a pessemistic evaluation of this probability, with nondeterminism acting as an adversary.

Suppose, next, that we want to specify that signal $y$ is observed in system $S_1\|S_2\|S_3$. To this end, we consider the system
\beq
S_4&:&\prog{\keyw{observe}}\; y
\label{wseduydtrfduytfd}
\eeq
where no dynamics is otherwise specified. Parallel composition $S_1\|S_2\|S_3\|S_4$ expands as the following model:
\beq
&&\left\{\bea{l}
\prog{\keyw{observe}}\; u,y \\
x_0 = c_x \,,\, v_0=c_v \,,\, f_0 = \mfalse \\
x_n = \varphi(u_n,x_{n-1}) \\
y_n = \mbox{if } f_n \mbox{ then } \psi(x_n,v_n) \mbox{ else } x_n \\
f_n = (\mathit{rf}_n \mbox{ or } f_{n-1}) \mbox{ and not } \mathit{bk}_{n} 
\eea\right.
\label{xystrsadcutsr}
\\
&&\left\{\bea{l}
\mathit{rf}_n \;\keyw{\sim}\; \mathrm{Bernoulli}(10^{-6})  \\
v_n \;\keyw{\sim}\; \mu \\
(\mathit{rf}_n \mbox{ and } v_n \mbox{ are mutually independent i.i.d. signals})
\eea\right.
\label{auxyrafxutyar}
\eeq
The intended semantics of model (\ref{xystrsadcutsr},\ref{auxyrafxutyar}) is as follows: (\ref{auxyrafxutyar}) specifies the \emph{prior distribution} of the pair $(v,\mathit{rf})$ of random signals, where, by convention, the two signals are considered independent. (\ref{xystrsadcutsr}) defines a constraint on the tuple of variables involved in the system. The $\prog{\keyw{observe}}$ constraint on the pair $u,y$ states that its joint trajectory is given (through the sensors). As a consequence, the pair $(v,\mathit{rf})$ of random signals is now equipped with the \emph{posterior distribution} resulting from constraint (\ref{xystrsadcutsr}) being enforced. 

\medskip

 If we regard systems $S_1,\dots,S_4$ as boxes with wires (the involved signals), this modeling approach naturally leads to graphical models alike Factor Graphs. Indeed, this way of specifying mixed probabilistic/nondeterministic systems is fully modular: component models can be freely assembled to yield system models. Primitive statements are:
	1) declarations of prior distributions;
	2) declarations of constraints on signals through equations relating them, implicitly resulting in the definition of a posterior distribution; and
	3) a parallel composition in which composing prior distributions considers them independent and systems of equations are composed as usual.
Closest to this approach are~\cite{rppl_pldi20} (born from synchronous programming~\cite{DBLP:journals/pieee/BenvenisteCEHGS03}) or~\cite{DBLP:conf/popl/GuptaJP99} (born from concurrent constraint  programming).

\medskip

As a semantic domain for the above modeling approach, we propose a framework subsuming graphical probabilistic modeling and supporting both probabilistic and nondeterministic behaviors. We focus our effort on semantics issues, such as: What is actually the probabilistic model specified? Given seemingly different system specifications, are they equivalent or do they differ? Can one define
a parallel composition of models? 
\emph{With reference to the context of probabilistic programming recalled in Section~$\ref{skdufcvsk}$, our work focuses on objective~$\ref{kjisgvcjyt}$ only, with no consideration of other objectives.} 

One of our contributions is the model of \emph{Mixed Automata}.
Its design relies on a very simple idea. An automaton is specified through its set of transitions $\trans{q}{\action}{q'}{}$, where $q$ and $q'$ are the current and next state, and $\action$ is the action triggering the transition. Upgrading this model to Probabilistic Automata~\cite{DBLP:conf/concur/LynchSV03} consists in upgrading transitions to $\trans{q}{\action}{\proba'}{}$, where $\proba'$ is the next probabilistic state (a probability distribution over the set $Q$ of states), from which the next state is derived by probabilistic sampling $\produces{\proba'}{q'}$. The final upgrade to Mixed Automata is by upgrading such transitions to $\trans{q}{\action}{\system'}{}$, where $\system'$ is now a \emph{Mixed System}  (or Mixed Probabilistic/Nondeterministic System in its extended name), from which the next state is derived by sampling $\produces{\system'}{q'}$.

Initially proposed in~\cite{BenvenisteLFG95}, Mixed Systems are pairs consisting of a private probability space and a visible state space, related through a relation. This pair specifies a posterior distribution, namely the conditional distribution given that the relation between states and random outcomes is satisfied. Visible states are exposed for possible interaction with other Mixed Systems. This allows to equip Mixed Systems with a parallel composition, on top of which a parallel composition for Mixed Automata can be defined.
We show that Mixed Systems naturally inherit a notion of \emph{graphical structure}, which subsumes both Bayesian Networks and Factor Graphs. 
Mixed Systems offer the counterpart of angelic/demonic nondeterminism~\cite{DBLP:journals/toplas/ChatterjeeFNH18} and hard/soft conditioning~\cite{DBLP:conf/lics/StatonYWHK16,DBLP:conf/icml/TolpinZRY21}, which are important notions in probabilistic programming.

Mixed Automata, defined on top of Mixed Systems, naturally inherit their associated graphical structure and parallel composition.
Mixed Automata are equipped with all the fundamental modular notions for Automata, namely the notions of (bi)si\-mu\-la\-tion relation and parallel composition.
In this paper, we show in addition that Mixed Automata subsume, in part, Segala's  Probabilistic Automata (PA)~\cite{SokolovaV04} and their variants. More precisely, we exhibit mappings from different PA models to Mixed Automata, preserving simulation equivalence. The parallel compositions, however, are most of the time different---we claim ours to be more useful than PA parallel composition when the two differ. In addition, in contrast to PA, our model of Mixed Automata naturally captures the notion of posterior (conditional) distribution and offers a notion of graphical model.

\smallskip

The paper is organized as follows. Mixed Systems are introduced and further studied in Section~\ref{uwtrdytrd}. Mixed Automata are introduced and studied in Section~\ref{roe87wyhfgbsehlrigu}, and then compared to Probabilistic Automata in Section~\ref{45oiuepiruh}. Related work is discussed more broadly in Section~\ref{fdkjvblk}. Missing proofs are deferred to appendices. Focused bibliographical discussions are presented following each important notion. The reason is that the same mathematical notion occurs in different communities, under different names; so we felt it useful to relate them. Finally, Appendix~\ref{reoiughofieurf} presents hints for extending our approach to continuous probability distributions.

\section{Mixed Probabilistic/Nondeterministic Systems}
\label{uwtrdytrd}
\label{eriuyoiu}
$\Allvars$ shall denote an underlying set of \emph{variables,} of finite domain. Elements of $\Allvars$ are denoted by lower case letters $x,y,z\dots$, and finite subsets of $\Allvars$ are denoted by upper case letters $X,Y,Z$. We use set theoretic operations on sets of variables. Whenever convenient, we regard $X,Y,Z$ as tuples. The domain of $x$ is denoted by $Q_x$ and the domain of $X$ is $Q_X\eqdef\prod_{x\in{X}}Q_x$, we call it the \emph{state space}; the generic element of $Q_X$ is called a \emph{state} and is denoted by $q_X$ or simply $q$.

The pair
$(\Omega,\proba)$ shall denote a discrete probability space, i.e., $\proba$ is a countably additive function, from $2^\Omega$ to $[0,1]$, such that $\proba(\emptyset)=0$ and $\proba(\Omega)=1$. We simply write $\proba(\omega)$ instead of $\proba(\{\omega\})$. The \emph{support} of $\proba$ is the set $\supp(\proba)\eqdef\{\omega\mid\proba(\omega)>0\}$.
 For a subset $W{\subseteq}\Omega$ such that $\proba(W){>}0$, the \emph{conditional probability} $\proba(V{\mid}{W})\eqdef\frac{\proba(V\cap{W})}{\proba(W)}$ is well defined.

Finally, we will consider \emph{relations} (or \emph{constraints}) $\cons\subseteq\Omega\times{Q}$. Relations are composed by intersection.

\myparagraph{Disclaimer: in this paper, we consider only discrete probability spaces} This restriction is technically important, since it allows for a straightforward definition of conditional probabilities, and the notion of support of a probability is easily defined. For the general case, the notion of {conditional expectation} is always defined~\cite{DellacherieMeyer1978}, whereas conditional distributions require additional topological assumptions for their existence, and so does the notion of {support.} To keep our work  simpler, we decided not to cover those extensions. Appendix~\ref{reoiughofieurf} presents hints for extending our approach to continuous probability distributions. 

%
\subsection{Mixed Systems, parallel composition, and Factor Graphs}
\label{lreiuhoeriuhli}
In this section we introduce Mixed Systems and show that they extend and subsume in a unified framework: nondeterminism, probability spaces, and factor graphs. This section is inspired in part by~\cite{BenvenisteLFG95}.

\begin{figure}[!h]
	\centerline{\scalebox{0.8}{\input{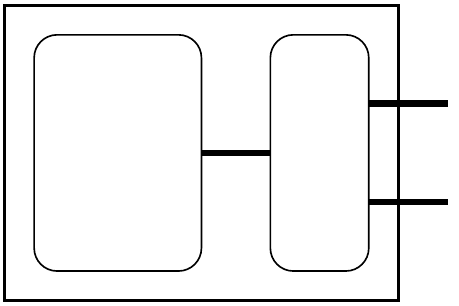_t}}}
	\caption{Intuitive picturing of a Mixed System having two variables $x$ and $y$.}
	\label{iweuftygi}
\end{figure}
The intuition is illustrated on Figure~\ref{iweuftygi}, which will guide us for the different notions attached to Mixed Systems. A Mixed System will be a pair, consisting of a probability space $(\Omega,\proba)$ and a state space $Q$ collecting the configurations of a tuple of state variables (here: $x$ and $y$), related by a relation $\cons$. The probability space is ``private'', in that it is not directly exposed to any interaction with the environment. Interactions with the environment only occur through the state variables, thus seen as ``visible''. This distinction private/visible is shown on Figure~\ref{iweuftygi} by the outgoing pins $x,y$, which contrast with the absence of outgoing pin for the probabilistic box.

We are interested in understanding how Mixed Systems are ``executed'' (we call this the \emph{sampling}), and how state properties---which are not by themselves random---can still get some kind of probabilistic evaluation.
\begin{definition}[Mixed System, definition and semantics]
  \label{slergiuhpiu} \
		\begin{enumerate}
		
		\item \label{lrfgblsdlikf} A \emph{Mixed System} (or \emph{system} for short) is a tuple $\system=(\Omega,\proba,X,\cons)$, where:	$(\Omega,\proba)$ is a probability space;	$X$ is a finite set of variables with domain  $\bea{c}Q=\prod_{x\in{X}}Q_x\eea$; and $\cons\subseteq{\Omega{\times}{Q}}$ is a relation. In the sequel, we write $$\omega\cons{q}$$ to mean $(\omega,q){\in}\cons$. 
		
		\item \label{sdxhtrsd}
$\system$ is called \emph{consistent} if $\proba(\consist{\Omega})>0$, where
$\consist{\Omega}\eqdef\{\omega{\in}\Omega\mid\exists{q}:\omega\cons{q}\}$.
If $\system$ is consistent, its \emph{sampling} is well defined and consists in:
\begin{enumerate}
	\item sampling $\omega{\in}\Omega$ according to conditional probability $\consist{\proba}$, where:
\beq\bea{rcl}
\forall A\subseteq\Omega&:&
\consist{\proba}(A)\eqdef\proba(A\mid\consist{\Omega})=\frac{
\proba(A\cap\consist{\Omega})
}{
\proba(\consist{\Omega})
}\,,
\eea
\label{leriughelorui}
\eeq
\item and, then, nondeterministically selecting $q{\in}{Q}$ such that $\omega\cons{q}$.
\end{enumerate}
 This two-step procedure is denoted by $\produces{\system}{q}$.

\item \label{ausytdfxuajy}
If $\system$ is consistent, its \emph{probabilistic semantics} is defined as the pair  $\outproba,\inproba:2^Q\ra[0,1]$, where, for
any state property $A\subseteq{Q}$:
\beq
\outproba(A)\eqdef\consist{\proba}({\Omega_{{\exists}A}}) &\mbox{where}&
{\Omega_{{\exists}A}}\eqdef\{\omega\in\Omega\mid\exists q\in{A}:\omega\cons{q}\}\,,
\label{jdywetfdewjty} \\
\inproba(A)\eqdef\consist{\proba}({\Omega_{{\forall}\!A}}) &\mbox{where}&
\Omega_{{\forall}\!A}\eqdef\{\omega\in\Omega\mid\forall q\in{A}:\omega\cons{q}\}\,,
\label{skdjfhkj}
\eeq
The following \emph{generalized likelihood} $\likelihood:2^Q\ra[0,1]$ is also of interest:
\beq\bea{c}
\likelihood(A) \eqdef \max_{\omega\in{\Omega_{{\exists}A}}}\;\consist{\proba}(\omega)\,.
\eea
\label{ewsfdtyfgujy}
\eeq
\end{enumerate}
In the sequel, we shall denote by $\Systems(X)$ the class of all (possibly inconsistent) Mixed Systems having $X$ as their set of variables.\eproof
\end{definition}
$\outproba$ defined by formula (\ref{jdywetfdewjty}) is not a probability on $Q$, but only an \emph{outer probability}\footnote{sometimes called also \emph{exterior} or \emph{upper} probability.}, i.e., a function $\outproba:2^Q\ra[0,1]$ such that $\outproba(\emptyset)=0$, $\outproba(Q)=1$, and $\outproba$ is {sub-additive}, meaning that it satisfies
\[\bea{c}\forall A, (A_n)_{n\in\bN} \mbox{ subsets of }Q : A\subseteq\bigcup_{n\in\bN}A_n \implies \outproba(A)\leq\sum_{n\in\bN}\outproba(A_n)\,.
\eea\]
 Note that (\ref{ewsfdtyfgujy}) resembles (\ref{jdywetfdewjty}) if we rewrite the latter as $\outproba(A)=\sum_{\omega\in{\Omega_{{\exists}A}}}\consist{\proba}(\omega)$. The same comments hold, mutatis mutandis, regarding the \emph{inner probability} $\inproba$, which is super-additive. Note that, 
\beq
\mbox{if $A=\{q\}$ is a singleton, then $\outproba(A)=\inproba(A)$.} \label{fjhksjhfj}
\eeq

\begin{example}\rm [Specializing to pure nondeterministic systems]
\label{wuyetdfwueity} \rm 
A pure nondeterministic system is specified as a subset $\widehat{\cons}\subseteq{Q}$ of the state space. To reformulate it as a Mixed System, simply take $(\Omega,\proba)$ trivial, i.e., $\Omega=\{\omega\}$, a singleton, equipped with the trivial probability such that $\proba(\omega)=1$, and define $\omega\cons{q}$ iff $q\in\widehat{\cons}$.\eproof
\end{example}
\begin{example}\rm [Specializing to pure probabilistic systems]
\label{wuydtfuty} \rm A pure probabilistic system is specified as a pair $(\Omega,\proba)$. To reformulate as a Mixed System, take $Q=\Omega$, and let $\cons$ be the diagonal of $\Omega\times{Q}$; finally, let $x$ be the variable with domain $Q$.\eproof
\end{example}
\begin{discussion}[blending nondeterminism and probability] \rm
	\label{isedkufghsvjuy} To capture the blending of nondeterminism and probability, outer probabilities are directly used in the Dempster-Shafer theory of evidence~\cite{dempster1967,dempster1968,shafer1976mathematical}.\footnote{Outer and inner probabilities were called upper and lower in ~\cite{dempster1967}.} Outer probabilities do not support key limit theorems for use in statistics, such as the law of large numbers, the central limit theorem, and more. Hence, whereas the theory of evidence comes with reasoning capabilities, it does not directly support learning or estimation. 
	
	
	In formal methods for probabilistic systems (in the context of imperative programming), the blending of probability and nondeterminism was addressed by a number of authors, see, e.g.,~\cite{DBLP:journals/ijfcs/HartogV02,DBLP:series/mcs/McIverM05,DBLP:conf/lpar/BartheEGHSS15,DBLP:conf/isola/McIverM20,DBLP:conf/birthday/KatoenGJKO15,DBLP:journals/toplas/OlmedoGJKKM18,DBLP:journals/toplas/ChatterjeeFNH18,DBLP:journals/entcs/WangHR19}. Nondeterministic choice between alternatives is considered in~\cite{DBLP:conf/isola/McIverM20} and written $P\,{\sqcap}\,P'$, whereas probabilistic choice is specified as $P\,{_a\oplus}\,P'$ ($P$ is selected with probability $a$ and $P'$ with probability $1{-}a$) or $P\,{_a\oplus_b}\,P'$ ($P$ is selected with probability at least $a$ and $P'$ with probability at least $b$). The evaluation of formulas must specify how nondeterminism interplays with probabilities. A comprehensive approach was proposed in~\cite{DBLP:journals/toplas/ChatterjeeFNH18}, where \emph{demonic} and \emph{angelic} nondeterminisms are seen as adversarial and beneficial, respectively. These notions mirror the outer and inner probabilities used in Dempster theory. Unfortunately, outer and inner probabilities do not bring limit theorems of probability theory (law of large numbers, etc.), which are the core of machine learning.
	
Through formulas (\ref{jdywetfdewjty},\ref{skdjfhkj}) in Definition~\ref{slergiuhpiu},	the probabilistic semantics of Mixed Systems is defined as the associated outer and inner probabilities. Hence, Mixed Systems offer the calculus of the theory of evidence, and mirror the {demonic} and {angelic} types of nondeterminism. 
	On the other hand, since classical probability spaces are first class citizens of the model of Mixed Systems, this model also preserves the apparatus needed for machine learning. In Appendix~\ref{kuyfgksuy}, we develop a more detailed comparison of the semantics of Mixed Systems versus imperative probabilistic programming with {demonic} and {angelic} nondeterminism, following~\cite{DBLP:journals/toplas/ChatterjeeFNH18}.


Finally, the generalized likelihood of f\rm ormula (\ref{ewsfdtyfgujy}) is the basis for inference, estimation, or machine learning, when multiple hypotheses or nuisance parameters are considered~\cite{lehmann2005testing}---we are not aware of any use of a mirror notion where ``min'' would be substituted for ``max''.\eproof
\end{discussion}
\begin{example}\rm [outer probabilities]
	\label{kdugavkhvjhg} \rm Consider model $S_1\| S_2\| S_3$ of Section~\ref{fdkvjfhdvbkjfdh}. Pick an instant $n$ and let $S\eqdef S(n,x_{n-1},f_{n-1})$ be the Mixed System defined by (\ref{ujsydtcfjhy},\ref{sdujcytsdfjy},\ref{erliguhoiu}) for instant $n$ and given values for $x_{n-1},f_{n-1}$. With reference to (\ref{sldkfjhsdbkjh}), we wish to evaluate the probability that $x_n>y_{\max}$ and $y_n\leq y_{\max}$ occurs under adversarial nondeterminism. Denoting by $Q_v$ the domain of $v$ and by $\Bool$ the Boolean domain, the underlying probability space of $S$ is $(\Omega,\proba)$, where $\Omega=Q_v{\times}\Bool$ and $\proba=\mu{\times}\beta$, where $\beta$ is Bernoulli$(10^{-6})$. Domain $Q$ for the variables of $S$ is $Q=Q_x{\times}Q_y{\times}Q_u{\times}Q_v{\times}\Bool{\times}\Bool$, and relation $\cons$ is defined by the nonprobabilistic equations of $S$, i.e., (\ref{ujsydtcfjhy},\ref{erliguhoiu}) in which we discard the statement $\mathit{rf}_n\sim\beta$. Finally, let $\cons(u_n)$ denote the relation $\cons$ in which the value of $u_n$ is given ($u$ is observed).	
	Then 
	\beq\bea{l}
	\outproba(x_n{>}y_{\max}\mbox{ and }y_n{\leq} y_{\max})
	=
	\consist{\proba}(W)\;,\;\mbox{ where}
	\\
W=	\left\{
	(v,\mathit{rf})
	\,\left|\,
\exists x_n,y_n,f_n,\mathit{bk}_n: \bea{l}
	x_n{>}y_{\max}\mbox{ and }y_n{\leq} y_{\max} ~\mbox{, and} \\ (x_n,y_n,f_n,\mathit{bk}_n,v,\mathit{rf}) \in \cons(u_n)
\eea	\right.\right\}
\eea
\label{ksdjfhsvdcbky}
	\eeq
Inspecting (\ref{ujsydtcfjhy},\ref{erliguhoiu}) shows that the condition defining set $W$ rewrites as
\[
x_n{>}y_{\max}\mbox{ and }\psi(x_n,v){\leq} y_{\max}\mbox{ and }f_n{=}\mtrue \mbox{ and }(x_n,y_n,f_n,\mathit{bk}_n,v,\mathit{rf}) \in \cons(u_n)\,.
\]
First, if $\varphi(u_n,x_{n-1}) \leq y_{\max}$ holds, then $W=\emptyset$. We thus assume in the sequel $\varphi(u_n,x_{n-1})>y_{\max}$. Thus we need to evaluate with respect to $\outproba$ the predicate 
\[
Z \;\eqdef~ \psi(x_n,v){\leq} y_{\max}\mbox{ and }f_n{=}\mtrue \mbox{ and }(x_n,y_n,f_n,\mathit{bk}_n,v,\mathit{rf}) \in \cons(u_n)\,.
\]
Condition $f_n{=}\mtrue$ is equivalent to the conjunction of the following two conditions: 1) $\mathit{bk}_n{=}\mfalse$ (backup sensor is not available), 2) $f_{n-1}{=}\mtrue$ or $\mathit{rf}_n{=}\mtrue$. Recall that the value of $f_{n-1}$ is given. We thus distinguish the following two cases:
\begin{enumerate}
\item \label{ksdjhvsbkmy} $f_{n-1}{=}\mtrue$: then, $f_n{=}\mtrue$ whatever the value of $\mathit{bk}_n$ is, and, using (\ref{ksdjfhsvdcbky}):
	\[
	\outproba(x_n{>}y_{\max}\mbox{ and }y_n{\leq} y_{\max}) =
	\mu\bigl\{v\mid \psi(\varphi(u_n,x_{n-1}),v)\leq y_{\max}\bigr\}\,.
	\]
	\item \label{ikwsuyfgvku} $f_{n-1}{=}\mfalse$: then, $f_n{=}\mtrue$ if and only if $\mathit{rf}_n{=}\true$ and $\mathit{bk}_n{=}\mfalse$. Thus, chosing $\mathit{bk}_n{=}\mfalse$ ensures that: $\mathit{rf}_n{=}\true$ and $\psi(\varphi(u_n,x_{n-1}),v)\leq y_{\max}$ together yield $y_n{\leq} y_{\max}$. Alternatively, $\mathit{bk}_n{=}\mtrue$ prevents the condition $y_n{\leq} y_{\max}$ from occurring. By definition of the outer probability (\ref{jdywetfdewjty}), we finally get, using (\ref{ksdjfhsvdcbky}):
	\[
	\outproba(x_n{>}y_{\max}\mbox{ and }y_n{\leq} y_{\max})
	=
	\beta(\mathit{rf}{=}\mtrue)\times
	\mu\bigl\{v\mid \psi(\varphi(u_n,x_{n-1}),v)\leq y_{\max}\bigr\}\,,
	\]
	which corresponds to the probabilistic evaluation of the predicate ``$x_n{>}y_{\max}$ and $y_n{\leq} y_{\max}$'' if the nondeterministic alternative $\mathit{bk}_n{=}\mfalse/\mtrue$ is interpreted as demonic~\cite{DBLP:journals/toplas/ChatterjeeFNH18}.\eproof
\end{enumerate}
\end{example}
\begin{discussion}[Conditioning and its variations] \rm
	\label{sdkfugkuy} 
	Conditioning is generally not considered in the field of probabilistic automata. It is, however, central in probabilistic programming, see, e.g.,~\cite{DBLP:journals/toplas/OlmedoGJKKM18,DBLP:journals/pacmpl/DahlqvistK20,DBLP:conf/icml/TolpinZRY21} for studies in which conditioning is the main subject. The $\prog{\keyw{observe}}$ primitive, pervasive in all tools, is used to specify posterior distributions given constraints (as we do in Definition~\ref{slergiuhpiu}). The litterature on probabilistic programming distinguishes between \emph{hard} (also called \emph{deterministic}) and \emph{soft} (also called \emph{stochastic}) conditioning~\cite{DBLP:conf/lics/StatonYWHK16,DBLP:conf/icml/TolpinZRY21}. In the basics of probability theory, however, the only notion is that of \emph{conditional expectation}~\cite{DellacherieMeyer1978}, from which other notions are derived, e.g., conditional probability, transition probability or stochastic kernel, and disintegration (or regular version of conditional expectation). Deriving such notions is straightforward in our case, since we restrict ourselves to discrete probability spaces. We will discuss this further when extending Bayesian networks to Mixed Systems, in Section~\ref{selrujgnhelru}.
\end{discussion}
\begin{discussion}[consistency] \rm
	\label{sikufhbikuy} 		
Inconsistency formalizes self-contradiction, for Mixed Systems. 
The condition ``$\proba(\consist{\Omega})>0$'' in statement~\ref{sdxhtrsd} of Definition~\ref{slergiuhpiu} means that $\consist{\Omega}$ has non-empty intersection with the support of $\proba$, defined as the set of $\omega$'s of positive probability: $\proba({\omega})>0$. This simple definition for the support, which is only valid for discrete probabilities, allows us to propose a simple definition for the notion of consistency. When continuous probability spaces are considered (like the Gaussian), the above definition for the support no longer holds. The right definition relies on topological properties. As a consequence, our elementary definition of consistency would no longer apply. This is next illustrated on our running example.
\end{discussion}
\begin{example}\rm [consistency] \rm
	\label{skujfycsgdkujy} 
Consider model (\ref{xystrsadcutsr},\ref{auxyrafxutyar}). Statement~\ref{sdxhtrsd} of Definition~\ref{slergiuhpiu} defines consistency as the existence of a state $q$ in relation through $
\cons$ with an $\omega$ belonging to the support of $\proba$, which is fairly simple. Suppose, for a while, that $u_n,x_n,y_n,v_n$ possess real domain, $\mu(dv)=\chi(v)dv$, where $dv$ denotes the Lebesgue measure, density $\chi$ is continuous and everywhere positive, and function $v\mapsto\psi(x,v)$ is bijective and bicontinuous for every fixed $x$. Then, fixing the value of $y_n$, for a given pair $(u_n,x_{n-1})$, will fix the value of $v_n$ if $f_n{=}\mtrue$ in the equation defining $y_n$. With reference to Example\,\ref{kdugavkhvjhg}, the only difference is that a parallel composition with the statement $\prog{\keyw{observe}}\;y$ was added. So, it still makes sense to consider the two cases\,\ref{ksdjhvsbkmy} and\,\ref{ikwsuyfgvku} of Example\,\ref{kdugavkhvjhg}. In case\,\ref{ksdjhvsbkmy}, we get
$W= \left\{
(\mathit{rf},v)
\mid
\psi(\varphi(u_n,x_{n-1}),v){=}y_n
\right\}$, whence $(\beta{\times}\mu)(W){=}0$. Deducing inconsistency would be nonsense, however, since the support of $\mu$ is $\bR$. This illustrates that our pedestrian definition of consistency no longer works if real variables and  distributions having densities with respect to Lebesgue measure are considered.
\end{example}

\subsubsection{Equivalence}
In this section, we study equivalence. To this end, we introduce the following operation of compression, on top of which equivalence is defined:
\begin{definition}[compression]
  \label{lighlalegfr}
  For $\system=(\Omega,\proba,X,\cons)$ a Mixed System, we define
  the following equivalence relation on $\Omega$, i.e., $\sim\;\subseteq\Omega{\times}\Omega$ is such that:
	\beq
		(\omega,\omega')\in\;\sim&\mbox{ if and only if: }&
	\forall{q}\in{Q}: \omega\cons{q}\Leftrightarrow\omega'\,\cons{q}\,.
  \label{eoguheogihio}
  \eeq
	As usual, we write $\omega \sim   \omega'$ to mean $(\omega,\omega')\in\;\sim$.
	The \emph{compression} of $\system$, denoted by
$\compress{\system}=(\compress{\Omega},\compress{\proba},X,\compress{\cons})$,
is then defined as follows:
\begin{itemize}
	\item $\compress{\Omega}$ is the quotient $\Omega{/}{\sim}$, which
elements are written $\compress{\omega}$;
\item
$\compress{\omega}\compress{\cons}q$ iff ${\omega}{\cons}q$
for $\omega\in\compress{\omega}$; and
\item
\mbox{$\compress{\proba}(\compress{\omega})=\sum_{\omega\in\compress{\omega}}\proba(\omega)$}.
\end{itemize}
Say that $\system$ is \emph{compressed} if it coincides with its compression.\eproof
\end{definition}
Distinguishing $\omega$ and $\omega'$ is impossible if
$\omega{\sim}\omega'$.
Equivalence is defined on top of compression (see item~\ref{lrfgblsdlikf} of Definition~\ref{slergiuhpiu} for notation $\cons_\proba$):
%
\begin{definition}[equivalence]
  \label{oeruihytersd}
  Two compressed mixed systems $\system$ and $\system'$ are \emph{equivalent} if they possess identical sets of variables $X{=}X'$, and  there exists a bijective map $\varphi:\cons_\proba\mapsto\cons'_{\proba'}$ satisfying the following conditions for every pair $(\omega,q)\in\Omega\times{Q}$, where $(\omega',q')\eqdef\varphi(\omega,q)$:
	\beq
	\omega\,\cons_{\proba}\,q \Leftrightarrow 
	\omega'\,\cons'_{\proba'}\,{q'} ~~;~~
	\proba'(\omega')=\proba(\omega) ~~;~~ q'=q\,.
	\label{dhsydjtdjyt}
	\eeq
	$\system$ and $\system'$ are \emph{equivalent}, written $\system{\equiv}\system'$, if their compressions
	are equivalent.\eproof
\end{definition}
The following result expresses that mixed system equivalence preserves probabilistic semantics:
\begin{lemma}
	\label{kerufgybob} 
Any two equivalent mixed systems, $S_1\equiv{S_2}$, possess identical probabilistic semantics: $\outproba_1=\outproba_2$ and $\inproba_1=\inproba_2$. 
\end{lemma}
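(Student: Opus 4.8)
The plan is to reduce everything to two observations: first, that compression preserves the probabilistic semantics, and second, that the bijection $\varphi$ of Definition~\ref{oeruihytersd} sets up a measure-preserving correspondence between the conditioned probability spaces that carries the sets $\Omega_{\exists A}$ and $\Omega_{\forall A}$ to their primed counterparts.

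\textit{Step 1: reduce to compressed systems.} By Definition~\ref{oeruihytersd}, $S_1\equiv S_2$ means $\compress{S_1}$ and $\compress{S_2}$ are equivalent in the strong sense of~(\ref{dhsydjtdjyt}). So it suffices to show (a) a system and its compression have the same probabilistic semantics, and (b) two strongly-equivalent compressed systems have the same probabilistic semantics. For (a), observe that in $\compress{\system}$ the relation $\compress{\cons}$ is by construction constant on each class $\compress\omega$, and $\compress\proba(\compress\omega)=\sum_{\omega\in\compress\omega}\proba(\omega)$. Hence $\consist{\compress\Omega}$ is exactly the union of classes meeting $\consist\Omega$, so $\compress\proba(\consist{\compress\Omega})=\proba(\consist\Omega)$; consistency is preserved, and $\consist{\compress\proba}$ is the push-forward of $\consist\proba$ under the quotient map. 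The same quotient map sends $\Omega_{\exists A}$ onto $(\compress\Omega)_{\exists A}$ and $\Omega_{\forall A}$ onto $(\compress\Omega)_{\forall A}$ (here one uses that membership $\omega\cons q$ depends only on the class $\compress\omega$), and since $\consist{\compress\proba}$ is the push-forward, the masses agree: $\outproba(A)=\overline{\compress\proba}(A)$ and $\inproba(A)=\underline{\compress\proba}(A)$ for all $A\subseteq Q$.

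\textit{Step 2: strongly-equivalent compressed systems.} Now let $\system,\system'$ be compressed with $X=X'$ and $\varphi:\cons_\proba\to\cons'_{\proba'}$ a bijection as in~(\ref{dhsydjtdjyt}). First note $\cons_\proba$ (the part of $\cons$ living over $\supp(\consist\proba)=\consist\Omega\cap\supp(\proba)$) carries all the probabilistic mass: states/outcomes outside it are $\consist\proba$-null. Fix $A\subseteq Q$. I claim $\varphi$ restricts to a bijection between $\{(\omega,q)\in\cons_\proba\mid q\in A\}$ and $\{(\omega',q')\in\cons'_{\proba'}\mid q'\in A\}$, because $q'=q$ under $\varphi$. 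Projecting to the first coordinate, and using that $\proba'(\omega')=\proba(\omega)$ together with the fact that the first projection of $\cons_\proba$ is $\Omega_{\exists A}\cap\supp(\consist\proba)$ on one side and $\Omega'_{\exists A}\cap\supp(\consist{\proba'})$ on the other, we get $\consist\proba(\Omega_{\exists A})=\consist{\proba'}(\Omega'_{\exists A})$, i.e. $\outproba(A)=\outproba'(A)$. The argument for $\inproba$ is the complementary one: $\omega\in\Omega_{\forall A}$ over the support iff $\omega\notin\Omega_{\exists (Q\setminus A)}$ over the support (modulo the $\omega$'s that relate to nothing, which are already excluded on $\consist\Omega$), so $\inproba(A)=1-\outproba(Q\setminus A)$ and the previous identity finishes it. Combining Steps 1 and 2 gives $\outproba_1=\outproba_2$ and $\inproba_1=\inproba_2$.

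I expect the main obstacle to be bookkeeping rather than a deep difficulty: one must be careful that $\varphi$ is only defined on the ``live'' part $\cons_\proba$ of the relation, so all set manipulations ($\Omega_{\exists A}$, $\Omega_{\forall A}$, complementation within $A\mapsto Q\setminus A$) must be performed modulo $\consist\proba$-null sets, and one must check that intersecting $\Omega_{\exists A}$ with $\supp(\consist\proba)$ does not change $\consist\proba(\Omega_{\exists A})$ — which is immediate from the definition of support. A secondary subtlety is verifying in Step~1 that the quotient map genuinely sends $\Omega_{\forall A}$ onto $(\compress\Omega)_{\forall A}$ and not merely into it; this holds because if every $\omega$ in a class $\compress\omega$ has a partner in $A$ then so does the class, and conversely. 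No essentially new idea beyond Definitions~\ref{slergiuhpiu}, \ref{lighlalegfr}, \ref{oeruihytersd} is needed.
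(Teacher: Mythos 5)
Your overall strategy is the same as the paper's: reduce to the two cases ``$S_2=\compress{S}_1$'' and ``both systems compressed'', and your Step 1 is, almost line for line, the computation the paper actually writes out for the first case (the key point being that $\omega\cons q$ depends only on the class $\compress{\omega}$, so the quotient map carries $\Omega_{\exists A}$ and $\Omega_{\forall A}$ onto their compressed counterparts while $\consist{\compress{\proba}}$ is the push-forward of $\consist{\proba}$). The paper dismisses the compressed-vs-compressed case as immediate; you try to prove it from the literal text of Definition~\ref{oeruihytersd}, and that is where a gap appears.

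The problematic sentence is ``Projecting to the first coordinate \dots we get $\consist{\proba}(\Omega_{\exists A})=\consist{\proba'}(\Omega'_{\exists A})$.'' A weight-preserving bijection on \emph{pairs} with $q'=q$ does not control the measure of the \emph{projections}: the projection $\cons_\proba\to\Omega$ is many-to-one, and (\ref{dhsydjtdjyt}) as written does not force $\varphi$ to send $(\omega,q_1)$ and $(\omega,q_2)$ to pairs with the same first coordinate. Concretely, take $\Omega=\Omega'=\{1,2\}$ with $\proba(1)=\proba(2)=\tfrac{1}{2}$, $Q=\{a,b,c,d\}$, with $1\cons a$, $1\cons b$, $2\cons c$, $2\cons d$ in the first system and $1\cons' a$, $1\cons' d$, $2\cons' b$, $2\cons' c$ in the second (both compressed and consistent); the map $(1,a)\mapsto(1,a)$, $(1,b)\mapsto(2,b)$, $(2,c)\mapsto(2,c)$, $(2,d)\mapsto(1,d)$ satisfies all three conditions of (\ref{dhsydjtdjyt}), yet $\outproba(\{a,b\})=\tfrac{1}{2}$ while $\outproba'(\{a,b\})=1$. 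So your Step 2 only closes if you use the evidently intended (but unstated) reading that $\varphi$ acts coordinate-wise, i.e.\ is induced by a measure-preserving bijection $\Omega\to\Omega'$ paired with the identity on $Q$ --- in which case $\Omega_{\exists A}$ and $\Omega_{\forall A}$ are carried onto their primed versions and the case really is immediate, as the paper asserts. A secondary point: your shortcut $\inproba(A)=1-\outproba(Q\setminus A)$ presumes $\Omega_{\forall A}=\{\omega\in\consist{\Omega}\mid\cons_\omega\subseteq A\}$, whereas formula (\ref{skdjfhkj}) literally defines $\Omega_{\forall A}$ as $\{\omega\mid A\subseteq\cons_\omega\}$ (this is the reading that makes the singleton remark (\ref{fjhksjhfj}) true); under that reading the complementation identity fails, and the safer route is to rerun the direct argument with $\bigcap_{q\in A}$ in place of $\bigcup_{q\in A}$, exactly as the paper does when it says ``a similar proof holds for inner probabilities.''
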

\begin{proof} 
It is enough to prove the lemma in the following two cases: 1) $S_1$ and ${S_2}$ are both compressed, and 2): $S_2=\compress{S}_1$. The result is immediate for case 1), so we focus on case 2).
Let $Q$ be the common domain of $X_1=X_2$ and $A\subseteq{Q}$ be a state property. Then,
\[\bea{rcl}
\outproba_1(A) &=& \consist{\proba}_1\bigl(\{
\omega_1
\mid \exists{q}\in{A}: \omega_1\cons_1{q}
\}\bigr) \\
&=& \consist{\proba}_1\bigl(\{\omega_1
\mid \omega_1\in\compress{\omega}_1 \mbox{ and }
 \exists{q}\in{A}: \compress{\omega}_1\compress{\cons}_1{q}
\}\bigr)\\
&=& \consist{\compress{\proba}}_1\bigl(\{\compress{\omega}_1 \mid
 \exists{q}\in{A}: \compress{\omega}_1\compress{\cons}_1{q}
\}\bigr) = \overline{\compress{\proba}}_1(A) = \outproba_2(A)\,.
\eea\]
A similar proof holds for inner probabilities.
\eproof
\end{proof}

\begin{discussion}[equivalence] \rm
	\label{skduyfvsgkuy} 
	Floyd/Hoare/Dijkstra logic of pre- and postconditions for imperative languages was extended to encompass probability and nondeterminism with pGCL (probabilistic Guarded Command Language)~\cite{DBLP:journals/jcss/Kozen81,DBLP:journals/pacmpl/DahlqvistK20,DBLP:series/mcs/McIverM05,DBLP:conf/birthday/KatoenGJKO15,DBLP:journals/toplas/OlmedoGJKKM18,DBLP:conf/isola/McIverM20}. The semantics is defined as the probability of weakest preconditions under demonic nondeterminism. McIver-Morgan notions of refinement and equivalence follow from this semantics.
	This approach is also used to define equivalence of probabilistic programs, see, e.g., Section 3.1 of~\cite{meent2018introduction}.
	
	As pointed in Discussion~\ref{isedkufghsvjuy}, the above semantics parallels our consideration of outer/inner probabilities in point~\ref{ausytdfxuajy} of Definition~\ref{slergiuhpiu}. Compared to McIver-Morgan notion of equivalence, the notion of equivalence we propose in Definition~\ref{oeruihytersd} is more basic and direct. It implies equivalence of the evaluation of state properties using outer/inner probabilities.\eproof
\end{discussion}

\subsubsection{Marginal}
For $(X,Y)$ a pair of random variables with joint distribution $P(x,y)$, the distribution of $X$ is given by the \emph{marginal} of $P$, namely: $P(x)\eqdef\sum_yP(x,y)$.

We extend this notion to Mixed Systems, by viewing it as a hiding operation, see Figure~\ref{ksduvbkudh}.
\begin{figure}[!h]
	\centerline{\scalebox{0.8}{\input{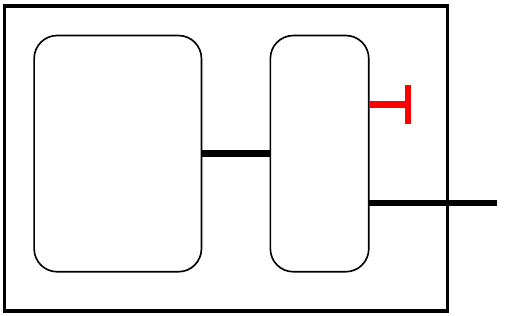_t}}}
	\caption{The marginal on $y$ for the Mixed System of Figure~\ref{iweuftygi} is by hiding $x$ (in {\color{red}red}).}
	\label{ksduvbkudh}
\end{figure}
For $\cons\subseteq\Omega\times{Q}$ a relation where $Q$ is the domain of tuple $X$,  $Y\subseteq{X}$ a subset of variables, and $Z=X-Y$, we denote by
	\[
	\shortproj{Y}:2^{\Omega\times{Q}}\ra 2^{\Omega\times{Q_Y}}:~
	\proj{Y}{\cons}\eqdef\{(\omega,q_Y)\mid\exists q_Z:\omega\cons(q_Y,q_Z)\}
	\]
	the projection of $\cons$ over $Y$.
\begin{definition}[marginal]
		\label{eruigtyweouy} Let $\system=(\Omega,\proba,X,\cons)$ be a Mixed System, and let ${Y}\subseteq{X}$ be a subset of variables. The \emph{marginal} of $\system$ on $Y$, denoted by $\margin{Y}{\system}$, is the Mixed System
		$
			\margin{Y}{\system} \eqdef (\Omega,\proba,Y,\proj{Y}{\cons})
		$.\eproof
\end{definition}
		Even if $\system$ was itself compressed, due to the projection of relation $\cons$, the Mixed System defining the marginal in Definition~\ref{eruigtyweouy} may require a compression.

		\begin{example}\rm [Link with the classical notion of marginal for probabilities] \label{wsqytrdwuytr}\rm
				Let us apply Definition~\ref{eruigtyweouy} to the purely probabilistic system of Example~\ref{wuydtfuty}, namely
		$\system_{\sf proba}=(Q,\proba,\{X,Y\},\diag)\,,$
		having two variables $X,Y$, corresponding state space $Q$, and $\Omega=Q$ with $\cons=\diag$, the diagonal. This is the model of a pair $(X,Y)$ of visible variables with joint probability distribution $\proba(x,y)$, where $x$ and $y$ denote values for $X$ and $Y$, respectively. The projection of $\diag$ on $Y$ is
		$$\proj{Y}{\diag}=\{(x,y,y')\mid y=y'\}\,.$$
		Thus, $(x,y)\sim(x',y')$ if and only if $y=y'$. Thus, when using the formula of Definition~\ref{eruigtyweouy} to define $\margin{Y}{\system_{\sf proba}}$, the private probability space $(Q,\proba)$ must be compressed as $\compress{\proba}(y)=\sum_x\proba(x,y)$, showing that our notion of marginal boils down to the classical notion for probabilities in this case.\eproof
\end{example}

\subsubsection{Parallel composition}
Mixed Systems are equipped with a parallel composition: common state variables are unified (thus causing synchronization constraints); on the other hand, probabilistic parts remain local and independent, conditionally to the satisfaction of synchronization constraints. This is illustrated on Figure~\ref{jeyftfgjy}.
\begin{figure}[ht]
	\centerline{\scalebox{1}{\input{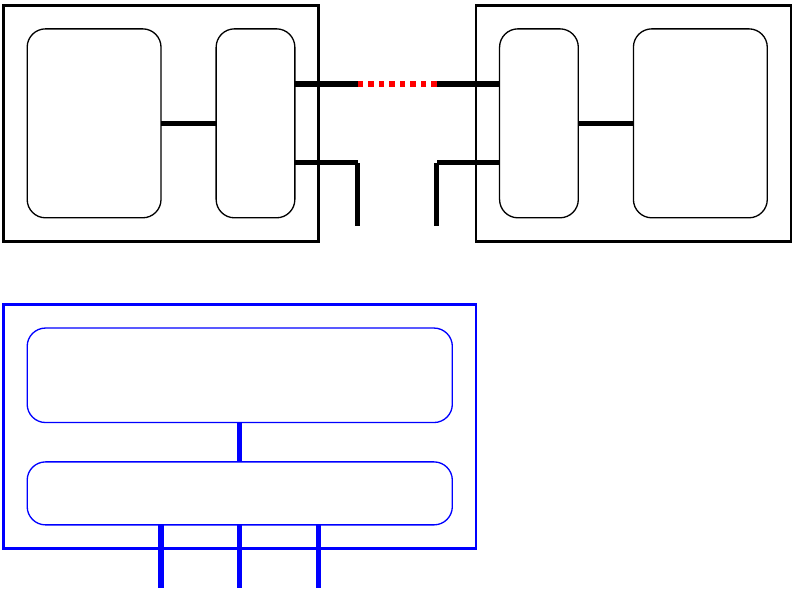_t}}}
	\caption{Illustrating the parallel composition, for $y_1,y_2$ local variables and $x$ shared. The factor graph, capturing the connection via identical wires, is depicted on the top in black; the definition using formulas (\ref{sidtycfgdity}) is shown in blue.}
	\label{jeyftfgjy}
\end{figure}


Formally, let $I$ be a finite set, and, for each $i{\in}I$, let $X_i$ be a finite set of variables with domain $Q_i$, and set $X=\bigcup_{i{\in}I}{X_i}$ with domain $Q$. Say that tuple $(q_i)_{i{\in}I}$ is \emph{compatible}, written
\beq
\compat_{i{\in}I}\,{q_i}\,, \label{leriufygeoi}
\eeq
 if $q_i(x)=q_j(x)$ for any pair $(i,j)$ of indices and every shared variable $x\in{X_i}\cap{X_j}$. If $\compat_{i{\in}I}\,{q_i}$, their \emph{join}
\beq
\join_{i{\in}I}\,{q_i}\in{Q}  \label{leriughpiu}
\eeq
is defined  by $\join_{i{\in}I}\,{q_i}(x)=q_i(x)$ whenever $x\in{X_i}$. 
%
\begin{definition}[parallel composition and Factor Graph]
	\label{lrtguiodtrhleguip}
	The  \emph{parallel composition} $\system_1{\mpara}\system_2$ of two mixed systems $\system_1$ and $\system_2$ is the Mixed System $\system$ such that:
\beq
        \bea{rcl}
        {X}&=&{X_1}{\cup}{X_2} \,,\,
	\Omega=\Omega_1{\times}\Omega_2 \,,\,\proba=\proba_1{\times}\proba_2 \mbox{~~(cartesian product), and} \\
        \cons&=&\bigl\{
    \bigl((\omega_1,\omega_2),q_1\join{q_2}\bigr)
    \mid
    q_1\compat{q_2} \mbox{ and } \omega_1\cons_1{q_1} \mbox{ and } \omega_2\cons_2{q_2}
  \bigr\}\,,
	\eea
	\label{sidtycfgdity}
	\eeq
        %
				We attach to parallel composition $\bsystem=\mpara\!_{i\in{I}}\system_i$ its \emph{Factor Graph} $\cG_\bsystem$, which is a \emph{nondirected} bipartite graph whose set of vertices collects systems and variables:
				\[\bea{c}
				\bigl\{\system_i\mid{i}\in{I}\bigr\}\cup\bigl\{x\mid x\in\bigcup_{i\in{I}}{X_i}\bigr\},
				\eea\]
				and $\cG_\bsystem$ has edges $({\system_i},x)$, for $i\in{I}$ and $x\in{X_i}$, also denoted by $\monograph{\system_i}{{x}}$.\eproof
\end{definition}
The composition of two consistent systems may be
inconsistent.
Let
\beq
\nil&=&(\{1\},\delta_1,\emptyset,\{(1,\epsilon)\}) \label{htdwercfhstydr}
\eeq
be the \emph{nil} system, with trivial probability space $(\Omega,\proba)=(\{1\},\delta_1)$ and no visible variable; its state space is the singleton $Q_\nil=\{\epsilon\}$ where $\epsilon$ is some distinguished element, and its relation is the singleton $\cons=\{(1,\epsilon)\}$. The nil system is neutral for parallel composition: $\nil{\mpara}\system\equiv\system$ holds, for every $\system$.

Factor Graphs obey the following rule, where $\cup$ denotes the union of graphs:
				\beq
				\cG_{\system_1\|\system_2} &=& \cG_{\system_1}\cup\cG_{\system_2}\,.
				\label{eltrojheltor}
				\eeq
The associativity and commutativity of this parallel composition is immediate, as it is directly inherited from the same properties satisfied by the Cartesian product of probability spaces and the conjunction of relations. Factor Graphs and the parallel composition of Mixed Systems are useful in decomposing large but sparse systems, into a parallel composition of smaller, locally interacting, subsystems. 
\begin{lemma}
  \label{wjdetyfuuy}
  $\system_1\equiv\system'_1$ implies 
  $\system_1{\mpara}\system_2\equiv\system'_1{\mpara}\system_2$, expressing that parallel composition preserves equivalence.
\end{lemma}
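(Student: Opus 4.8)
The statement to prove is Lemma~\ref{wjdetyfuuy}: that $\system_1\equiv\system'_1$ implies $\system_1\mpara\system_2\equiv\system'_1\mpara\system_2$. By Definition~\ref{oeruihytersd}, equivalence is defined ``up to compression'', so the plan is to first reduce to the case where all systems in sight are compressed, then exhibit the required bijection explicitly. The main work is bookkeeping: tracking how the witnessing bijection $\varphi$ for $\system_1\equiv\system'_1$ lifts to a bijection between the consistent parts of the two parallel compositions.

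First I would reduce to the compressed case. Since $\equiv$ is defined via compressions, it suffices to show that (a) compression commutes with parallel composition up to $\equiv$, i.e. $\compress{\system_1}\mpara\system_2\equiv\system_1\mpara\system_2$ (and likewise on the primed side), and (b) the lemma holds when $\system_1,\system'_1,\system_2$ are all compressed. For (a), one observes that in $\system_1\mpara\system_2$ the equivalence relation $\sim$ on $\Omega_1\times\Omega_2$ identifies $(\omega_1,\omega_2)$ with $(\omega'_1,\omega'_2)$ exactly when they accept the same states $q_1\join q_2$; because the constraint in~(\ref{sidtycfgdity}) factors through the two component constraints, this refines the product of the component-level $\sim$ relations, and summing probabilities along classes matches the product structure $\proba_1\times\proba_2$. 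This is the one place where a small argument is genuinely needed, and it is also where I expect the main obstacle to lie: one must be careful that $\compress{\Omega_1}\times\Omega_2$ is not itself automatically compressed (as the excerpt already warns for marginals), so (a) is an honest $\equiv$, not an equality, and the bijection realizing it must be written down.

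For (b), assume $\system_1,\system'_1,\system_2$ compressed and let $\varphi:\cons_{1,\proba_1}\to\cons'_{1,\proba'_1}$ be the bijection from Definition~\ref{oeruihytersd}, so that for $(\omega_1,q_1)$ in the consistent part of $\system_1$, writing $(\omega'_1,q'_1)=\varphi(\omega_1,q_1)$, we have $q'_1=q_1$, $\proba'_1(\omega'_1)=\proba_1(\omega_1)$, and $\omega_1\cons_1 q_1\Leftrightarrow\omega'_1\cons'_1 q_1$. Define $\Phi$ on the consistent part of $\system_1\mpara\system_2$ by
\[
\Phi\bigl((\omega_1,\omega_2),\,q_1\join q_2\bigr)\;=\;\bigl((\omega'_1,\omega_2),\,q'_1\join q_2\bigr),
\]
where $(\omega'_1,q'_1)=\varphi(\omega_1,q_1)$. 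Since $q'_1=q_1$, compatibility $q_1\compat q_2$ is preserved and $q'_1\join q_2=q_1\join q_2$, so the output state is literally unchanged; the product probability is preserved since $\proba'_1(\omega'_1)\proba_2(\omega_2)=\proba_1(\omega_1)\proba_2(\omega_2)$; and membership in the composed constraint~(\ref{sidtycfgdity}) is preserved because it is the conjunction of $\omega_1\cons_1 q_1$ (equivalent to $\omega'_1\cons'_1 q_1$ via $\varphi$) and $\omega_2\cons_2 q_2$ (untouched) and the compatibility condition (untouched). The inverse of $\Phi$ is built the same way from $\varphi^{-1}$, so $\Phi$ is a bijection between the consistent parts satisfying all three conditions of Definition~\ref{oeruihytersd}. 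Combining (a) and (b) with transitivity and symmetry of $\equiv$ (and Lemma~\ref{kerufgybob} is not even needed, though it confirms consistency) yields the claim. The symmetric case $\system_2\equiv\system'_2$ follows from commutativity of $\mpara$, which was noted right after Definition~\ref{lrtguiodtrhleguip}.
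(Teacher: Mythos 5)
Your proposal is correct and follows essentially the same route as the paper's own proof: reduce to the compressed case, then transport the witnessing bijection $\varphi$ componentwise to the product, observing that the state $q_1\join q_2$ and the product probability $\proba_1(\omega_1)\proba_2(\omega_2)$ are literally unchanged. The only difference is that the paper merely asserts ``it is enough to prove the result for compressed systems,'' whereas you explicitly identify and sketch the missing step (a) that compression commutes with $\mpara$ up to $\equiv$; your version is the more complete one.
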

See Appendix~\ref{riuygfttyiohjih} for the proof.

\subsection{Bayesian Calculus and Bayesian Networks}
\label{selrujgnhelru}
So far Factor Graphs and related algorithms are able to capture joint distributions relating different statistical data, but they cannot capture causality, as argued by Judea Pearl~\cite{Pearl09}. Actually, Judea Pearl states that causality requires extra, structural, information that must be added to the specification of probability distributions: directed graphs are used to this end.

Another issue is that of incremental sampling of a compound system: whereas the sampling of a parallel composition is generally global (or using the sophisticated iterative methods used, e.g., in~\cite{JSSv076i01}), one could ask whether it could be performed incrementally. 

In statistics based on graphical models, these questions are answered by considering, in addition to Factor Graphs, so-called Bayesian Networks~\cite{DBLP:journals/ai/Pearl86}. Bayesian networks specify causality information by means of directed graphs, which bring the extra information advocated by J. Pearl to talk about caussality. Bayesian networks also naturally support incremental execution.
In this section, we show how these concepts supporting causality and incremental sampling, can be extended to Mixed Systems.

As a preamble, we recall some  facts from basic probability theory.
For a pair $(X,Y)$ of random variables with joint distribution $P(x,y)$, usual Bayes formula writes $P(x,y){=}P(y)P(x|y)$, where $P(y)\eqdef\sum_xP(x,y)$ is the \emph{marginal distribution} of $Y$ and $P(x|y)$ is the \emph{conditional distribution} of $X$ given that $Y{=}y$, assigning,  to each value $y$ of $Y$, a probability for $X$. Sampling $P(x|y)$ consists in 1) nondeterministically selecting a value for $y$, and then 2) with this value of $y$, sampling $X$ according to $P(x|y)$. $P(x|y)$ is called a \emph{transition probability,} or a \emph{probability kernel} or \emph{stochastic kernel,} depending on the contexts and communities: $y\mapsto P(x|y)$ maps any value for $Y$ to a probability distribution for $X$. We now extend these notions to Mixed Systems.

\subsubsection{Mixed Kernel}
We begin by extending the notion of probability kernel to that of Mixed Kernel. The starting idea consists in defining a Mixed Kernel as a function, mapping every $Y$-state of a set $Y$ of variables, to a system having $X$ as its set of variables.
For the notations used in the sequel, the reader is referred to the beginning of Section~\ref{uwtrdytrd}.

\begin{definition}[Mixed Kernel]
	\label{skjhfgkukduy}
		A \emph{Mixed Kernel} (or simply \emph{kernel}) is a map 
		\[
		\kernel:Q_{X}\ra\Systems(X')\,,
		\]
	where $X$ and $X'$ are two finite sets of variables such that $X\cap{X'}=\emptyset$, called the sets of \emph{inputs} and \emph{outputs} of kernel $\kernel$. In the sequel, we shall denote these two sets $X$ and $X'$ by $\inal{\kernel}$ and $\outal{\kernel}$, respectively.
		
		The \emph{probabilistic semantics} of $\kernel$ is the pair of maps
		\beq
		q_{\rm in}\mapsto\bigl(\outproba(q_{\rm in}),\inproba(q_{\rm in})\bigr) 
		\label{dfksjgksjhjh}
		\eeq
		where $q_{\rm in}$ is a value for the input variables $\inal{\kernel}$, and $\outproba(q_{\rm in})$ and $\inproba(q_{\rm in})$ are the outer and inner probabilities associated to Mixed System $\kernel(q_{\rm in})$.\eproof
\end{definition}
For $q{\in}Q$ and $\cons\subseteq\Omega{\times}Q$, we write
\beq
\cons_q &\eqdef& \{\omega{\in}\Omega\mid\omega\cons{q}\}, \mbox{ and }\cons_\omega \eqdef \{q{\in}Q\mid\omega\cons{q}\}\,.
\label{ltriguhruiph}
\eeq
\begin{convention}\rm
	\label{dwejdtuyt} A kernel $\kernel$ whose input set $X$ is empty identifies with the Mixed System $\system=\kernel(\epsilon)$ it defines, where $Q_X$ is the singleton $\{\epsilon\}$. Vice-versa, any system $\system$ identifies with the kernel $\kernel$ whose input set $X$ is empty and $\kernel(\epsilon)=\system$.\eproof
\end{convention}
\subsubsection{Bayesian Network}
		\begin{definition}[Bayesian Network] \label{lerifugeroigf}
Let $\cN{=}(\Vars\cup\Kernels,\depends)$ be a {directed} acyclic bipartite graph, where $\Vars$ and $\Kernels$ are finite sets of variables and Mixed Kernels, and $\depends\,\subseteq(\Vars{\times}\Kernels)\cup(\Kernels{\times}\Vars)$ is the set of edges. For $\kernel\in\Kernels$, we denote by ${\preset{\kernel}}$ and ${\postset{\kernel}}$ the sets of variables $x\in\Vars$ such that $x\depends\kernel$ and $\kernel\depends{x}$, respectively. $\cN$ is called a \emph{Bayesian Network} if satisfies the following conditions:
\beq
\forall\kernel\in\Kernels&\implies& \inal{\kernel}\subseteq\preset{\kernel} \mbox{ and }\outal{\kernel}=\postset{\kernel}\,.
\label{ifuglruifkjegk}
\\
\forall\kernel_1,\kernel_2\in\Kernels,\kernel_1\neq\kernel_2&\implies& \postset{\kernel_1}\cap\postset{\kernel_2}=\emptyset
\eeq
For convenience, we will denote by
\beq
\kernel_1\bayestimes\kernel_2 \label{ksduhcsvbkjhy}
\eeq
a Bayesian network $\cN{=}(\Vars{\cup}\Kernels,\depends)$ whose set $\Kernels$ contains only two Mixed Kernels $\kernel_1$ and $\kernel_2$, such that $\kernel_1\depends\kernel_2$ and $X=\inal{\kernel_1}{\cup}\outal{\kernel_1}{\cup}\inal{\kernel_2}{\cup}\outal{\kernel_2}$.\eproof
	\end{definition}
This notion is illustrated on Figure~\ref{wsuydrfdcsjytr} for two Mixed Kernels communicating via variable $x$ (compare with Figure~\ref{jeyftfgjy}).
\begin{figure}[!h]
	\centerline{\scalebox{1}{\input{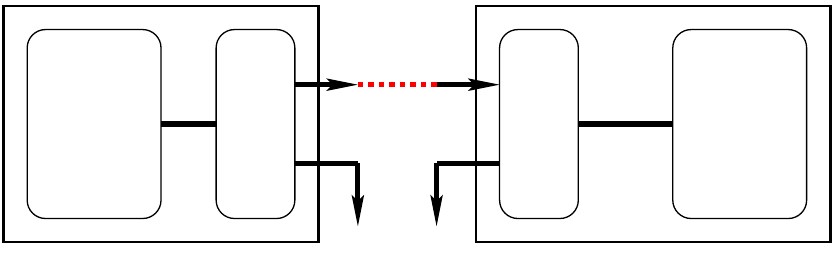_t}}}
	\caption{Bayesian Network $\system_1\bayestimes\kernel_2$. Mixed Kernel $\kernel_2$ has input $x$.}
	\label{wsuydrfdcsjytr}
\end{figure}
\noindent To Bayesian Network $\cN{=}(\Vars{\cup}\Kernels,\depends)$, we associate the partial order \mbox{$(\Vars{\cup}\Kernels,\preceq)$}, where $\preceq$ is the transitive closure of $\depends$. In the  following definition, for $q$ a valuation of the set $X$ of variables and $\kernel$ a kernel belonging to $\Kernels$, $\restrict{\preset{\kernel}}{q}$ and $\restrict{\postset{\kernel}}{q}$ denote the restriction of $q$ to the variables belonging to $\preset{\kernel}$ and $\postset{\kernel}$, respectively. 
\begin{definition}[incremental sampling and probabilistic semantics]
	\label{freuiyfkeruy} 
The \emph{incremental sampling} of Bayesian Network $\cN$ is defined by structural induction over $\preceq$ as follows:
	\begin{enumerate}
		\item Initial condition: we assume a value for every variable $x\in\min(\Vars\cup\Kernels)$, where $\min$ refers to $\preceq$; we set $\Vars_-=\min(\Vars\cup\Kernels)\cap\Vars$ and $\Kernels_-=\emptyset$;
		\item Induction hypothesis: $\Vars_-\cup\Kernels_-\subseteq\Vars\cup\Kernels$ is a downward closed subset of vertices of $\cN$ such that 
		\begin{enumerate}
		\item $\postset{\Kernels}_-\subseteq\Vars_-$;
			\item every variable $x\in\Vars_-$ holds a value, whereas every $x\not\in\Vars_-$ does not;
		\end{enumerate}
		
		\item \label{dydjytjhg} Induction step: while ${\Vars}_-{\neq}\Vars$, do: 
		\begin{enumerate}
		\item let $\Kernels^*\subseteq\Kernels{-}\Kernels_-$ collect the kernels $\kernel$ such that $\preset{\kernel}{\subseteq}{\Vars}_-$ and $\postset{\kernel}{\neq}\emptyset$;
		\item	for every $\kernel{\in}\Kernels^*$, every variable belonging to $\preset{\kernel}$ holds a value, hence we can
			sample Mixed System $\kernel(q_{\preset{\kernel}})$, which returns a value for $q_{\postset{\kernel}}$; 
		\item \label{ksjhfvgks} doing this for all $\kernel{\in}\Kernels^*$ yields a value for every variable belonging to ${\Vars}_-\cup\postset{\Kernels^*}\supset{\Vars}_-$ (the inclusion is strict); 
		\item set ${\Kernels}_-:={\Kernels}_-\cup\Kernels^*$ and ${\Vars}_-:={\Vars}_-\cup\postset{\Kernels^*}$ and return to~\ref{dydjytjhg}.
		\end{enumerate}
		\item Done. 
	\end{enumerate}
	Sampling $\cN$ thus returns a value $q\in{Q_X}$ for every variable belonging to $X$, we denote this by $\cN\leadsto{q}$. 
	The \emph{probabilistic semantics} of $\cN$ is the map $q\mapsto\outproba(q)$, associating to every $q{\in}Q_X$ such that $\cN\leadsto{q}$, its \emph{probabilistic score}
	\beq
	\outproba(q) &=& \prod_{\kernel\in\Kernels}~\outproba\bigl(\kernel,\restrict{\preset{\kernel}}{q}\bigr)\bigl(\restrict{\postset{\kernel}}{q}\bigr)\,.
	\label{kslufhsdbkj}	
\eeq
In (\ref{kslufhsdbkj}), $\outproba(\kernel,\restrict{\preset{\kernel}}{q})(\restrict{\postset{\kernel}}{q})$ is the score assigned to state  $\restrict{\postset{\kernel}}{q}$ by the outer probability associated to mixed system $\kernel(\restrict{\preset{\kernel}}{q})$.\eproof
\end{definition}
Since inclusion ${\Vars}_-\cup\postset{\Kernels^*}\supset{\Vars}_-$ in step \ref{ksjhfvgks} is strict, the inductive procedure terminates in finitely many steps. Note that, by (\ref{fjhksjhfj}), there is no need to consider $\inproba(q)$.
%
%
The inductive procedure of Definition~\ref{freuiyfkeruy} is formalized in Algorithm~\ref{alg:cap}.
\begin{algorithm}
\caption{Incremental sampling of Bayesian Network $\cN$}\label{alg:cap}
\begin{algorithmic}
\Require $\forall x\in\min(\Vars\cup\Kernels)$, $x$ is defined
\Ensure $\forall x \in X$, $x$ is defined
\State $\Vars_-\gets\Vars\cap\min(\Vars\cup\Kernels)$ and $\Kernels_-\gets\emptyset$
\While{${\Vars}_-{\neq}\Vars$}
\State $\Kernels^* \gets \{ \; \kernel \;\mid\; \preset{\kernel}{\subseteq}{\Vars}_- \mbox{ and } \postset{\kernel}{\neq}\emptyset \; \}$
\ForAll{$\kernel{\in}\Kernels^*$}
\State $\mbox{sample}(\kernel(\restrict{\preset{\kernel}}{q}))$
\EndFor
\EndWhile
\State ${\Kernels}_- \gets {\Kernels}_-\cup\Kernels^*$
\State ${\Vars}_- \gets {\Vars}_-\cup\postset{\Kernels^*}$
\end{algorithmic}
\end{algorithm}

\begin{definition}[Bayesian network equivalence]
	\label{esolfhsbdkuy} 
Let $\cN_1$ and $\cN_2$ be two Bayesian networks such that $X_1=X_2$. Say that $\cN_1$ and $\cN_2$ are \emph{probabilistically equivalent,} written $\cN_1\equiv_P\cN_2$, if they possess equal probabilistic semantics: $\outproba_1=\outproba_2$.\eproof
\end{definition}
By Lemma~\ref{kerufgybob}, $S\equiv{S'}$ implies $S\equiv_P{S'}$, when regarding mixed systems $S$ and $S'$ as Bayesian networks.

\begin{example}\rm [Finite Markov chain as a Bayesian Network]
\label{eorfyegfko} \rm Recall that a finite sequence of random variables $X_1,X_2,\dots,X_n$ is called a Markov chain if the joint distribution of $(X_0,X_1,\dots,X_n)$ factorizes as
$\proba(X_0{=}x_0,\dots,X_n{=}x_n) = \mu(x_0)\prod_{i=1}^nP(x_i| x_{i-1})$, where probability $\mu$ over $\bX$, the state space of the Markov chain, is the \emph{initial condition} and $P(x'| x)$ is the \emph{transition kernel,} i.e., for $x$ fixed, $x'\mapsto P(x'| x)$ is a probability over $x'$. Markov chains are thus a particular case of the Bayesian Networks proposed in Definition~\ref{lerifugeroigf}.\eproof
\end{example}
		We next extend, to mixed systems, the notion of conditional distribution. To this end, we will use the following notation: for $Y$ a set of variables and $q_Y\in{Q_Y}$,
		\beq
		(Y{=}q_Y)
	\label{elriuei}
		\eeq
		denotes the Mixed System defined as follows: $\Omega$ is the singleton $\{1\}$ with trivial probability on it, $Y$ is the set of variables, and $\cons=\{(1,q_Y)\}$ is a singleton, expressing that $Y$ is constrained to take the value $q_Y$.
\begin{definition}[conditional]
		\label{kerufygoieuyg} \label{lugtihlitu} Let $\system=(\Omega,\proba,X,\cons)$ be a Mixed System, and let ${Y}\subseteq{X}$ be a subset of variables. The \emph{conditional} of $\system$ on $Y$, denoted by $\cond{Y}{\system}$, is the kernel defined by $\cond{Y}{\system}(q_Y)\eqdef (\,Y{=}q_Y)\mpara\system$.\eproof
\end{definition}

\myparagraph{Link with the classical notion}
Consider the following particular case for $\system$: $\Omega{=}Q$, and $\cons$ is the diagonal of $\Omega{\times}Q$. Then, $\system$ specifies the joint distribution $\proba$ for tuple $X$ of random variables. Decompose $X=Y{\cup}{Z}$ where $Y{\cap}{Z}=\emptyset$. Compressing $\margin{Y}{\system}$ yields the marginal distribution of $Y$. Compressing $(Y{=}q_Y)\mpara\system$ yields the conditional distribution $\proba(q_Z|q_Y)$. Therefore, Definitions~\ref{eruigtyweouy} and~\ref{kerufygoieuyg} extend the notions of marginal and conditional existing on purely probabilistic systems. 
	\begin{discussion}[more on conditioning] \rm
	\label{olisubvsiu} When probability and nondeterminism are blended, the notion of Mixed Kernel serves the same purpose as \emph{soft} or \emph{stochastic} conditioning~\cite{DBLP:conf/icml/TolpinZRY21}, since it implements the stochastic conditioning $p(x\mid y\sim D)$ discussed in the introduction of~\cite{DBLP:conf/icml/TolpinZRY21}.\eproof
\end{discussion}
Generally, sampling the parallel composition $\system_1{\mpara}{\system_2}$ yields a result which differs from the incremental sampling of $\system_1\bayestimes\cond{X_1}{\system_2}$ (by Convention~\ref{dwejdtuyt} we can regard $\system_1$ as a kernel and consider this incremental sampling). Nevertheless, the following result holds (see Definition~$\ref{oeruihytersd}$ regarding isomorphic samplings):
%

	\begin{theorem}[Bayes formula] \label{reoifueorwui} Let $\system=(\Omega,\proba,X,\cons)$ be a Mixed System and  ${Y}\subseteq{X}$ a subset of variables. Then, the following \emph{Bayes formula} holds:\footnote{This theorem and formula (\ref{lugtihlitu}) correct the erroneous construction of the conditional $\cond{Y}{\system}$ in  Appendix A of~\cite{BenvenisteLFG95}.} 
	\[
	\system\equiv_P\margin{Y}{\system}\bayestimes\cond{Y}{\system}\,.
	\]
	\end{theorem}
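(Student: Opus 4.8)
The plan is to unfold the definitions of marginal, conditional, parallel composition, and incremental sampling, and to check that both sides assign the same probabilistic score $\outproba(q)$ to every reachable state $q\in Q_X$. Write $Z=X-Y$. By Definition~\ref{eruigtyweouy}, $\margin{Y}{\system}=(\Omega,\proba,Y,\proj{Y}{\cons})$, and by Definition~\ref{kerufygoieuyg}, $\cond{Y}{\system}$ is the kernel $q_Y\mapsto(Y{=}q_Y)\mpara\system$, which has input set $Y$ and output set $X$ (the fresh copy of $Y$ inside $\system$ being unified with the constraining value). The Bayesian network $\margin{Y}{\system}\bayestimes\cond{Y}{\system}$ has $\margin{Y}{\system}$ as a source (input set empty, output set $Y$), feeding its $Y$-value into $\cond{Y}{\system}$, whose outputs are all of $X$.

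The first step is to compute the incremental sampling of the right-hand network according to Definition~\ref{freuiyfkeruy}. Step one samples the source $\margin{Y}{\system}$: by the sampling procedure of Definition~\ref{slergiuhpiu}, one draws $\omega_1\in\Omega$ according to $\consist{\proba}$ restricted to $\{\omega\mid\exists q_Y:\omega\,\proj{Y}{\cons}\,q_Y\}$ and selects a compatible $q_Y$. Step two samples $\cond{Y}{\system}(q_Y)=(Y{=}q_Y)\mpara\system$: its probability space is $\{1\}\times\Omega$, essentially $\Omega$ again, with constraint forcing the $Y$-coordinate of the state to equal $q_Y$; sampling draws $\omega_2\in\Omega$ conditioned on the event $\{\omega\mid\exists q_Z:\omega\,\cons\,(q_Y,q_Z)\}=(\cons)_{\text{proj}}$ being nonempty, then picks $q_Z$ with $\omega_2\,\cons\,(q_Y,q_Z)$. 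The score formula (\ref{kslufhsdbkj}) then multiplies the outer-probability score of the first kernel at $q_Y$ by that of the second kernel at $(q_Y,q_Z)=q$. The second step is to write each of these two factors explicitly in terms of $\proba$, $\consist{\Omega}$, and the fibers $\cons_q,\cons_\omega$ of (\ref{ltriguhruiph}), and to observe that their product collapses, after cancellation of the common normalizing denominators, to $\consist{\proba}\bigl(\Omega_{\exists\{q\}}\bigr)$, which by (\ref{fjhksjhfj}) equals $\outproba_\system(\{q\})$ — the left-hand side of the desired equality. A parallel bookkeeping remark is that $\cN\leadsto q$ on the right exactly when $q$ is producible on the left, so the domains of the two probabilistic semantics agree.

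The main obstacle I expect is the careful handling of the two independent copies $\omega_1,\omega_2$ of $\Omega$ that appear on the right-hand side versus the single copy on the left: one must verify that conditioning the second copy on the value $q_Y$ already produced by the first makes the product of the two scores telescope to a single $\consist{\proba}$-mass, rather than a genuine product of two probabilities. Concretely, the factor contributed by $\margin{Y}{\system}$ is $\consist{\proba}\bigl(\{\omega\mid\omega\,\cons\,q'\text{ for some }q'\text{ with }q'_Y=q_Y\}\bigr)$ and the factor from $\cond{Y}{\system}(q_Y)$ is the conditional mass of $\{\omega\mid\omega\,\cons\,q\}$ given that same event; their product is exactly $\consist{\proba}(\{\omega\mid\omega\,\cons\,q\})=\outproba_\system(\{q\})$. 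Once this cancellation is made precise — including checking that the event one conditions on in the second step is nonempty precisely when the first step produced $q_Y$, so no division by zero occurs — the equality of probabilistic semantics, hence $\system\equiv_P\margin{Y}{\system}\bayestimes\cond{Y}{\system}$, follows. A compression step may be needed on $\margin{Y}{\system}$ (as noted after Definition~\ref{eruigtyweouy}), but by Lemma~\ref{kerufgybob} compression preserves $\outproba$, so it does not affect the argument.
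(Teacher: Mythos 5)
Your proposal is correct and follows essentially the same route as the paper's proof: both compute the incremental-sampling score of the right-hand network as the product of the marginal's score at $q_Y$ and the conditional mass of $\cons_q$ given the fiber over $q_Y$, and both observe that this product telescopes to $\consist{\proba}(\cons_q)=\outproba(\{q\})$ because $\cons_q$ is contained in that fiber. The only cosmetic difference is that the paper explicitly constructs the compressed marginal $\system_Y$ before computing, whereas you work directly on $\Omega$ and invoke Lemma~\ref{kerufgybob} to dispose of compression; the substance is identical.
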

		\begin{proof}
	See Appendix~\ref{lertughetpr} for the proof.\eproof
	\end{proof}
	
	The right hand side of Bayes' formula is illustrated on Figure\,\ref{dklfvjhdfbkjh}.
	\begin{figure}[ht]		
	\centerline{
	\begin{tabular}{lcr}
	marginal: hiding $X$ && conditional: kernel with input $Y$ \\
		 \scalebox{0.8}{\input{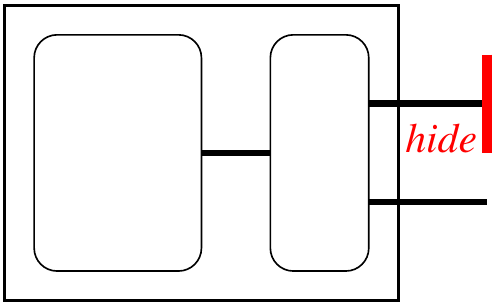_t}} &\raisebox{1.2cm}{\Large{;}}&
		\scalebox{0.8}{\input{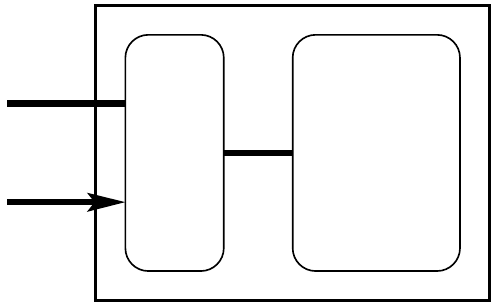_t}}
	\end{tabular}
}
\caption{Illustrating the right hand side of Bayes' formula: the  output $Y$ of the system on the left is connected to the input $Y$ of the kernel on the right.}
\label{dklfvjhdfbkjh}
\end{figure}

By Definition~\ref{lrtguiodtrhleguip}, parallel composition
$
\bsystem=\prod_{\system\in\Systems}\system
$
defines a \emph{Factor Graph} $\cG_\bsystem$, having nondirected bipartite edges $\monograph{\system}{{x}}$, for every $\system\in\Systems$ and every visible variable $x$ of $\system$.
Message passing algorithms transform certain Factor Graphs associated to a parallel composition of several Mixed Systems, to Bayesian Networks while preserving the sampling. This provides such Factor Graphs with an incremental sampling:
\begin{theorem}[message passing algorithm]
	\label{eriuhpwiu} If Factor Graph $\cG_\bsystem$ of system $\bsystem$ is a  tree, we can transform it to a Bayesian Network $\cN_{\bsystem}$ while preserving its probabilistic semantics.
\end{theorem}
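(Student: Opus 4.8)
The plan is to argue by induction, mimicking the classical sum--product (message passing) recursion, with Bayes' formula (Theorem~\ref{reoifueorwui}) as the single elementary step and the $\equiv$-preservation lemmas (Lemmas~\ref{kerufgybob}, \ref{wjdetyfuuy}) used to propagate it. The induction is carried out on the pair $(\#\text{systems},\,\#\text{edges of }\cG_\bsystem)$ ordered lexicographically. For the base case: if $\bsystem$ has no system, $\bsystem=\nil$ and the empty Bayesian Network works; if it has a single system $\system_1$, then $\cG_\bsystem$ is a star and, by Convention~\ref{dwejdtuyt}, $\system_1$ already \emph{is} a Bayesian Network (one kernel, empty input, output $X_1$), whose probabilistic semantics~(\ref{kslufhsdbkj}) collapses to $\outproba_{\system_1}$.

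Inductive step: let $\bsystem=\mpara_{i\in I}\system_i$ with $|I|\geq 2$; being a tree with at least two nodes, $\cG_\bsystem$ has a leaf, which is either a system node or a variable node. \textbf{(i) A system $\system_j$ is a leaf.} Then $\system_j$ has a single visible variable $x$, shared (by connectedness and $|I|\geq 2$) with another system $\system_k$; replace $\system_j,\system_k$ by $\system_j\mpara\system_k$. By associativity and commutativity of $\mpara$ the result $\bsystem'$ satisfies $\bsystem'\equiv\bsystem$, and by~(\ref{eltrojheltor}) $\cG_{\bsystem'}$ is $\cG_\bsystem$ with the leaf $\system_j$ deleted, hence a tree with one fewer system; apply the induction hypothesis to $\bsystem'$. \textbf{(ii) Every leaf is a variable.} Pick a leaf variable $x_\ell$; it lies in a unique factor $\system_i$ and in no other, and $|X_i|\geq 2$ (else $\system_i$ would itself be a leaf). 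With $Y=X_i\setminus\{x_\ell\}$, set $\kappa_i=\cond{Y}{\system_i}$ (a Mixed Kernel with input $Y$ and output $\{x_\ell\}$) and $\bsystem''=\margin{Y}{\system_i}\mpara\bigl(\mpara_{k\neq i}\system_k\bigr)$; then $\cG_{\bsystem''}$ is $\cG_\bsystem$ with the leaf $x_\ell$ deleted --- a tree with the same number of systems and one fewer edge. Apply the induction hypothesis to obtain $\cN_{\bsystem''}$, then let $\cN_\bsystem$ be $\cN_{\bsystem''}$ with $\kappa_i$ adjoined (edges $y\depends\kappa_i$ for $y\in Y$, and $\kappa_i\depends x_\ell$): this is acyclic since $x_\ell$ is a fresh sink, and the disjointness clause of Definition~\ref{lerifugeroigf} holds because $x_\ell$ is an output of no kernel of $\cN_{\bsystem''}$. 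In both cases the lexicographic measure strictly decreases, so the recursion terminates and defines $\cN_\bsystem$.

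For correctness one must show $\bsystem\equiv_P\cN_\bsystem$ (Definition~\ref{esolfhsbdkuy}). In case~(i) this is immediate from $\bsystem\equiv\bsystem'$ (hence $\bsystem\equiv_P\bsystem'$ by Lemma~\ref{kerufgybob}) and the induction hypothesis. In case~(ii), since the Bayesian--Network semantics~(\ref{kslufhsdbkj}) is multiplicative over the $\bayestimes$ operation (adjoining one kernel multiplies one extra factor), it suffices to prove the single identity $\bsystem\equiv_P\bsystem''\bayestimes\kappa_i$ and then invoke the induction hypothesis on $\bsystem''$ together with Lemma~\ref{wjdetyfuuy}.

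This last identity is the crux and, I expect, the main obstacle. It is \emph{not} just Bayes' formula (Theorem~\ref{reoifueorwui}) applied to the whole composition with the set $X\setminus\{x_\ell\}$: the conditional $\cond{X\setminus\{x_\ell\}}{\bsystem}$ obtained that way also carries the consistency constraints of all the other factors, which are already accounted for in $\bsystem''$, so a naive substitution would double-count them. The correct route exploits that $x_\ell$ is \emph{local} to the single factor $\system_i$: one first checks the commutation $\margin{X\setminus\{x_\ell\}}{(\system_i\mpara(\cdots))}=\margin{Y}{\system_i}\mpara(\cdots)$, by the same set-theoretic manipulation that underlies~(\ref{sidtycfgdity}) and~(\ref{eltrojheltor}), and then shows that $\outproba_\bsystem(q)$ factorizes as $\outproba_{\bsystem''}\bigl(\restrict{X\setminus\{x_\ell\}}{q}\bigr)\cdot\outproba_{\kappa_i(\restrict{Y}{q})}\bigl(\restrict{\{x_\ell\}}{q}\bigr)$. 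It is exactly this factorization where the tree hypothesis is indispensable: with a cycle, no variable is local in the required sense, conditioning cannot be pushed through one factor at a time, and the product formula~(\ref{kslufhsdbkj}) breaks down. The remaining ingredients --- multiplicativity of Bayesian--Network semantics over $\bayestimes$, preservation of $\equiv$ under $\mpara$ (Lemma~\ref{wjdetyfuuy}), and the input/output bookkeeping for kernels --- are routine given Definitions~\ref{lerifugeroigf}, \ref{freuiyfkeruy}, \ref{eruigtyweouy} and~\ref{kerufygoieuyg}.
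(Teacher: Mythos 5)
Your proposal is correct and follows essentially the same route as the paper: the step you single out as the crux --- $\bsystem\equiv_P\bigl(\system_1\mpara\margin{Y}{\system_i}\bigr)\bayestimes\cond{Y}{\system_i}$ with $Y$ containing all shared variables --- is exactly the paper's Lemma~\ref{elriufek}, proved there from Bayes' formula (Theorem~\ref{reoifueorwui}) together with the commutation $\margin{X_1{\cup}Y}{\system_1{\mpara}\system_2}\equiv\system_1{\mpara}\margin{Y}{\system_2}$ (Lemma~\ref{keleuyo}), which is precisely the commutation you describe. The only differences are organizational: the paper roots the tree at an arbitrary system and applies this step per system node from the leaves down to the root (rules R1/R2), whereas you peel one leaf (variable or system) at a time under a lexicographic induction and merge single-variable leaf systems into their neighbours instead of turning them into kernels; both yield a valid Bayesian Network with the same probabilistic semantics.
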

See Appendix~\ref{litrughuihyutrd} for a proof.

\paragraph{Message passing algorithms for computing generalized likelihoods.} The purpose of probabilistic languages~\cite{doi:10.1002/sim.3680,JSSv076i01,DBLP:conf/icse/GordonHNR14} is not only (actually, not so much) sampling, but rather estimation/inference. Of course, in addition to performing incremental sampling, Bayes' formula also allows evaluating probabilities of properties incrementally. Then, a counterpart of Bayes' formula exists for performing maximum likelihood estimation incrementally---it is known in the pattern recognition literature as the Viterbi algorithm~\cite{DBLP:journals/tit/Forney72,Rabiner86anintroduction}. Theorem~\ref{eriuhpwiu} shows that message passing algorithms also allow for an incremental evaluation of generalized likelihoods.

\subsection{The {\sf ReactiveBayes} minilanguage}
In this section, we use the model of Mixed Systems to specify the semantics of the informal language we used in the introduction when discussing our running example. To make this precise, we formalize this informal language through the ``\plang'' syntax presented hereafter. 

To prevent from decidability issues in constraint solving, domains of variables and random variables are all assumed finite. Finally, to simplify our presentation of the syntax, domains are omitted. 

\subsubsection{Syntax}
Here is the syntax, where $\keyw{\prog{keywords}}$ are highlighted in blue:
\beq
\bea{lcl}
 e &::=& c \mid x 
\mid (e,e) \mid op(e) \mid f(e)
\mid \keyw{\prog{pre}}\; x \mid \keyw{\prog{init}}\; x=c
\\
S &::=& x \,\keyw{\sim}\, P(e) \mid e=e \mid \keyw{\prog{observe}}\; x \mid \; S \progpara S
\eea
	\label{leughloiulikuh}
\eeq
\begin{itemize}
	\item 
An expression $e$ is a constant $c$, a {variable} $x$, 
an external operator application $op(e)$, a function application $f(e)$, or a delayed version $\keyw{\prog{pre}}\; x$ for the variable $x$. Initial condition $\keyw{\prog{init}}\; x=c$ is required whenever $\keyw{\prog{pre}}\; x$ occurs in the program; it fixes the initial value for $x$.
\item 
A Mixed System $S$ is the declaration of a \emph{prior distribution} $P(e)$ for variable $x$, thus making it random; {distribution $P(e)$ has, optionally, parameters set by expression} $e$,
an \emph{equation} $e=e$, the declaration that \emph{variable $x$ is actually observed}, or the parallel composition thereof. For each term $P$ we assume a semantics denoted by $\psem{P}$, which is a probability.
\end{itemize}
No provision is given by syntax (\ref{leughloiulikuh}) for writing equations relating systems. In particular, fixpoint equations $S= S'\progpara S$ cannot be expressed: \plang\ does not offer full recursion. However, statements $\prog{\keyw{pre}}$ and $\prog{\keyw{init}}$ provide a limited form of recursion, supporting dynamical systems. This will be made clear in Section~\ref{ikfuygskfuj}, where the semantics of full \plang\ will be given.
\begin{examp}\rm 
	\label{fkdjhsdfkudyk} Mixed System $S_1$, specified by model (\ref{ujsydtcfjhy}) writes
\beqq
\bea{l}
  {\wemph{\|}} \prog{~\keyw{observe}~u}
\\  {\progpara}\prog{~\keyw{init}~x = x0}
\\  {\progpara}\prog{~y = phi(u,\keyw{pre}~x)}
\\  {\progpara}\prog{~x = {if} fail {then} psi(y,noise) {else} y}
\eea\eeqq
Mixed System $S_2$, specified by model (\ref{sdujcytsdfjy}) writes
\beqq
\bea{l}
 {\wemph{\|}} \prog{~\keyw{init} noise = n0}
\\  {\progpara}\prog{~{noise} = chi(\keyw{pre} noise,w)}
\\ {\progpara}\prog{~{w}}~ \keyw{\sim} ~\prog{mu}
\eea
\eeqq
And so on. 
The global model is $S_1\progpara S_2\progpara S_3\progpara S_4$.\eproof
\end{examp}

\subsubsection{Semantics}
We now give the semantics of the static fragment of \plang, namely ignoring in (\ref{leughloiulikuh}) the statements $\keyw{\prog{pre}}$ and $\keyw{\prog{init}}$. $\sem{S}$ denotes the semantics of \plang\ program $S$:
\beq
\hspace*{-5mm}\bea{ccrcl}
(i)&& \sem{\keyw{\prog{observe}}\; x} &=& (\cdot,\cdot,\{x\},x=c)
\\ 
(ii)&& \sem{x \,\keyw{\sim}\, P} &=&(\Omega_x,\psem{P},\{x\},x=\omega_x)
\\ 
(iii)&& \sem{x \,\keyw{\sim}\, P(e)} &=&
c\mapsto\sem{x \,\keyw{\sim}\, P(c)} \mbox{, where } c=e
\\ 
(iv)&& \sem{e= e'} &=& (\cdot,\cdot,\mbox{vars}(e)\cup\mbox{vars}(e'),e=e')
\\ 
(v)&& \sem{S_1 \,\progpara\, S_2} &=& \sem{S_1}\mpara\sem{S_2}
\eea
\label{leirughepui}
\eeq
In $(i)$, the semantics has no probabilistic part, and a single visible variable $x$ whose value $c$ is given, but left unspecified.
In $(ii)$, probability distribution $P$ is fixed; the semantics consists of the probability space $(\Omega_x,\psem{P})$, where $\Omega_x$ is a private copy of the domain of $x$ equipped with probability $\psem{P}$ and having generic element $\omega_x\in\Omega_x$; equation $x=\omega_x$ exposes $\omega_x$ for further interactions through $x$. In $(iii)$, the probability depends on an expression $e$, whose generic value is denoted by $c$; the semantics is the kernel mapping $c$ to $\sem{x \,\keyw{\sim}\, P(c)}$. Line $(iv)$ defines the semantics of equations; ``vars$(e)$'' denotes the set of variables involved in expression $e$; the semantics has no probabilistic part. Finally, $(v)$ makes this semantics structural.
Thanks to formula (\ref{eltrojheltor}) of Definition~\ref{lrtguiodtrhleguip}, it also defines the Factor Graph representing $S$.

The following fragment of (\ref{leirughepui}) is mapped to a Bayesian Network. In the following formulas, $\gsem{S}$ denotes the Bayesian Network defined by $S$, when it exists:
\begin{equation}\bea{cclcl}
(i)&& \gsem{\keyw{\prog{observe}}}&=&\{\source{x}\}
\\
(ii)&& \gsem{x \,\keyw{\sim}\, P}&=&\{x\}
\\
(iii)&& \gsem{x \,\keyw{\sim}\, P(e)}&=&
\mbox{vars}(e)\ra{\sem{x \,\keyw{\sim}\, P(e)}}\ra{x}
\\
(iv)&& \gsem{x=e}&=& \mbox{vars}(e)\ra{\sem{x=e}}\ra{x}
\\
(v)&& \gsem{S_1 \,\progpara\, S_2}&=&\gsem{S_1}\cup\gsem{S_2}
\eea
\label{ltrughiuyh}
\end{equation}
In $(i)$, $\source{x}$ denotes $x$ flagged with the condition that it must remain a source node in any of its environments. Application of Rule $(v)$ is subject to the following success conditions:
\begin{condit}[success conditions]
	\label{fukfysdkuy} \
	\begin{enumerate}
		\item The union $\gsem{S_1}\cup\gsem{S_2}$ possesses no circuit and satisfies the conditions of Definition~$\ref{lerifugeroigf}$, and
		\item The result keeps satisfying all inherited conditions $(i)$.
	\end{enumerate}
\end{condit}
These conditions ensure that parallel compositions are incremental.
The message Passing algorithm presented in Theorem~\ref{eriuhpwiu} allows source-to-source rewriting for mapping tree shaped non-directed Factor Graphs to directed Bayesian Networks. 
%
\begin{examp}
\rm
	\label{skduskufyk} 
The following picture displays, on the top, the Factor Graph associated to ${S_1}\;\progpara\;{S_2}\;\progpara\;{S_3}\;\progpara\;{S_4}$, and, on the bottom, the Bayesian Network resulting from applying the message passing algorithm---for better readability we show only shared variables:
\newcommand{\graphhor}{
~
{S_2}\frac{~~~}{~}f_n\frac{~~~}{~}{S_1}\frac{~~~}{~}v_n\frac{~~~}{~}{S_3}}
%
	%
	\begin{center}{\input{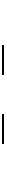_t}}
	\\ ~ \vspace*{-2mm} \\
	{$\bea{c}\cond{y}{{S_4}} \\ \uparrow \\ y_n \\ \uparrow \\
		~\cond{f}{{S_3}}\la{f_n}\la
	{\mathbf{S}_1}\ra{v_n}\ra\cond{v}{{S_2}}
	\eea$}
\end{center}
	\noindent where $\mathbf{S}_1\eqdef{S_1}\mpara\margin{{v}}{{S_2}}\mpara\margin{{f}}{{S_3}}\mpara\margin{{y}}{{S_4}}$.\eproof
\end{examp}
\begin{discussion}[if-then-else] \rm
	\label{sleifsubliu} 
In Example~\ref{fkdjhsdfkudyk}, system $S_1$ involves an ``if-then-else'' statement. Syntax (\ref{leughloiulikuh}), however, does not involve such statements. This means that ``if-then-else'' statements are seen by syntax (\ref{leughloiulikuh}) as one instance of ``$f$'', to which no particular attention is paid. The semantics of this ``$f$'' obviously depends on the value of the Boolean control signal. However, neither the factor graph, nor the Bayesian network associated to $S_1$, depend on which branch is active in this ``if-then-else'' statement. This is harmless if the focus is on modeling. Considering ``if-then-else'' and paying attention to it is definitely needed in probabilistic reasoning~\cite{DBLP:journals/toplas/ChatterjeeFNH18}, see Appendix~\ref{kuyfgksuy}. The same holds when performing inference or learning~\cite{DBLP:conf/fsttcs/HurNRS15}; see also the discussion of objective~\ref{skdufvbnfgklj} of probabilistic programming on page~\pageref{skdufvbnfgklj}.\eproof
\end{discussion}
So far we have presented models involving no dynamics. In the next section we move to our proposed formal model for dynamical systems:
Mixed Automata.


\section{Mixed Automata}
\label {roe87wyhfgbsehlrigu}
The  idea is simple:
we upgrade notions, from automata, to Probabilistic Automata, and to Mixed Automata:
\begin{enumerate}
	\item
Transitions $\trans{q}{\action}{q'}{}$, where $q$ and $q'$ are states and $\action$ is an action, correspond to automata.
\item
Upgrading them to \mbox{$\trans{q}{\action}{\proba'}{}\produces{}{q'}$}, where $\proba'$ is the next probabilistic state and $\produces{}{}$ denotes probabilistic sampling, yields Simple Probabilistic Automata following Segala and Lynch~\cite{SegalaL94,DBLP:conf/concur/LynchSV03}.
\item
Upgrading them further to $\trans{q}{\action}{\system'}{}\produces{}{q'}$, where $\system'$ is a Mixed System and $\produces{}{}$ denotes sampling, yields Mixed Automata.
\end{enumerate}

\subsection{Definition and properties}
The formal definition is introduced next. It uses the notation $\Systems(X)$, introduced at the end of Definition~\ref{slergiuhpiu}. We assume an underlying alphabet ${\Alphabet}$ of \emph{actions}.
\begin{definition}[Mixed Automaton]\label{def-iohmdp} A \emph{Mixed Automaton} is a tuple
$$M=({\alphabet},{X},q_0,\ra),$$ 
where: $\alphabet\subseteq\Alphabet$ is a finite set of actions,
$X$ is a finite set of variables having  domain $Q=\prod_{x\in{X}}Q_x$, $q_0 \in Q$ is the
  \emph{initial} state, and $\ra\;\subseteq\,{Q}{\times}{\alphabet}{\times}\Systems(X)$ is the \emph{transition relation}.
	We write
  $$\mbox{$\trans{q}{\action}{\system}{}$ (or
  {$\trans{q}{\action}{\system}{M}$ when we wish to make $M$ explicit)}}$$
	to mean
  $(q,\action,\system) \in \;\ra$.
We require that $M$ shall be \emph{deterministic}:
\beq
\mbox{for any pair
$(q,\action)\in{Q}{\times}\alphabet$, $\trans{q}{\action}{\system}{}$
and $\trans{q}{\action}{\system'}{}$ implies $\system {=} \system'$.}
\label{hgfcjgcvj}
\eeq
The \emph{sampling} of $M$ is its set of \emph{runs} $\trace$, which are finite sequences of chained transitions:
\beq
\trace&=& \fullprobatrans{q_0}{\action_1}{\system_1}{}{q_{1}}
\fullprobatrans{}{\action_2}{\system_2}{}{q_{2}}
\dots
\fullprobatrans{q_{k-1}}{\action_{k}}{\system_{k}}{}{q_{k}}\,,
\eeq
where Mixed Systems $\system_1,\dots,\system_k$ are consistent, and $\produces{\system}{q}$ is the sampling introduced in Definition~$\ref{slergiuhpiu}$.\eproof
\end{definition}
The transitions of Mixed Automata target  Mixed Systems, which combine nondeterminism with probabilities. Therefore, Mixed Automata capture nondeterminism despite Condition (\ref{hgfcjgcvj}).
\begin{example}\rm [comparing with classical notions] 
	\label{kfsudyfgkjuy} \rm 
	Let $(X_n)_{n{\geq}0}$ be a Markov chain with state space $Q$, initial state $q_0$, and transition probability $P(q'\,|\,{q})$. We can reformulate it as the  Mixed Automaton $M=({\alphabet},{X},q_0,\ra)$, where: $\alphabet$ is the singleton $\{\alpha\}$; variable $X$ has   domain $Q$; $\ra$ maps $(q,\action)$ to the purely probabilistic Mixed System of Example~\ref{wuydtfuty}, representing probability $q'\mapsto P(q'\,|\,{q})$ for given state $q$.\eproof
\end{example}
Like automata and Probabilistic Automata, Mixed Automata come equipped with a notion of  parallel composition, built on top of the parallel composition of Mixed Systems. The simplest idea is that the transitions of parallel composition $M_1\mpara{M_2}$ will take the form $\trans{q_1{\join}q_2}{\action}{\system_1'\mpara\system_2'}{}\produces{}{q'_1{\join}q'_2}$, where $q'_1{\join}q'_2$ and $\system_1'\mpara\system_2'$ are defined in (\ref{leriughpiu}) and Definition~\ref{lrtguiodtrhleguip}, respectively. In this simple construction, synchronizing the two transitions is by having them perform the same action $\action$. 

To be able to define the semantics of our \plang\ minilanguage, we will, however, need the more flexible synchronization mechanism of ``compatible actions''---this is known to be only a technical extension. We thus assume that the underlying alphabet ${\Alphabet}$ of actions is equipped with a commutative and associative \emph{join} partial operation \mbox{$\ajoin:\Alphabet\times\Alphabet\ra\Alphabet$}, where $\action_1\ajoin\action_2$ is defined whenever the two actions are \emph{compatible,} written $\action_1\acompat\action_2$. In the composition of Mixed Automata, the components synchronize on compatible actions and move to the parallel composition of target systems by performing the join of the two actions:
\begin{definition}[parallel composition]\label{rtghltughtui}
  Let $M_1$ and $M_2$ be two Mixed Automata having compatible initial states $q_{0,1}{\compat}q_{0,2}$. Their
  \emph{parallel composition} $M_1{\mpara}{M_2}$ has alphabet $\alphabet_1{\cup}\alphabet_2$, set of
  variables $X_1{\cup}X_2$, and initial state
  $q_{0,1}{\join}q_{0,2}$. Its transition relation $\longrightarrow_M$ is the minimal
  relation satisfying the following condition, where $\system_1 {\mpara} \system_2$ was defined in
	Definition~$\ref{lrtguiodtrhleguip}$:
  \[\!\!
	\left.\bea{c}
\trans{q_i}{\action_i}{\system_i}{M_i} \mbox{ for }i=1,2 \\
q_1\compat{q_2} \mbox{ ~and~ } \action_1\acompat{\action_2}\eea\right\}
\implies \trans{q_1\join{q_2}}{\action}{\system_1 {\mpara} \system_2}{M}, \mbox{ where } \action=\action_1\ajoin\action_2\,.\mbox{ \eproof}
\]
\end{definition}
The next important notion is that of (bi)simulation, which is central in automata theory. We upgrade it, from the basic notion for automata up to the extended notion for Mixed Automata:
\begin{enumerate}
	\item
In the context of automata, relation $\leq$ on pairs of states is a \emph{simulation} if it satisfies~\cite{Seg06-CONCUR}:
		\beqq
	\left.\bea{c}
	\trans{q_1}{\action}{q'_1}{} \\ q_1\leq{q_2}
	\eea\right\}\implies \exists q'_2:\left\{\bea{c}
	\trans{q_2}{\action}{q'_2}{} \\ q'_1 \leq {q'_2}
	\eea\right.
	\eeqq
	\item
	This definition is upgraded to Probabilistic Automata as follows~\cite{Seg06-CONCUR}:
		\beqq
	\left.\bea{c}
	\trans{q_1}{\action}{\proba'_1}{} \\ q_1\leq{q_2}
	\eea\right\}\implies \exists \proba'_2:\left\{\bea{c}
	\trans{q_2}{\action}{\proba'_2}{} \\ \proba'_1\leq^{P}{\proba'_2}
	\eea\right.
	\eeqq
	where $\leq^P$ is the \emph{lifting of $\leq$ to pairs of probabilistic states.} We have:
	\beq
	\mbox{
	\begin{minipage}{7.7cm}
		 $\proba'_1\leq^{P}{\proba'_2}$ ensures, for each $q'_1$ such that $\proba'_1\leadsto{q'_1}$, the existence of $q'_2$ satisfying $\proba'_2\leadsto{q'_2}$ and $q'_1\leq{q'_2}$.
	\end{minipage}
	} \label{lrtoghpruih}
	\eeq
	\item
	This definition will be further upgraded to Mixed Automata as follows:
		\beq
	\left.\bea{c}
	\trans{q_1}{\action}{\system'_1}{} \\ q_1\leq{q_2}
	\eea\right\}\implies \exists \system'_2:\left\{\bea{c}
	\trans{q_2}{\action}{\system'_2}{} \\ \system'_1\leq^{S}{\system'_2}
	\eea\right.
	\label{rtpioghpo}
	\eeq
	where $\leq^S$ is the \emph{lifting of $\leq$ to pairs of Mixed Systems.} We request:
	\beq
	\mbox{
	\begin{minipage}{7.9cm}
		 $\system'_1\leq^{S}{\system'_2}$ shall ensure, for each $q'_1$ such that $\system'_1\leadsto{q'_1}$, the existence of $q'_2$ satisfying $\system'_2\leadsto{q'_2}$ and $q'_1\leq{q'_2}$.
	\end{minipage}
	} \label{lrigtuhpoui}
	\eeq
\end{enumerate}
	Such a lifting is introduced next. Let $\system_1$ and $\system_2$ be two Mixed Systems.
	\begin{definition}[lifting relations on Mixed Systems states]
  \label{hrgfuihsk} Let $\simu\subseteq{Q_1}{\times}{Q_2}$ be any state relation. Mixed System relation {$\NMPlift{\simu}\subseteq\Systems(X_1){\times}\Systems(X_2)$} is the \emph{lifting} of $\simu$ if
  there exists a \emph{weighting} function
  \mbox{$w:\Omega_1{\times}\Omega_2\ra[0,1]$} such that:
\begin{enumerate}
\item \label{sggouigh} For every triple $(\omega_1,\omega_2,q_1)\in\Omega_1{\times}\Omega_2{\times}Q_1$ such
  that $w(\omega_1,\omega_2)>0$ and $\omega_1\,\cons_1\,{q_1}$, there
  exists $q_2\in{Q_2}$ such that $q_1\,\simu\,{q_2}$, and
   $\omega_2\,\cons_2\,{q_2}$;
\item \label{leiurlyui} Weighting $w$ projects to $\proba_1$ and $\proba_2$:
  $$\mbox{\hspace*{16mm}}\sum_{\omega_2}w(\omega_1,\omega_2){=}\proba_1(\omega_1) \mbox{ and }\sum_{\omega_1}w(\omega_1,\omega_2){=}\proba_2(\omega_2).\mbox{\hspace*{17mm}\eproof}$$
\end{enumerate}
\end{definition}
By construction, this definition for the lifting of state relations to relations on Mixed Systems satisfies (\ref{lrigtuhpoui}).
Note the existential quantifier in Condition~\ref{sggouigh}.  By
Condition~\ref{leiurlyui}, $w$ induces a probability on
$\Omega_1{\times}\Omega_2$. We write $\system_1\NMPlift{\simu}\system_2$
to mean $(S_1,S_2)\in\NMPlift{\simu}$.

\begin{discussion}[lifting and coupling] \rm
	\label{kwsedufyvgbskuy} Our lifting is a direct extension of the technique used in~\cite{Seg06-CONCUR} for Probabilistic Automata. In the context of probabilistic reasoning, the same technique was also extensively studied under the name of \emph{probabilistic coupling}~\cite{DBLP:conf/lpar/BartheEGHSS15,DBLP:journals/corr/abs-1710-09951}. Weighting function $w(\omega_1,\omega_2)$ of Definition~\ref{hrgfuihsk} transposes probabilistic coupling to our model of Mixed Automata in which nondeterminism and probability are combined. In a different community, 	``stochastic non-determinism'' was extensively studied through the notion of \emph{Non-deterministic labelled Markov process} in~\cite{DBLP:journals/mscs/DArgenioTW12,DBLP:journals/logcom/DoberkatT17}, in a categorical framework; the second reference encompasses continuous distributions (beyond discrete).
\end{discussion}
\begin{lemma}
  \label{egfuioehrpo} $\system_1\NMPlift{\simu}\system_2$ and
  $\system'_1{\equiv}\system_1$ together imply
  $\system'_1\NMPlift{\simu}\system_2$.
\end{lemma}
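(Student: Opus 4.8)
The plan is to mimic the two-case reduction used in the proof of Lemma~\ref{kerufgybob}, adapted to the fact that the present statement is one-directional and touches only the left argument of $\NMPlift{\simu}$. Since $\system'_1\equiv\system_1$ means, by Definition~\ref{oeruihytersd}, that the compressions $\compress{\system'}_1$ and $\compress{\system}_1$ are matched by the equivalence bijection $\varphi$, it suffices to prove three facts: \emph{(i)} for any Mixed System $\system$, $\;\system\NMPlift{\simu}\system_2\;$ implies $\;\compress{\system}\NMPlift{\simu}\system_2$; \emph{(ii)} conversely, $\;\compress{\system}\NMPlift{\simu}\system_2\;$ implies $\;\system\NMPlift{\simu}\system_2$; and \emph{(iii)} if two compressed systems $\system$ and $\system'$ are matched by the bijection $\varphi$ of Definition~\ref{oeruihytersd}, then $\system\NMPlift{\simu}\system_2$ implies $\system'\NMPlift{\simu}\system_2$. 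The conclusion then follows by chaining $\system_1\leadsto\compress{\system}_1\leadsto\compress{\system'}_1\leadsto\system'_1$, invoking (i), (iii), and (ii) in turn. In each of the three items the real work is to transport the weighting function $w$ of Definition~\ref{hrgfuihsk} and re-verify conditions~\ref{sggouigh} and~\ref{leiurlyui}.

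For \emph{(i)} I would set $\compress{w}(\compress{\omega}_1,\omega_2)\eqdef\sum_{\omega_1\in\compress{\omega}_1}w(\omega_1,\omega_2)$. Condition~\ref{leiurlyui} of Definition~\ref{hrgfuihsk} then follows by exchanging the two summations and using $\compress{\proba}_1(\compress{\omega}_1)=\sum_{\omega_1\in\compress{\omega}_1}\proba_1(\omega_1)$, while condition~\ref{sggouigh} is inherited: a positive $\compress{w}(\compress{\omega}_1,\omega_2)$ comes from a positive $w(\omega_1,\omega_2)$ with $\omega_1\in\compress{\omega}_1$, and $\compress{\omega}_1\compress{\cons}_1q_1$ forces $\omega_1\cons_1q_1$ because all members of a $\sim$-class relate to exactly the same states. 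For \emph{(ii)} I would redistribute the mass of each class proportionally to $\proba_1$: put $w'(\omega_1,\omega_2)\eqdef w(\compress{\omega}_1,\omega_2)\,\proba_1(\omega_1)\big/\compress{\proba}_1(\compress{\omega}_1)$ when $\compress{\proba}_1(\compress{\omega}_1)>0$ and $w'(\omega_1,\omega_2)\eqdef0$ otherwise; this is well posed because a class of zero probability carries no $w$-mass, and again conditions~\ref{sggouigh} and~\ref{leiurlyui} drop out of elementary computations. For \emph{(iii)}, $\varphi$ supplies a bijection between the sample spaces of the two compressed systems that preserves probabilities and the relation while keeping states fixed, so $w'(\omega'_1,\omega_2)\eqdef w(\varphi^{-1}(\omega'_1),\omega_2)$ works verbatim.

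I expect the only delicate points to be bookkeeping rather than mathematics: first, spelling out that Definition~\ref{oeruihytersd}, applied to two \emph{compressed} systems, really does furnish an honest probability- and relation-preserving bijection of sample spaces, so that the relabeling in (iii) is legitimate; and second, checking that the proportional-split decompression in (ii) is well defined, which rests precisely on the fact that a weighting assigns zero to any zero-probability outcome. Neither is a genuine obstacle. Conceptually, what makes the lemma work is that conditions~\ref{sggouigh} and~\ref{leiurlyui} of Definition~\ref{hrgfuihsk} cannot see the difference between $\sim$-equivalent outcomes: condition~\ref{sggouigh} only refers to the set $\{q\mid\omega_1\cons_1q\}$, which is constant on $\sim$-classes, and condition~\ref{leiurlyui} only constrains total masses; hence $\NMPlift{\simu}$ factors through compression, which is exactly what $\equiv$ collapses.
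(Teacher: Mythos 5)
Your proposal is correct and follows essentially the same route as the paper's proof: reduce to the two compression cases, transport the weighting by summing over $\sim$-classes in one direction and by the proportional split $w(\compress{\omega}_1,\omega_2)\,\proba_1(\omega_1)/\compress{\proba}_1(\compress{\omega}_1)$ in the other, and treat the compressed-vs-compressed case as a relabeling through the equivalence bijection. The only difference is presentational: you spell out the chaining and the bijection step that the paper dismisses as immediate.
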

See Appendix~\ref{elrgfuilyu} for a proof.\eproof

\begin{definition}[simulation]\label{def:simulation}
 Given two Mixed Automata
  $M_1,M_2$, we say that $M_2$ \emph{simulates} $M_1$,
  written $M_1 {\leq} M_2$, if they possess a \emph{simulation,} i.e., a relation $\leq \
  \subseteq \, Q_1 {\times} Q_2$ such that $q_{0,1} {\leq} q_{0,2}$\ and,
 for every pair $q_1 {\leq} q_2$ and every transition
$\transindex{q_1}{\action}{\system_1}{1}$,
  there exists a transition
  $\transindex{q_2}{\action}{\system_2}{2}$
  such that
  $\system_1\NMPlift{\leq}\system_2$, where $\NMPlift{\leq}$ denotes the lifting of $\leq$.
$M_1$ and $M_2$ are called \emph{simulation equivalent} if they simulate each other. $M_1$ and $M_2$ are called \emph{bisimilar} if there exists a relation $\sim\, \subseteq \, Q_1 {\times} Q_2$ such that both $\sim$ and its transpose are simulations.\eproof
\end{definition}
%
\begin{discussion}[simulation equivalence vs. bisimilarity]\rm 
	\label{kdufhvb} Despite the condition (\ref{hgfcjgcvj}) that the transition relation shall be deterministic, the two notions of ```simulation equivalence'' and ``bisimilarity'' differ. The reason is that nondeterminism is hidden behind the Mixed Systems targeted by transitions. Actually, we will prove in our forthcoming Theorem~\ref{elriguhui} that Segala's Probabilistic Automata~\cite{Seg06-CONCUR,SegalaL94,DBLP:conf/concur/LynchSV03}, which possess nondeterministic transition relations, can be embedded into Mixed Automata while preserving simulations.\eproof
\end{discussion}

The notion of simulation and its derived constructs are the core topic of the literature on automata and their probabilistic extensions. The reader is referred to the next section for a bibliographical discussion.
\begin{lemma}
	\label{glrtukghtrllsdukfg} Parallel composition preserves simulation:
	 $M'_1{\leq}{M_1}$ and $M'_2{\leq}{M_2}$ together imply $M'_1{\mpara}{M'_2}\leq{M_1}{\mpara}{M_2}$.
\end{lemma}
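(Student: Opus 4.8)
The plan is to work directly from the definitions of simulation (Definition~\ref{def:simulation}), parallel composition of Mixed Automata (Definition~\ref{rtghltughtui}), parallel composition of Mixed Systems (Definition~\ref{lrtguiodtrhleguip}), and the lifting of state relations (Definition~\ref{hrgfuihsk}). Let $\simu_1\subseteq Q'_1{\times}Q_1$ and $\simu_2\subseteq Q'_2{\times}Q_2$ be simulations witnessing $M'_1{\leq}M_1$ and $M'_2{\leq}M_2$. The natural candidate simulation for $M'_1{\mpara}M'_2\leq M_1{\mpara}M_2$ is the ``join'' relation
\[
\simu \eqdef \bigl\{(q'_1{\join}q'_2,\;q_1{\join}q_2)\mid q'_1\simu_1 q_1,\ q'_2\simu_2 q_2,\ q'_1{\compat}q'_2,\ q_1{\compat}q_2\bigr\}.
\]
First I would check that $\simu$ relates the initial states, which is immediate since $q_{0,i}'\simu_i q_{0,i}$ and compatibility of the initial states is part of the hypothesis of Definition~\ref{rtghltughtui}.

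Next, given a pair $q'_1{\join}q'_2\;\simu\;q_1{\join}q_2$ and a transition $\transindex{q'_1{\join}q'_2}{\action}{\system'}{M'_1\mpara M'_2}$, I would unfold Definition~\ref{rtghltughtui}: this transition arises from component transitions $\transindex{q'_i}{\action_i}{\system'_i}{M'_i}$ with $\action_i$ compatible and $\action=\action_1\ajoin\action_2$, and $\system'=\system'_1{\mpara}\system'_2$. Applying the simulations $\simu_i$, we obtain matching transitions $\transindex{q_i}{\action_i}{\system_i}{M_i}$ with $\system'_i\NMPlift{\simu_i}\system_i$. Since actions are matched exactly by $\simu_i$, the pair $(\action_1,\action_2)$ is still the same compatible pair, so Definition~\ref{rtghltughtui} gives a transition $\transindex{q_1{\join}q_2}{\action}{\system_1\mpara\system_2}{M_1\mpara M_2}$. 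It remains to prove the lifting condition $\system'_1{\mpara}\system'_2\;\NMPlift{\simu}\;\system_1{\mpara}\system_2$.

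For that final step, let $w_i:\Omega'_i{\times}\Omega_i\ra[0,1]$ be the weighting functions witnessing $\system'_i\NMPlift{\simu_i}\system_i$. I would set the product weighting
\[
w\bigl((\omega'_1,\omega'_2),(\omega_1,\omega_2)\bigr)\eqdef w_1(\omega'_1,\omega_1)\,w_2(\omega'_2,\omega_2)
\]
on $(\Omega'_1{\times}\Omega'_2){\times}(\Omega_1{\times}\Omega_2)$, which matches the Cartesian-product probability spaces $\proba'_1{\times}\proba'_2$ and $\proba_1{\times}\proba_2$ used in Definition~\ref{lrtguiodtrhleguip}. Condition~\ref{leiurlyui} (marginalization) follows by a Fubini-style factorization of the double sum from the corresponding property of each $w_i$. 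For Condition~\ref{sggouigh}: suppose $w((\omega'_1,\omega'_2),(\omega_1,\omega_2))>0$ and $(\omega'_1,\omega'_2)\,\cons'\,q'$ where, by the definition of $\cons'$ in (\ref{sidtycfgdity}), $q'=q'_1{\join}q'_2$ with $q'_1{\compat}q'_2$, $\omega'_1\cons'_1 q'_1$, $\omega'_2\cons'_2 q'_2$. Then $w_i(\omega'_i,\omega_i)>0$ for each $i$, so Condition~\ref{sggouigh} for $\system'_i\NMPlift{\simu_i}\system_i$ yields $q_i$ with $q'_i\simu_i q_i$ and $\omega_i\cons_i q_i$. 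The one genuinely delicate point---and the step I expect to be the main obstacle---is establishing that $q_1$ and $q_2$ are compatible, so that $q_1{\join}q_2$ exists and $(\omega_1,\omega_2)\,\cons\,(q_1{\join}q_2)$ with $(q'_1{\join}q'_2)\simu(q_1{\join}q_2)$. Compatibility of $q_1,q_2$ on a shared variable $x\in X_1\cap X_2$ need not follow automatically from $q'_i\simu_i q_i$ alone; it must be argued, presumably by exploiting that the simulation relations $\simu_i$ (or the liftings) respect the values of shared variables, or by restricting attention to simulations whose relations are compatible with variable identities. I would isolate this as a small compatibility lemma about the interaction of simulation relations with the shared-variable structure, prove it, and then the rest of the argument goes through routinely. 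The dual relation $\simu^{-1}$ being a simulation when each $\simu_i$ is symmetric gives the bisimulation version for free, though that is not needed for the statement as phrased.
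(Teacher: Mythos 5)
Your overall strategy coincides with the paper's (Appendix~\ref{wroifuwgopi}): take the componentwise product relation as the candidate simulation, decompose a transition of $M'_1{\mpara}M'_2$ into component transitions via Definition~\ref{rtghltughtui}, match each component using $\simu_i$, and recompose. You actually go further than the paper on one point: the paper's proof establishes the existence of the matching transition $\trans{q_1\join q_2}{\action}{\system_1\mpara\system_2}{}$ and then only checks that $\system_1\mpara\system_2$ is consistent; it never verifies the lifting condition $\system'_1{\mpara}\system'_2\,\NMPlift{\simu}\,\system_1{\mpara}\system_2$ that Definition~\ref{def:simulation} requires. Your product weighting $w=w_1\cdot w_2$ is the right construction for that missing step, and your verification of the marginalization condition is routine and correct.

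The compatibility issue you isolate is a genuine gap, and you are right to flag it as the crux: Condition~\ref{sggouigh} applied separately to the two components produces some $q_1$ with $q'_1\simu_1 q_1$ and some $q_2$ with $q'_2\simu_2 q_2$, via two independent existential quantifiers, and nothing in Definition~\ref{hrgfuihsk} or~\ref{def:simulation} forces $q_1$ and $q_2$ to agree on $X_1\cap X_2$, so $q_1\join q_2$ need not exist. Be aware that the paper does not close this gap either: its consistency argument ("$\produces{\system'_i}{q'_i}$ for compatible $q'_i$ implies $\produces{\system_i}{q_i}$, hence $\system_1\mpara\system_2$ is consistent") silently assumes the very same compatibility of the produced $q_1,q_2$. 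So your proposal, as written, is incomplete at exactly the point where the paper's own proof is also silent; to finish honestly one needs an additional hypothesis or lemma to the effect that the simulations $\simu_i$ preserve the values of shared (interface) variables, so that $q'_i\simu_i q_i$ forces $q_i$ to agree with $q'_i$ on $X_1\cap X_2$ and hence $q'_1\compat q'_2$ propagates to $q_1\compat q_2$. You correctly diagnosed this; the remaining work is to state and discharge that compatibility lemma rather than leaving it as a presumption.
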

See Appendix~\ref{wroifuwgopi} for a proof.\eproof
\begin{discussion}[Mixed Automata are causal in time]\rm
	\label{dlfviujnbvl} 
Mixed Automata 
\linebreak
remain a \emph{causal} model in time, since the current transition depends on the past, not on the future. Consequently, Mixed Automata cannot be used to specify acausal estimation problems, e.g., estimating unmeasured variable $z_k$ based on observations of $X_0,\dots,X_k,\dots,X_N$.	
To perform this, we must ``unfold time as space'', i.e., regard $X_0,\dots,X_N$ as a $(N{+}1)$-tuple of variables, not as successive occurrences in time of variable $X$. 
Note that the transition relations of Mixed Automata inherit, from Mixed Systems, the Bayesian Calculus and the notions of Factor Graph and Bayesian Network.
\end{discussion}

\subsection{Mixed Automata for the semantics of {\sf ReactiveBayes} }
\label{ikfuygskfuj}
In this section we first give the semantics of the full \plang\ minilanguage (\ref{leughloiulikuh}) in terms of Mixed Automata. Recall that the semantics of the static part of the language was given in (\ref{leirughepui},$(i)$--$(v)$).

\myparagraph{Notations}
To every variable $x$, we associate its successive \emph{previous versions} $\preset{x},\ppreset{2}{x},\ppreset{3}{x},\dots$, where
\beq
\ppreset{(n+1)}{x}\eqdef\preset{(\ppreset{n}{x})}&\mbox{ and }&Q_{\preset{x}}=Q_x\,.
\label{sjcuhsgdvjcuhy}
\eeq
Then, we define 
\beq
\preset{e}(x) &\eqdef& e(\preset{x})
\label{jdseytdsejy}
\eeq
as being the expression $e$ in which every variable $x$ is replaced by its previous version $\preset{x}$.
We will use the Mixed System $({x}{=}q_x)$, defined in (\ref{elriuei}): this system has trivial probabilistic part, variable $x$, and enforces the value $q_x$ for it.\eproof 

We begin with  delay $\prog{\keyw{pre}}$ and initialization $\prog{\keyw{init}}$:
\beq\bea{ccrcl}
(vi) &&\sem{\keyw{\prog{pre}}\; x} &=& \bigl(\,\{\true\}\,,\,\{x,\preset{x}\}\,,\,-\,,\{\trans{q}{\true}{(\preset{x}{=}q_x)}{}\mid\forall{q}\in{Q}\}\bigr)
\\ [2mm]
(vii) &&\sem{\keyw{\prog{init}}\; x=c} &=&
\bigl(\,\{\true\}\,,\,\{x\}\,,\,c\,,\,\trans{c}{\true}{\nil}{} \mbox{ and }\trans{\epsilon}{\true}{\nil}{}\,\bigr)
\\ [2mm]
\eea
\label{ekuygoutrhpgu}
\eeq
The semantics of $\prog{\keyw{pre}}$ is stated in $(vi)$. It is the Mixed Automaton with trivial action alphabet (singleton $\{\true\}$), two variables $x$ (receiving the current value) and $\preset{x}$ (delivering the previous value), an undefined initial state, and the set of  transitions
$$\trans{q}{\true}{(\preset{x}=q_x)}{}\,,$$
where $q$ ranges over the set of all states and $q_x$ is the $x$-coordinate of $q$---this transition relation formalizes the constraint that $(\prog{pre}\;x)_n$ holds the value of $x_{n-1}$.

Since the initial state is undefined in the delay statement, a specification of the initial value is required, by using initialization statement $\prog{\keyw{init}}$. Its semantics is stated in $(vii)$, where $\nil$ is the trivial Mixed System defined in (\ref{htdwercfhstydr}). This Mixed Automaton possesses $x$ as its only variable, $c\in{Q_x}$ as its initial state, and otherwise does nothing, i.e., sets no constraint on its environment.

\medskip

So far we have completed the semantics of \plang\ as defined in (\ref{leughloiulikuh}), for which actions were not used---only the trivial ``true'' action was used in the semantics. Since Mixed Automata is a richer framework, it can support the following richer language involving state machines, by adding the following syntax, with reference to (\ref{leughloiulikuh}):
\beq
\bea{lcl}
\action &::=& \preset{e} \mbox{, where $e$ has Boolean type}
\\
A &::=&
\keyw{\prog{{on}}}\;\action\; \keyw{\prog{then}}\; S \; \keyw{\prog{else}}\; S \mid \; A \progpara A
\eea
	\label{lersoghepug}
\eeq
Actions $\action$ are previous versions of expressions of Boolean type.
In the additional statement ``$\keyw{\prog{{on}}}\;\action\; \keyw{\prog{then}}\; S \; \keyw{\prog{else}}\; S$'', actions $\action$ and $\neg\action$ trigger the transition leading to the first and second system, respectively. If $\action$ is the constant ``true'', we simply write ${S}$ instead of ``$\keyw{\prog{{on}}}\;\mbox{true}\; \keyw{\prog{then}}\; S$''.

We now give the corresponding semantics ($\true$ denotes the Boolean value ``true'', and we refer the reader to Definition~\ref{slergiuhpiu} regarding $\nil$ and the distinguished state $\epsilon$):
\beq\hspace*{-8mm}\bea{ccrcl}
(viii) &&\sem{\prog{\keyw{on}}\;\action\; \keyw{\prog{then}}\; S \; \keyw{\prog{else}}\; S'} &=& \frac{
\mbox{$S$ and $S'$ have previous state $p$}
}{
\left(\bea{c}\{\action,\neg\action\}
\;,\;
X\cup{X'}
\;,\;
\cdot
\\ [1mm]
\bigl\{\trans{p}{\action}{S}{},\trans{p}{\neg\action}{S'}{}\bigr\}
\eea\right)
}
\\ [10mm]
(ix) && \sem{A_1 \,\progpara\, A_2} &=& \sem{A_1}\;\mpara\;\sem{A_2}
\eea
\label{ortghuplgh}
\eeq
The right hand side of $(viii)$ is an inference rule meaning {``\emph{numerator entails denominator}''}. By (\ref{jdseytdsejy}) and the syntax for actions in (\ref{lersoghepug}), action $\action$ in $(viii)$ is evaluated by using the previous state $p$. 
At a given instant, the previous state is known, and can thus be used as the source state of the two transitions. The initial state is left unspecified. 
Focus on the parallel composition $(ix)$. With reference to Definition~\ref{rtghltughtui}, we now formalize the compatibility relation $\acompat$ and the join operator $\ajoin$:
\beq
\mbox{
	  $\action_1\acompat\action_2$ always holds, and $\action_1\ajoin\action_2\eqdef\action_1\wedge\action_2$.
}
\label{letroighui}
\eeq
%

\section{Comparison with Segala's Probabilistic Automata}
\label{45oiuepiruh}

Probabilistic Automata (\pa)~\cite{Seg06-CONCUR,SegalaL94,DBLP:conf/concur/LynchSV03} were originally proposed by Segala and Lynch. To simplify our comparison, we discuss here the version of \pa\ with no consideration of internal actions. According to the classification made by Sokolova and de Vink~\cite{SokolovaV04}, we study the link with both the Simple (Segala) Probabilistic Automata and the (Segala) Probabilistic Automata. For the former, actions are selected  and then a transition to a probabilistic state is selected nondeterministically. For the latter, both the action and a  state are jointly selected, probabilistically. This distinction is referred to as reactive vs. generative models in~\cite{SokolovaV04}.

Simple Probabilistic Automata existed way before the work of Segala and Lynch~\cite{Seg06-CONCUR,SegalaL94,DBLP:conf/concur/LynchSV03}, in the community of applied mathematics and probability theory, where they are known under the name of \emph{Markov Decision Processes (MDP)}~\cite{bellman1957markovian,puterman2014markov}. In this context, the main considered problem is the synthesis of an \emph{optimal policy} to minimize some expected cost function on trajectories of the system. The minimization is over \emph{scheduling policies,} which are causal rules for selecting the next action given the past trajectory. Once this policy has been fixed, the resulting dynamics is a Markov Chain. Studies on (bi)simulation were more recently developed for MDP's~\cite{DBLP:journals/ai/GivanDG03}, 
and further developed to support robustness by defining metrics between finite MDP's~\cite{DBLP:journals/siamcomp/FernsPP11}.


In the following, $\Probas(Q)$ denotes the set of all probability distributions over the set $Q$.
Formally, we consider a tuple $P=(\alphabet,Q,q_0,\ra)$, where $\alphabet$ is the finite alphabet of actions, $Q$ is a finite  state space, $q_0{\in}Q$ is the initial state, and the probabilistic transition relation $\ra$ is defined in two different ways:
\beq
\mbox{\emph{Simple Probabilistic Automaton} (\spa)}&:& \ra\,\subseteq\,{Q}{\times}\alphabet{\times}\Probas(Q)
\label{otuighuith}
\\
\mbox{\emph{Probabilistic Automaton} ~\,(\pa)}&:& \ra\,\subseteq\,{Q}{\times}\Probas(\alphabet{\times}Q)
\label{lwreiukghlhliu}
\eeq
In the following definitions, relation $\leq^\Probas$ is the lifting, to probability distributions over $Q{\times}Q'$, of the relation $\leq$ over $Q{\times}Q'$---for the definition of the lifting $\leq^\Probas$, the reader can use Definition~\ref{hrgfuihsk} adapted by ignoring relations $\cons_1$ and $\cons_2$.

\subsubsection*{Details for \spa, model $(\ref{otuighuith})$} We write $\trans{q}{\action}{\mu}{P}$ to mean $(q,\action,\mu)\in\;\ra$ and $\produces{\mu}{q'}$ to mean that sampling $\mu$ returns next state $q'$. The sampling is: if $P$ is in state
$q{\in}Q$, performing $\action{\in}\alphabet$ leads to some target set
of probability distributions over $Q$, of which one is selected,
nondeterministically, and used to draw at random the next state $q'$. A \emph{simulation relation} is a relation $\leq\;\subseteq{Q{\times}Q'}$ such that, for any $q\leq{q'}$, the following holds: if $\trans{q}{\action}{\mu}{P}$, there exists $\mu'$ such that $\trans{q'}{\action}{\mu'}{P'}$ and $\mu\leq^\Probas{\mu'}$. The \emph{parallel composition} of \spa~\cite{DBLP:conf/concur/LynchSV03} is defined by: $P_1{\mpara}P_2=(\alphabet,Q,q_0,{\ra})$, where $\alphabet=\alphabet_1{\cup}\alphabet_2$, $Q=Q_1{\times}Q_2$, $q_0=(q_{0,1},q_{0,2})$, and $\trans{(q_1,q_2)}{\action}{\mu_1{\times}\mu_2}{}$ holds iff $\trans{q_i}{\action}{\mu_i}{i}$ for $i=1,2$.

\subsubsection*{Details for \pa, model $(\ref{lwreiukghlhliu})$}  We write $\trans{q}{}{\mu}{P}$ to mean $(q,\mu)\in\;\ra$ and $\produces{\mu}{(\action,q')}$ to mean that sampling $\mu$ jointly returns action $\action$ and next state $q'$.
The sampling is: $P$ being in state $q{\in}Q$ leads to some target set
of probability distributions over $\alphabet{\times}Q$, of which one is selected,
nondeterministically, and used to draw at random the next pair $(\action,q')$ of action and state. A \emph{simulation relation} is a relation $\leq\;\subseteq{Q{\times}Q'}$ such that, for any $q\leq{q'}$, the following holds: if $\trans{q}{}{\mu}{P}$, there exists $\mu'$ such that $\trans{q'}{}{\mu'}{P'}$ and $\mu\leq^\Probas{\mu'}$.

The \emph{parallel composition} $P=P_1{\mpara}P_2$ faces the following difficulty: there is a conflict between (1) the probabilistic choice of actions $\action_1$ and $\action_2$ in each component, and (2) the synchronization constraint on the pair $(\action_1,\action_2)$ possibly required by the parallel composition.

This difficulty does not exist if no synchronization constraint exists, e.g., if the composition of actions $\action=\action_1.\action_2$ is always defined. In this case, the parallel composition is straightforward: $\trans{(q_1,q_2)}{}{\mu_1{\times}\mu_2}{P}$ iff $\trans{q_i}{}{\mu_i}{P_i}$ holds for $i=1,2$. This kind of parallel composition does not capture synchronization, however.

In contrast, if strong synchronization is imposed, e.g., by requiring that $\action_1{=}\action_2$ whenever one of the two actions is shared by the two components---this is the policy followed in our model of Mixed Automata---, then the above conflict exists.
This conflict is usually resolved by adding a probabilistic scheduling policy specified through an auxiliary probability distribution, see the detailed discussion in~\cite{SokolovaV04} and references therein. A typical approach to compose the two transitions $\trans{q_i}{}{\mu_i}{P_i}\produces{}{(\action_i,q'_i)}, i=1,2$ is the following:
\begin{itemize}
	\item If synchronization constraint $\action_1=\action_2=\action$ happens to be satisfied, then the two transitions synchronize and $(q_1,q_2)$ leads to $(\action,(q'_1,q'_2))$ with probability $\mu_1(\action,q'_1)\times\mu_2(\action,q'_2)$.
	\item If both actions $\action_1$ and $\action_2$ are local $\action_i\not\in\alphabet_1{\cap}\alphabet_2, i=1,2$, then the synchronization constraint is not violated. However, since only one action is permitted at a time in \pa, one among the two transitions must be elected while the other one is freezed. This is achieved by tossing a (possibly biased) coin with parameter $\sigma\in(0,1)$, so that $(q_1,q_2)$ leads to $(\action_1,(q'_1,q_2))$ with probability $\mu_1(\action,q'_1)\times\sigma$ and $(q_1,q_2)$ leads to $(\action_2,(q_1,q'_2))$ with probability $\mu_2(\action_2,q'_2)\times(1-\sigma)$.
	\item Other cases are forbidden.
\end{itemize}
Collecting the outcomes that are not forbidden results in a transition of the form $\trans{q}{}{\bar{\mu}}{P}\produces{}{(\action,q')}$, where $\bar{\mu}$ is \emph{unnormalized}. A subsequent normalization is performed to get the final definition $\trans{q}{}{{\mu}}{P}\produces{}{(\action,q')}$ for the transitions of the parallel composition. The definition of this parallel composition thus requires specifying an additional probability distribution (the parameter $\sigma$ of the biased coin). Other variants for solving the same conflict all need such additional probability distributions---typically referred to as \emph{schedulers}.

\begin{discussion}[who comes first: nondeterminism or probability?] ~\\
\label{kruykuys} \rm 
The following question arises~\cite{DBLP:journals/entcs/WangHR19}: should nondeterminism be resolved \emph{prior} or \emph{after} probabilistic sampling? Since the selection of the performed action followed by that of one probability from a subset of $\Probas(Q)$ (for \spa{s}), or the selection of one probability from a subset of $\Probas(\alphabet{\times}Q)$ (for \pa{s}) is performed prior to probabilistic sampling, both \spa\ and \pa\ models follow the first alternative. Our model of Mixed Automata follows a schyzophrenic approach: actions are selected first, leading to a Mixed System in which nondeterminism is resolved at last (See point~\ref{sdxhtrsd} in Definition~\ref{slergiuhpiu})---one can thus say that nondeterminism is resolved ``first-and-last''. As we shall see in our forthcoming comparison, the main difference between our model and models from the \pa\ family is not in this ``prior vs. after'' issue, but rather in our handling of conditioning and parallel composition.\eproof
\end{discussion}

\subsubsection*{Comparison results}

The following theorems relate \spa\ and \pa\ to \mmdps\ (proofs are constructive).
\begin{theorem}[SPA vs. \mmdps] \
  \label{erlgfuierhlpiu} \label{lesriughp}
	\begin{enumerate}
		\item  \label{reufygeky} There exists a mapping $P{\mapsto}{M_P}$, from \spa\ to \mmdps, preserving both simulation and parallel composition: $P_1{\leq}{P_2}$ iff $M_{P_1}{\leq} M_{P_2}$, whereas $M_{P_1{\mpara}{P_2}}$ and
  $M_{P_1}{\mpara}M_{P_2}$ are simulation equivalent.
	\item \label{reoiufui} There exists a reverse mapping $M{\mapsto}{P_M}$, from \mmdps\ to \spa, preserving simulation. No reverse mapping exists, however, that preserves parallel composition.
	\end{enumerate}
\end{theorem}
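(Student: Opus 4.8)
We sketch two explicit constructions, closely inverse to one another.

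\medskip
\noindent\textbf{Part~\ref{reufygeky} (\spa\ $\to$ \mmdps).}
The obstacle to overcome is that in an \spa\ the nondeterministic choice among the target distributions of a pair $(q,\action)$ is made \emph{before} the probabilistic draw, whereas the sampling of a Mixed System (step~\ref{sdxhtrsd} of Definition~\ref{slergiuhpiu}) resolves its nondeterminism \emph{after} the draw; hence a single Mixed System over the state variable cannot faithfully render an \spa\ transition, and the plan is to \emph{split} each transition into two micro-steps. Let $M_P$ use a single variable with domain $Q\uplus\{[q,\action,\mu]\mid\trans{q}{\action}{\mu}{P}\}$ ($P$'s states together with one fresh \emph{branch state} per transition), the same alphabet and the same initial state $q_0$, and, for each state $q$ and each action $\action$ enabled at $q$: a transition $\trans{q}{\action}{\system^{\mathrm{sel}}_{q,\action}}{M_P}$, where $\system^{\mathrm{sel}}_{q,\action}$ is the Mixed System with trivial probability space and relation $\{(\ast,[q,\action,\mu])\mid\trans{q}{\action}{\mu}{P}\}$ (a pure nondeterministic choice of a branch, cf.\ Example~\ref{wuyetdfwueity}); and, for each branch state, a transition $\trans{[q,\action,\mu]}{\action}{\system_\mu}{M_P}$, where $\system_\mu$ is the purely probabilistic Mixed System realizing $\mu$, cf.\ Example~\ref{wuydtfuty}. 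This $M_P$ is deterministic.

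\medskip
\noindent Simulation is then preserved both ways, taking the same \spa-simulation $\simu$ on $Q_1{\times}Q_2$ extended to branch states by declaring $[q_1,\action,\mu]\,\simu\,[q_2,\action,\nu]$ whenever $q_1\simu q_2$ and $\mu\leq^\Probas\nu$ (and, conversely, the restriction to $Q_1{\times}Q_2$ of any $\mmdp$-simulation being an \spa-simulation). One uses twice an elementary fact: on a selection step both target systems have trivial probability space, so by Definition~\ref{hrgfuihsk} one has $\system^{\mathrm{sel}}_{q_1,\action}\NMPlift{\simu}\system^{\mathrm{sel}}_{q_2,\action}$ iff for every target $\mu$ of $q_1$ there is a target $\nu$ of $q_2$ with $[q_1,\action,\mu]\,\simu\,[q_2,\action,\nu]$; on a sampling step, after discarding the diagonal relations, $\system_\mu\NMPlift{\simu}\system_\nu$ is exactly $\mu\leq^\Probas\nu$. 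For parallel composition, $\system^{\mathrm{sel}}_1{\mpara}\system^{\mathrm{sel}}_2$ is again a trivial-probability system whose relation selects the pairs $([q_1,\action,\mu],[q_2,\action,\nu])$, while $\system_\mu{\mpara}\system_\nu=\system_{\mu\times\nu}$ by Example~\ref{wuydtfuty} and Definition~\ref{lrtguiodtrhleguip}; hence $M_{P_1}{\mpara}M_{P_2}$ and $M_{P_1{\mpara}P_2}$ agree up to the bijective renaming $([q_1,\action,\mu],[q_2,\action,\nu])\leftrightarrow[(q_1,q_2),\action,\mu{\times}\nu]$ of branch states (a product of distributions determines its factors, so this is a bijection), which is a bisimulation. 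The main obstacle is precisely this last point: making the two-step device commute with parallel composition, i.e.\ matching ``compose-then-split'' with ``split-then-compose'', requires careful bookkeeping on branch states, and one must verify that the naive ``collapse all targets into one Mixed System over $Q$'' — which does \emph{not} preserve simulation — is genuinely repaired by the splitting.

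\medskip
\noindent\textbf{Part~\ref{reoiufui} (\mmdps\ $\to$ \spa).}
The reverse map sends $M$ to the \spa\ $P_M$ with the same state space, alphabet and initial state, and, for the unique consistent transition $\trans{q}{\action}{\system}{M}$, the family of transitions $\trans{q}{\action}{\mu_f}{P_M}$ indexed by the \emph{resolutions} $f$ of $\system$, where $f:\consist{\Omega}\to Q$ is a choice function with $\omega\,\cons\,f(\omega)$ (such $f$ exist since $\system$ is consistent) and $\mu_f$ is the image of $\consist{\proba}$ under $f$. Simulation preservation follows by disintegrating a weighting function witnessing $\system_1\NMPlift{\simu}\system_2$ (restricted, if necessary, to the consistent parts) into couplings witnessing $\mu_{f_1}\leq^\Probas\mu_{f_2}$ for an appropriately chosen resolution $f_2$, and conversely.

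\medskip
\noindent For the negative part, the key point is that $\mpara$ of \mmdps\ performs conditioning — by Definition~\ref{lrtguiodtrhleguip} only consistent pairs survive, so the composed transition may target a distribution strictly more concentrated than the component priors — whereas $\mpara$ of \spa\ is the independent product $\mu_1{\times}\mu_2$ and therefore always retains the full product support. I would exhibit $M_1$ (on one action: a uniform draw of the state variable over $\{0,1,2\}$ with pairwise-distinguishing continuations at $0$, $1$, $2$) and $M_2$ (on the same action and the same shared variable: the constraint ``state $\in\{0,1\}$''): then $M_1{\mpara}M_2$ reaches the uniform distribution on $\{0,1\}$, while every \spa\ product of \spa-abstractions of $M_1$ and $M_2$ still reaches $2$ with positive probability and so carries a continuation that no state of $M_1{\mpara}M_2$ can simulate. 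Hence, for this pair, $P_{M_1{\mpara}M_2}$ and $P_{M_1}{\mpara}P_{M_2}$ cannot be simulation equivalent, whatever the (simulation-preserving) reverse map; the full example is deferred to the appendix.
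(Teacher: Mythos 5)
Your Part~2 is essentially the paper's own argument in sketch form: the map $M\mapsto P_M$ via resolutions (choice functions) of $\cons$ and the renormalized image measures of $\consist{\proba}$ is exactly the paper's construction, and your counterexample for the negative half rests on the same mechanism as the paper's (the $\mpara$ of \mmdps\ conditions on agreement of a shared variable, which no independent \spa\ product can reproduce). So I only comment on Part~1, where you genuinely diverge. The paper maps each pair $(q,\action)$ of an \spa\ to a \emph{single} Mixed System whose private space is the product $Q^{n}$ of one copy of $Q$ per target distribution, with $\proba$ the product of all those distributions and $\cons$ the relation ``$q'$ is one of the $n$ coordinates of $\omega$''; it then argues that this sampling and the \spa\ sampling produce identical outcomes, and that Definition~\ref{hrgfuihsk} restricted to purely probabilistic systems coincides with Segala's lifting. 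You instead split each \spa\ step into two micro-steps through fresh branch states $[q,\action,\mu]$: a trivial-probability selection system followed by a purely probabilistic draw. Your route costs auxiliary states and a duplicated action label per step, but it buys an exact match with the \spa\ notions: at each micro-step the two target systems are either both trivial-probability or both purely probabilistic, so $\NMPlift{\simu}$ literally reduces to ``every target $\mu$ has a matching $\nu$'' and to $\mu\leq^\Probas\nu$, respectively, and the product identity $\system_\mu\mpara\system_\nu=\system_{\mu\times\nu}$ makes the parallel-composition claim a renaming of branch states. By contrast, in the paper's one-shot construction the lifting demands a coupling of the joint law of all $n$ independent draws on one side with the joint law of the $m$ draws on the other, such that with probability one \emph{every} coordinate of $\omega_1$ is $\simu$-dominated by some coordinate of $\omega_2$ \emph{simultaneously}; when two distinct targets of $q_1$ must be matched by the same target of $q_2$, this is strictly stronger than the per-distribution condition of \spa\ simulation, and the paper's justification (``the definitions coincide for purely probabilistic mixed systems'') does not cover that case, since the constructed systems are not purely probabilistic for $n>1$. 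So your splitting is not merely an alternative; it is the construction for which the ``iff'' on simulation actually goes through routinely. Two small caveats on your side: the ``naive collapse over $Q$'' you dismiss is not what the paper does (it collapses over $Q^{n}$), so your motivating remark slightly misses its target; and in the negative half your argument, like the paper's, only rules out reverse mappings already constrained to be faithful (you say ``simulation-preserving''), an implicit restriction without which the bare statement ``no reverse mapping preserves parallel composition'' is not well posed.
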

See Appendices~\ref{leirugheltorwu} and \ref{loitrghkliughip} for proofs of Statements~\ref{reufygeky} and~\ref{reoiufui} of this theorem. 
The two mappings $P{\mapsto}{M_P}$ and $M{\mapsto}{P_M}$ are not opposite, which makes it possible for the two statements not to contradict  each other. The non-existence of a reverse mapping $M{\mapsto}{P_M}$ preserving parallel composition highlights that the difference in the parallel compositions, for SPAs vs. for Mixed Automata, is deep.
\begin{theorem}[PA vs. \mmdps]
	\label{elriguhui}  There exists a mapping $P{\mapsto}{M_P}$, from \pa\ to \mmdps, preserving simulation. Parallel composition, however, is not preserved.
\end{theorem}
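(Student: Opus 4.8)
Given $P=(\alphabet,Q,q_0,\ra)$ with $\ra\subseteq Q\times\Probas(\alphabet\times Q)$, I would have $M_P$ record a PA configuration as a pair (PA state, action just produced): its state space is $\alphabet\times Q$, its alphabet is the singleton $\{\star\}$, its initial state is $(\action_0,q_0)$ for an arbitrary $\action_0\in\alphabet$, and for each state $(\cdot,q)$ there is exactly one transition $\trans{(\cdot,q)}{\star}{\system_q}{M_P}$. Writing $\{\mu\mid(q,\mu)\in\ra\}=\{\mu_j\}_{j\in J_q}$, the Mixed System $\system_q$ has the single variable of domain $\alphabet\times Q$, private space $(\Omega_q,\proba_q)=\bigl(\prod_{j\in J_q}\supp\mu_j,\;\bigotimes_{j\in J_q}\mu_j\bigr)$, and relation $\omega\,\cons_q\,(\action',q')$ iff $\omega_j=(\action',q')$ for some $j$. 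Then $M_P$ is deterministic, hence satisfies (\ref{hgfcjgcvj}); $\system_q$ is consistent exactly when $q$ has an outgoing PA transition; and, since the residual nondeterminism in sampling $\system_q$ amounts to choosing which coordinate of $\omega$ to expose (committing, in effect, to one distribution $\mu_j$ and returning a $\mu_j$-distributed outcome), the runs of $M_P$ coincide with those of $P$ up to the inert action tag.

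\textbf{Simulation is preserved.} For the forward direction I would take a PA simulation $\le\subseteq Q_1\times Q_2$ and propagate it along $M_{P_1},M_{P_2}$: given $q_1\le q_2$ and the unique transition $\trans{(\cdot,q_1)}{\star}{\system_{q_1}}{}$, PA simulation supplies, for each $j\in J_{q_1}$, an index $f(j)\in J_{q_2}$ and a coupling $\nu_j$ of $(\mu_j,\mu'_{f(j)})$ concentrated on pairs $((\action',q'_1),(\action',q'_2))$ with $q'_1\le q'_2$; one then assembles from the family $(\nu_j)_j$ a weighting $w:\Omega_{q_1}\times\Omega_{q_2}\to[0,1]$ projecting to $\proba_{q_1}$ and $\proba_{q_2}$ such that, almost surely, the coordinates indexed by $(j,f(j))$ lie in $\supp\nu_j$ for every $j$. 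From such a $w$, Condition~\ref{sggouigh} of Definition~\ref{hrgfuihsk} is immediate (if $\omega_1\cons_{q_1}(\action',q'_1)$ then $(\action',q'_1)=\omega_{1,j}$ for some $j$, so $\omega_{2,f(j)}=(\action',q'_2)$ with $q'_1\le q'_2$ and $\omega_2\cons_{q_2}(\action',q'_2)$), whence $\system_{q_1}\NMPlift{\le}\system_{q_2}$ and $\le$, read as the relation on $(\alphabet{\times}Q_1)\times(\alphabet{\times}Q_2)$ pairing $(\action,q_1)$ with $(\action,q_2)$ iff $q_1\le q_2$, is a simulation of $M_{P_1}$ by $M_{P_2}$. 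The hard part, and the technical heart of the whole proof, is exactly the construction of $w$: gluing the $\nu_j$ while keeping the product marginals is routine when $f$ is injective, but requires genuine care otherwise — one must disintegrate the $\nu_j$ along their shared second marginals, which may force enriching the private space of $\system_{q_2}$ with extra independent copies of the $\mu'_{j'}$ (or replacing the product by a common refinement); getting this bookkeeping right, and dually reading a PA simulation off a simulation of the $M_{P_i}$ by forgetting the action tag and extracting the $(j,f(j))$-marginals (as in Theorem~\ref{lesriughp}), is where all the effort lies.

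\textbf{Parallel composition is not preserved.} Here one counterexample suffices, and it is robust to the choice of scheduler. Take $P_1$ with states $\{a_0,a_1\}$, alphabet $\{\alpha\}$, the single transition $(a_0,\delta_{(\alpha,a_1)})$ and $a_1$ a deadlock, and symmetrically $P_2$ with states $\{b_0,b_1\}$, alphabet $\{\beta\}$, the single transition $(b_0,\delta_{(\beta,b_1)})$ and $b_1$ a deadlock ($\alpha\ne\beta$). Under the standard interleaving parallel composition of PA recalled in Section~\ref{45oiuepiruh}, where non-shared actions are taken one at a time, moving both components past $a_0$ and $b_0$ takes two transitions — from $(a_0,b_0)$ a biased coin elects one local action, and only afterwards can the other fire — so $P_1\mpara P_2$, and hence $M_{P_1\mpara P_2}$, has runs with two chained transitions. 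By contrast each $M_{P_i}$ has runs with a single transition only (its lone Mixed System is purely probabilistic and lands in a deadlocked state), and since Definition~\ref{rtghltughtui} forces the two components of $M_{P_1}\mpara M_{P_2}$ to move simultaneously on every transition ($\star\acompat\star$, $\star\ajoin\star=\star$), $M_{P_1}\mpara M_{P_2}$ also has runs with a single transition only. A simulation $M_{P_1\mpara P_2}\le M_{P_1}\mpara M_{P_2}$ would lift each run of the former, step by step, to a run of the latter of the same length (by (\ref{lrigtuhpoui})), which is impossible; so for no scheduler are the two compositions even related by simulation, let alone simulation equivalent. The mismatch is structural — PA composition interleaves while the Mixed-Automata composition of the images is synchronous — so it is not an artefact of the particular mapping. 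The only real obstacle is therefore the positive part, namely the weighting-function argument above.
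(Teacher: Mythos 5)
Your mapping is, up to inessential presentational choices (a single product-typed state variable versus the paper's pair $\{\xi_\alphabet,\xi_Q\}$, supports rather than full copies of $\alphabet\times Q$), exactly the one used in the paper's proof: trivial action alphabet, one deterministic transition per state, private space the product of all distributions enabled at $q$, relation ``expose one coordinate''. Your strategy for the positive part is also the paper's: show that the \pa\ simulation relation itself lifts, by assembling a weighting on the two product spaces from the per-distribution couplings $\nu_j$. Where you differ is in being candid about the assembly. The paper simply writes $w(\omega,\omega')=\prod_{\mu}v_\mu(q_\mu,q'_\mu)$ and checks nothing; that formula only has the right marginals when the matching $\mu\mapsto\chi(\mu)$ of enabled distributions is a bijection. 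You correctly isolate the non-injective case as the crux, but your proposed repairs do not close it: enriching the private space of $\system_{q_2}$ changes the target of the mapping (and Lemma~\ref{egfuioehrpo} covers equivalence-replacement only in the first argument of $\NMPlift{\simu}$), while disintegrating along a shared second marginal correlates the $\omega$-coordinates $j_1,j_2$ with $f(j_1)=f(j_2)$ and so destroys the product first marginal. The obstruction is in fact real, not bookkeeping. Take $q$ with two enabled distributions $\mu_1=(\frac{2}{3},\frac{1}{3})$ and $\mu_2=(\frac{1}{3},\frac{2}{3})$ over successors $\{x,y\}$, $q'$ with the single enabled distribution $\mu'$ uniform over $\{u,v,z\}$, and $\simu$ relating $x$ to $u,z$ and $y$ to $v,z$: this is a \pa\ simulation, yet Condition~\ref{sggouigh} of Definition~\ref{hrgfuihsk} forces any admissible $w$ to send the event $\{\omega\in\{(x,y),(y,x)\}\}$, of probability $\frac{4}{9}+\frac{1}{9}=\frac{5}{9}$, entirely to $\omega'=z$, which has probability $\frac{1}{3}$. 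So no weighting exists and $\simu$ itself does not lift; one would have to pass to a coarser relation on the images, which neither you nor the paper does. In short, the gap you flag is genuine, it is present (unacknowledged) in the paper's own argument, and your sketched fixes as stated do not resolve it.

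On the negative part you do strictly more than the paper, whose appendix says only ``we need to prove the positive statement'' and leaves non-preservation of parallel composition to the informal scheduler discussion of Section~\ref{45oiuepiruh}. Your two-component deadlock example is a correct witness under that interleaving semantics: $M_{P_1\mpara P_2}$ has runs of length two, whereas Definition~\ref{rtghltughtui} forces both components of $M_{P_1}\mpara M_{P_2}$ to fire synchronously and deadlock after one step, and (\ref{lrigtuhpoui}) propagates run length along any simulation. This part of your proposal is sound and fills an omission in the paper.
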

See Appendix~\ref{klueirghirugh} for a proof.

Due to Statement~\ref{reoiufui} of Theorem~\ref{erlgfuierhlpiu} and the existence of an embedding \spa$\ra$\pa~\cite{SokolovaV04} preserving simulation, a reverse mapping exists, from \mmdps\ to \pa.


In~\cite{SokolovaV04}, it is proved that \spa\ can be embedded into \pa, by simply ``pushing'' actions, from occurring prior to probabilistic choice to being part of probabilistic choice (in which case alternatives to emitting action $\action$ sum up to probability $1$). So, it seems unnecessary to study the embeddings \spa\,$\ra$\,Mixed Automata and \pa\,$\ra$\,{Mixed Automata} separately, since mapping the second one seems sufficient. This is, however, not a good idea, since the two embeddings differ, in that parallel composition is preserved for \spa\ but not for \pa.

\begin{discussion}[More on comparing \spa/\pa\ and Mixed Automata]
	\label{kuygskuy}\rm 
So far Theorems~\ref{erlgfuierhlpiu} and~\ref{elriguhui} compare \spa/\pa\ and Mixed Automata regarding the core notions of \pa, namely simulation and parallel composition. 
Conditioning is not at all considered in \pa\ theories---this indeed is the reason for them to have problems when handling synchronization in the parallel composition. We do not see how factor graphs can be reflected in \pa\ theories.
In contrast, these concepts 
are naturally supported by our model of Mixed Automata. In addition, our model offers the classical concepts of \pa\ theories, namely simulation and equivalence.\eproof
\end{discussion}

\section{{Other related work}}
\label{fdkjvblk}

So far we discussed work closely related to the different topics we covered. In this section we broaden our discussion by considering side topics relevant to our study.

Regarding semantic studies, we did not address \emph{denotational semantics}---our sampling (Definition~\ref{slergiuhpiu}) is an operational semantics. By denotational semantics, we mean a mathematical characterization of the set of all traces that can be produced by the considered system. The subject was indeed addressed in core mathematical probability theory---it was not called this way---with the Kolmogorov extension theorem: this theorem gives the denotational semantics of a sequence of independent identically $\mu$-distributed random variables as a probability space $(\Omega,\cF,\proba)$, where $\Omega$ is the set of trajectories, $\cF$ the associated product $\sigma$-algebra, and $\proba=\mu^\bN$, whose existence and uniqueness follows from this extension theorem. Since the $1970$'s, mathematicians in probability theory gave a denotational semantics (this term was not used) to stochastic differential equations in a very general setting, see e.g, the seminal paper~\cite{stroock1972}. In our context of nondeterministic/probabilistic dynamical systems, the task was not really investigated by mathematicians, and one should rather look at the literature closer to computer science. The seminal paper by Kozen \cite{DBLP:journals/jcss/Kozen81} defines two kinds of semantics of simple imperative probabilistic programs. The first semantics has finite horizon $[0,S]$ where $S$ is a stopping time (causally defined random time) and closely follows probability theory with its construction of probability spaces of program traces; the second semantics, advocated by the author, is more denotational, uses Scott-like techniques of continuous linear operators on a Banach space of measures, and supports infinite traces, see also~\cite{TixKeimelPlotkin2009,DBLP:conf/icse/GordonHNR14}. This approach was extended in~\cite{DBLP:conf/lics/JonesP89,DBLP:conf/birthday/KatoenGJKO15,DBLP:journals/pacmpl/DahlqvistK20} in order to provide semantics to the \emph{observe} statement present in most modern probabilistic programming languages.
In~\cite{DBLP:conf/esop/BorgstromGGMG11}, the semantics of a functional language supporting mixtures of continuous and discrete distributions and dedicated to certainly terminating programs, is
specified as \emph{measure transformers}, describing how the program itself propagates the distribution of the probabilistic inputs.

Major probabilistic programming languages do offer \emph{recursion}~\cite{JSSv076i01,meent2018introduction}, all of them offer \emph{while loops}. These features raise the issue of possible non termination. Non terminating while loops are the essence of~\cite{rppl_pldi20}. We did not consider \emph{recursion} in its full generality, but only under the limited form of non-terminating time-recursion, with Mixed Automata. Actually, time-recursion is the most widely used form of recursion considered in statistics and learning.

\emph{Inference and learning} are the main concerns of probabilistic programming. Due to the generality of the considered models, Monte-Carlo based inference algorithms are preferred~\cite{DBLP:conf/icfp/BorgstromLGS16,DBLP:conf/fsttcs/HurNRS15,JSSv076i01,DBLP:journals/corr/abs-1206-3255,dippl}. Nondeterminism, which is supported by probabilistic languages, breaks the stationarity (or time-invariance) of the specified statistical models. This is a source of difficulties when invoking limit theorems of probability theory to support learning algorithms~\cite{DBLP:conf/fsttcs/HurNRS15}. We did not consider learning in this work. Clearly, our model of Mixed Automata would face the same challenge if inference were considered. Extension of model based IOCO \emph{testing} with probabilities was considered in~\cite{DBLP:journals/fac/GerholdS18}---this is a different subject than statistical testing in the sense of~\cite{lehmann2005testing}.

In Section~\ref{45oiuepiruh},
we have shown that  \mmdps\ subsume \pa. Tutorial~\cite{SokolovaV04}
investigates more variants of \pa. We conjecture that similar results hold for
these as well: mappings exist that preserve simulation but not parallel
composition.
Abstract Probabilistic Automata~\cite{DBLP:journals/iandc/DelahayeKLLPSW13} are an \emph{interface model,} aiming to support specification, not programming. In addition to parallel composition, Abstract Probabilistic Automata offer \emph{refinement} and possess Probabilistic Automata as their \emph{models}, two concepts irrelevant to our study.


%

In our work we have considered only automata, whose dynamics is indexed by discrete time $n$. \emph{Equipping true concurrency models with probability} was classical for some net models. Free choice (or confusion free) nets are models for which this is rather simple; since choices remain local and statically defined, it is easy to turn them into probabilistic choices. For event structures with confusion, however, this is no longer the case: concurrency interferes with choice, making the latter dynamically defined. This makes it intricate, to equip choices with probabilities while maximally preserving concurrency. 
First constructions were proposed in~\cite{DBLP:conf/fossacs/AbbesB05,DBLP:journals/iandc/AbbesB06,DBLP:journals/tcs/AbbesB08,DBLP:conf/fossacs/AbbesB09}, based on the notion of \emph{branching cell,} capturing the above difficulty. Infinite event structures are supported (with restrictions) for which the law of large numbers is proved. 
Drawbacks are: 1) different sequences of events corresponding to the same configuration may be given different probabilities, and 2) the overall probability is globally defined, hence no parallel composition can be proposed. 
A different construction was proposed for occurrence nets in~\cite{DBLP:journals/lmcs/BruniMM19,DBLP:journals/tcs/BruniMM20}, addressing the above drawback. The net is augmented with ``negative places'', thus enforcing supplementary causalities with the result of deferring choices until they become local. Through the notion of statically defined \emph{s-cell,} the so augmented net can be given probabilistic choices meeting full concurrency, and parallel compositions of such nets is supported. In turn, the construction of the negative places works for finite nets only. In~\cite{DBLP:journals/tcs/BruniMM20}, a link of such augmented nets is established with Bayesian networks, thus providing a result similar to ours in Section~\ref{selrujgnhelru}.
Finally,~\cite{DBLP:journals/corr/abs-1305-5239,DBLP:journals/iandc/Abbes17,DBLP:journals/deds/Abbes19} study trace monoids by equipping them with probabilities derived from local specifications, using analytic combinatoric techniques. As far as we know, this is the only approach supporting true concurrency with probabilistic choice and parallel composition, for infinite traces. Concurrency makes everything definitely harder.

\section{{Conclusion}}
\label{eraspgfuisehpiu}

We developed the model of Mixed (Probabilistic-Nondeterministic) Automata that subsumes  nondeterministic automata, probabilistic automata, and graphical probabilistic models. In a Mixed Automaton, transitions are triggered by actions and map states to Mixed Systems, from which the next state is sampled.

Mixed Systems are stateless and involve no dynamics. They combine nondeterminism and probability in a simple setting, providing an elegant theory of equivalence and a parallel composition. We proposed the notion of Mixed Kernel equipped with an incremental composition. We generalized Bayes formula by extending, to Mixed Systems and Mixed Kernels, the notions of marginal and conditional probabilities. The parallel composition of Mixed Systems naturally brings a notion of graphical structure, which subsumes Factor Graphs; similarly, the incremental composition of Mixed Kernels supports an extension of Bayesian Networks. Message passing algorithms allow for transforming tree-shaped Factor Graphs to Bayesian Networks, as already known for the classical notions. To summarize, our model extends graphical probabilistic models to a framework in which nondeterminism and probabilities can be freely combined. This framework also subsumes Dempster's belief theory. 

On top of Mixed Systems, we defined Mixed Automata and equipped them with a simulation and a parallel composition where probabilistic parts of systems can interact. This is in contrast to existing models of probabilistic automata, which do not support conditioning.
It would make sense to develop an \emph{interface theory} having Mixed Automata as models, along the lines of Abstract Probabilistic Automata~\cite{DBLP:journals/iandc/DelahayeKLLPSW13}. We believe that the simplicity of Mixed Systems makes them an interesting candidate for the semantics of probabilistic programs---there is still a long way to go before justifying this claim. 


To avoid technicalities, we decided to restrict ourselves to the consideration of finite or denumerable probability spaces. This makes the definition of support of a probability and conditional probability straightforward. Since conditioning is the heart of our approach, relaxing this restriction is far from obvious, with a deep revisiting of the notion of \emph{consistency} for Mixed Systems. In the last appendix of~{\cite{REPORT}}, we give hints for such an extension.

We did not investigate decidability and complexity issues, however, neither we paid attention to effectiveness.
Handling constraints $\cons$ is the first difficulty. To reason on control, we could keep solving simple (e.g., Boolean) constraints, e.g., by distinguishing, in our model syntax, if-then-else statements. Other constraints may be abstracted by their associated directed or nondirected bipartite graph. Then, techniques such as the \emph{conditional dependency graphs} of synchronous languages~\cite{DBLP:journals/pieee/BenvenisteCEHGS03} could be adapted. 

We did not investigate either the design of learning and inference algorithms, a central motivation of probabilistic programming. When considering this subject, we would encounter the problem of correct Monte-Carlo sampling in learning algorithms, extensively studied in~\cite{DBLP:conf/fsttcs/HurNRS15}. In our context, this amounts to 1) identifying time-invariant model fragments, 2) applying limit theorems to them, and finally, 3) combining the results to derive learning algorithms for Mixed Systems or Automata models.

\myparagraph{Acknowledgements} The reviewers are gratefully thanked for pointing weaknesses and suggesting important bibliographical items while commenting the original version.

\bibliographystyle{plain}\bibliographystyle{spmpsci}      
\bibliography{proba}

\begin{thebibliography}{10}

\bibitem{DBLP:journals/corr/abs-1305-5239}
Samy Abbes.
\newblock Markov two-components processes.
\newblock {\em Logical Methods in Computer Science}, 9(2), 2013.

\bibitem{DBLP:journals/iandc/Abbes17}
Samy Abbes.
\newblock Synchronization of bernoulli sequences on shared letters.
\newblock {\em Inf. Comput.}, 255:1--26, 2017.

\bibitem{DBLP:journals/deds/Abbes19}
Samy Abbes.
\newblock Markovian dynamics of concurrent systems.
\newblock {\em Discrete Event Dynamic Systems}, 29(4):527--566, 2019.

\bibitem{DBLP:conf/fossacs/AbbesB05}
Samy Abbes and Albert Benveniste.
\newblock Branching cells as local states for event structures and nets:
  Probabilistic applications.
\newblock In Vladimiro Sassone, editor, {\em Foundations of Software Science
  and Computational Structures, 8th International Conference, {FOSSACS} 2005,
  Held as Part of the Joint European Conferences on Theory and Practice of
  Software, {ETAPS} 2005, Edinburgh, UK, April 4-8, 2005, Proceedings}, volume
  3441 of {\em Lecture Notes in Computer Science}, pages 95--109. Springer,
  2005.

\bibitem{DBLP:journals/iandc/AbbesB06}
Samy Abbes and Albert Benveniste.
\newblock True-concurrency probabilistic models: Branching cells and
  distributed probabilities for event structures.
\newblock {\em Inf. Comput.}, 204(2):231--274, 2006.

\bibitem{DBLP:journals/tcs/AbbesB08}
Samy Abbes and Albert Benveniste.
\newblock True-concurrency probabilistic models: Markov nets and a law of large
  numbers.
\newblock {\em Theor. Comput. Sci.}, 390(2-3):129--170, 2008.

\bibitem{DBLP:conf/fossacs/AbbesB09}
Samy Abbes and Albert Benveniste.
\newblock Concurrency, sigma-algebras, and probabilistic fairness.
\newblock In Luca de~Alfaro, editor, {\em Foundations of Software Science and
  Computational Structures, 12th International Conference, {FOSSACS} 2009, Held
  as Part of the Joint European Conferences on Theory and Practice of Software,
  {ETAPS} 2009, York, UK, March 22-29, 2009. Proceedings}, volume 5504 of {\em
  Lecture Notes in Computer Science}, pages 380--394. Springer, 2009.

\bibitem{DBLP:conf/lpar/BartheEGHSS15}
Gilles Barthe, Thomas Espitau, Benjamin Gr{\'{e}}goire, Justin Hsu, L{\'{e}}o
  Stefanesco, and Pierre{-}Yves Strub.
\newblock Relational reasoning via probabilistic coupling.
\newblock In Martin Davis, Ansgar Fehnker, Annabelle McIver, and Andrei
  Voronkov, editors, {\em Logic for Programming, Artificial Intelligence, and
  Reasoning - 20th International Conference, {LPAR-20} 2015, Suva, Fiji,
  November 24-28, 2015, Proceedings}, volume 9450 of {\em Lecture Notes in
  Computer Science}, pages 387--401. Springer, 2015.

\bibitem{rppl_pldi20}
G.~Baudart, L.~Mandel, E.~Atkinson, B.~Sherman, M.~Pouzet, and M.~Carbin.
\newblock Reactive probabilistic programming.
\newblock In {\em Conference on Programming Language Design and Implementation
  (PLDI'20)}, London, UK, June 2020.
\newblock To appear.

\bibitem{bellman1957markovian}
Richard Bellman.
\newblock A markovian decision process.
\newblock {\em Journal of Mathematics and Mechanics}, 6(5):679--684, 1957.

\bibitem{DBLP:journals/pieee/BenvenisteBCCPP18}
Albert Benveniste, Timothy Bourke, Beno{\^{\i}}t Caillaud, Jean{-}Louis
  Cola{\c{c}}o, C{\'{e}}dric Pasteur, and Marc Pouzet.
\newblock Building a hybrid systems modeler on synchronous languages
  principles.
\newblock {\em Proceedings of the {IEEE}}, 106(9):1568--1592, 2018.

\bibitem{DBLP:journals/pieee/BenvenisteCEHGS03}
Albert Benveniste, Paul Caspi, Stephen~A. Edwards, Nicolas Halbwachs, Paul~Le
  Guernic, and Robert de~Simone.
\newblock The synchronous languages 12 years later.
\newblock {\em Proceedings of the {IEEE}}, 91(1):64--83, 2003.

\bibitem{BenvenisteLFG95}
Albert Benveniste, Bernard~C. Levy, Eric Fabre, and Paul~Le Guernic.
\newblock A calculus of stochastic systems for the specification, simulation,
  and hidden state estimation of mixed stochastic/nonstochastic systems.
\newblock {\em Theor. Comput. Sci.}, 152(2):171--217, 1995.

\bibitem{REPORT}
Albert Benveniste and Jean-Baptiste Raclet.
\newblock {Mixed Nondeterministic-Probabilistic Automata: Blending graphical
  probabilistic models with nondeterminism}, 2022.
\newblock {Report to appear}.

\bibitem{blackwell1956}
David Blackwell.
\newblock On a class of probability spaces.
\newblock In {\em Proceedings of the Third Berkeley Symposium on Mathematical
  Statistics and Probability, Volume 2: Contributions to Probability Theory},
  pages 1--6, Berkeley, Calif., 1956. University of California Press.

\bibitem{DBLP:conf/esop/BorgstromGGMG11}
Johannes Borgstr{\"{o}}m, Andrew~D. Gordon, Michael Greenberg, James Margetson,
  and Jurgen~Van Gael.
\newblock Measure transformer semantics for bayesian machine learning.
\newblock In {\em {ESOP}}, volume 6602 of {\em Lecture Notes in Computer
  Science}, pages 77--96. Springer, 2011.

\bibitem{DBLP:conf/icfp/BorgstromLGS16}
Johannes Borgstr{\"{o}}m, Ugo~Dal Lago, Andrew~D. Gordon, and Marcin Szymczak.
\newblock A lambda-calculus foundation for universal probabilistic programming.
\newblock In {\em {ICFP}}, pages 33--46. {ACM}, 2016.

\bibitem{DBLP:journals/lmcs/BruniMM19}
Roberto Bruni, Hern{\'{a}}n~C. Melgratti, and Ugo Montanari.
\newblock Concurrency and probability: Removing confusion, compositionally.
\newblock {\em Log. Methods Comput. Sci.}, 15(4), 2019.

\bibitem{DBLP:journals/tcs/BruniMM20}
Roberto Bruni, Hern{\'{a}}n~C. Melgratti, and Ugo Montanari.
\newblock Bayesian network semantics for petri nets.
\newblock {\em Theor. Comput. Sci.}, 807:95--113, 2020.

\bibitem{JSSv076i01}
Bob Carpenter, Andrew Gelman, Matthew Hoffman, Daniel Lee, Ben Goodrich,
  Michael Betancourt, Marcus Brubaker, Jiqiang Guo, Peter Li, and Allen
  Riddell.
\newblock {Stan: A Probabilistic Programming Language}.
\newblock {\em Journal of Statistical Software, Articles}, 76(1):1--32, 2017.

\bibitem{DBLP:journals/toplas/ChatterjeeFNH18}
Krishnendu Chatterjee, Hongfei Fu, Petr Novotn{\'{y}}, and Rouzbeh
  Hasheminezhad.
\newblock Algorithmic analysis of qualitative and quantitative termination
  problems for affine probabilistic programs.
\newblock {\em {ACM} Trans. Program. Lang. Syst.}, 40(2):7:1--7:45, 2018.

\bibitem{DBLP:journals/pacmpl/DahlqvistK20}
Fredrik Dahlqvist and Dexter Kozen.
\newblock Semantics of higher-order probabilistic programs with conditioning.
\newblock {\em Proc. {ACM} Program. Lang.}, 4({POPL}):57:1--57:29, 2020.

\bibitem{DBLP:journals/mscs/DArgenioTW12}
Pedro~R. D'Argenio, Pedro~S{\'{a}}nchez Terraf, and Nicol{\'{a}}s Wolovick.
\newblock Bisimulations for non-deterministic labelled markov processes.
\newblock {\em Math. Struct. Comput. Sci.}, 22(1):43--68, 2012.

\bibitem{DBLP:journals/iandc/DelahayeKLLPSW13}
Beno{\^{\i}}t Delahaye, Joost{-}Pieter Katoen, Kim~G. Larsen, Axel Legay,
  Mikkel~L. Pedersen, Falak Sher, and Andrzej Wasowski.
\newblock Abstract probabilistic automata.
\newblock {\em Inf. Comput.}, 232:66--116, 2013.

\bibitem{DellacherieMeyer1978}
C.~Dellacherie and PA. Meyer.
\newblock {\em Probabilities and potentials}.
\newblock North-Holland Mathematics Studies, North-Holland, Amsterdam, 1978.

\bibitem{dempster1967}
A.~P. Dempster.
\newblock Upper and lower probabilities induced by a multivalued mapping.
\newblock {\em Ann. Math. Statist.}, 38(2):325--339, 04 1967.

\bibitem{dempster1968}
A.~P. Dempster.
\newblock A generalization of bayesian inference.
\newblock {\em Journal of the Royal Statistical Society. Series B
  (Methodological)}, 30(2):205--247, 1968.

\bibitem{DBLP:journals/ijfcs/HartogV02}
Jerry den Hartog and Erik~P. de~Vink.
\newblock Verifying probabilistic programs using a hoare like logic.
\newblock {\em Int. J. Found. Comput. Sci.}, 13(3):315--340, 2002.

\bibitem{DBLP:journals/logcom/DoberkatT17}
Ernst{-}Erich Doberkat and Pedro~S{\'{a}}nchez Terraf.
\newblock Stochastic non-determinism and effectivity functions.
\newblock {\em J. Log. Comput.}, 27(1):357--394, 2017.

\bibitem{DBLP:journals/siamcomp/FernsPP11}
Norm Ferns, Prakash Panangaden, and Doina Precup.
\newblock Bisimulation metrics for continuous markov decision processes.
\newblock {\em {SIAM} J. Comput.}, 40(6):1662--1714, 2011.

\bibitem{DBLP:journals/fac/GerholdS18}
Marcus Gerhold and Mari{\"{e}}lle Stoelinga.
\newblock Model-based testing of probabilistic systems.
\newblock {\em Formal Aspects Comput.}, 30(1):77--106, 2018.

\bibitem{DBLP:journals/ai/GivanDG03}
Robert Givan, Thomas~L. Dean, and Matthew Greig.
\newblock Equivalence notions and model minimization in markov decision
  processes.
\newblock {\em Artif. Intell.}, 147(1-2):163--223, 2003.

\bibitem{DBLP:journals/corr/abs-1206-3255}
Noah~D. Goodman, Vikash~K. Mansinghka, Daniel~M. Roy, Keith Bonawitz, and
  Joshua~B. Tenenbaum.
\newblock Church: a language for generative models.
\newblock {\em CoRR}, abs/1206.3255, 2012.

\bibitem{dippl}
Noah~D Goodman and Andreas Stuhlm\"{u}ller.
\newblock {The Design and Implementation of Probabilistic Programming
  Languages}.
\newblock \url{http://dippl.org}, 2014.
\newblock Accessed: 2021-4-20.

\bibitem{DBLP:conf/icse/GordonHNR14}
Andrew~D. Gordon, Thomas~A. Henzinger, Aditya~V. Nori, and Sriram~K. Rajamani.
\newblock Probabilistic programming.
\newblock In James~D. Herbsleb and Matthew~B. Dwyer, editors, {\em Proceedings
  of the on Future of Software Engineering, {FOSE} 2014, Hyderabad, India, May
  31 - June 7, 2014}, pages 167--181. {ACM}, 2014.

\bibitem{DBLP:conf/popl/GuptaJP99}
Vineet Gupta, Radha Jagadeesan, and Prakash Panangaden.
\newblock Stochastic processes as concurrent constraint programs.
\newblock In {\em {POPL}}, pages 189--202. {ACM}, 1999.

\bibitem{halmos1976measure}
P.R. Halmos.
\newblock {\em Measure Theory}.
\newblock Graduate Texts in Mathematics. Springer New York, 1976.

\bibitem{DBLP:journals/corr/abs-1710-09951}
Justin Hsu.
\newblock Probabilistic couplings for probabilistic reasoning.
\newblock {\em CoRR}, abs/1710.09951, 2017.

\bibitem{DBLP:conf/fsttcs/HurNRS15}
Chung{-}Kil Hur, Aditya~V. Nori, Sriram~K. Rajamani, and Selva Samuel.
\newblock A provably correct sampler for probabilistic programs.
\newblock In Prahladh Harsha and G.~Ramalingam, editors, {\em 35th {IARCS}
  Annual Conference on Foundation of Software Technology and Theoretical
  Computer Science, {FSTTCS} 2015, December 16-18, 2015, Bangalore, India},
  volume~45 of {\em LIPIcs}, pages 475--488. Schloss Dagstuhl - Leibniz-Zentrum
  f{\"{u}}r Informatik, 2015.

\bibitem{DBLP:conf/lics/JonesP89}
C.~Jones and Gordon~D. Plotkin.
\newblock A probabilistic powerdomain of evaluations.
\newblock In {\em {LICS}}, pages 186--195. {IEEE} Computer Society, 1989.

\bibitem{DBLP:journals/tit/Forney72}
G.~David~Forney Jr.
\newblock Maximum-likelihood sequence estimation of digital sequences in the
  presence of intersymbol interference.
\newblock {\em {IEEE} Trans. Information Theory}, 18(3):363--378, 1972.

\bibitem{DBLP:conf/birthday/KatoenGJKO15}
Joost{-}Pieter Katoen, Friedrich Gretz, Nils Jansen, Benjamin~Lucien Kaminski,
  and Federico Olmedo.
\newblock Understanding probabilistic programs.
\newblock In Roland Meyer, Andr{\'{e}} Platzer, and Heike Wehrheim, editors,
  {\em Correct System Design - Symposium in Honor of Ernst-R{\"{u}}diger
  Olderog on the Occasion of His 60th Birthday, Oldenburg, Germany, September
  8-9, 2015. Proceedings}, volume 9360 of {\em Lecture Notes in Computer
  Science}, pages 15--32. Springer, 2015.

\bibitem{kindermann1980markov}
Ross Kindermann and Laurie Snell.
\newblock {\em Markov random fields and their applications}, volume~1.
\newblock American Mathematical Society, 1980.

\bibitem{DBLP:journals/jcss/Kozen81}
Dexter Kozen.
\newblock Semantics of probabilistic programs.
\newblock {\em J. Comput. Syst. Sci.}, 22(3):328--350, 1981.

\bibitem{lehmann2005testing}
E.~L. Lehmann and Joseph~P. Romano.
\newblock {\em Testing statistical hypotheses}.
\newblock Springer Texts in Statistics. Springer, New York, third edition,
  2005.

\bibitem{Loeliger2004}
H.~. {Loeliger}.
\newblock An introduction to factor graphs.
\newblock {\em IEEE Signal Processing Magazine}, 21(1):28--41, 2004.

\bibitem{doi:10.1002/sim.3680}
David Lunn, David Spiegelhalter, Andrew Thomas, and Nicky Best.
\newblock The {BUGS} project: Evolution, critique and future directions.
\newblock {\em Statistics in Medicine}, 28(25):3049--3067, 2009.

\bibitem{DBLP:conf/concur/LynchSV03}
Nancy~A. Lynch, Roberto Segala, and Frits~W. Vaandrager.
\newblock Compositionality for probabilistic automata.
\newblock In {\em Proc. of the 14th International Conference on Concurreny
  Theory (CONCUR'03)}, volume 2761 of {\em Lecture Notes in Computer Science},
  pages 204--222. Springer, 2003.

\bibitem{DBLP:series/mcs/McIverM05}
Annabelle McIver and Carroll Morgan.
\newblock {\em Abstraction, Refinement and Proof for Probabilistic Systems}.
\newblock Monographs in Computer Science. Springer, 2005.

\bibitem{DBLP:conf/isola/McIverM20}
Annabelle McIver and Carroll Morgan.
\newblock Correctness by construction for probabilistic programs.
\newblock In Tiziana Margaria and Bernhard Steffen, editors, {\em Leveraging
  Applications of Formal Methods, Verification and Validation: Verification
  Principles - 9th International Symposium on Leveraging Applications of Formal
  Methods, ISoLA 2020, Rhodes, Greece, October 20-30, 2020, Proceedings, Part
  {I}}, volume 12476 of {\em Lecture Notes in Computer Science}, pages
  216--239. Springer, 2020.

\bibitem{DBLP:journals/toplas/OlmedoGJKKM18}
Federico Olmedo, Friedrich Gretz, Nils Jansen, Benjamin~Lucien Kaminski,
  Joost{-}Pieter Katoen, and Annabelle McIver.
\newblock Conditioning in probabilistic programming.
\newblock {\em {ACM} Trans. Program. Lang. Syst.}, 40(1):4:1--4:50, 2018.

\bibitem{DBLP:journals/ai/Pearl86}
Judea Pearl.
\newblock Fusion, propagation, and structuring in belief networks.
\newblock {\em Artif. Intell.}, 29(3):241--288, 1986.

\bibitem{Pearl09}
Judea Pearl.
\newblock {\em Causality}.
\newblock Cambridge University Press, Cambridge, UK, 2 edition, 2009.

\bibitem{Plummer2003}
Martyn Plummer.
\newblock {JAGS: A program for analysis of Bayesian graphical models using
  Gibbs sampling}.
\newblock In {\em {\it Proceedings of the 3rd International Workshop on
  Distributed Statistical Computing (DSC 2003)}}, 2003.
\newblock K Hornik, F Leisch, A Zeileis (eds.).

\bibitem{puterman2014markov}
Martin~L Puterman.
\newblock {\em Markov decision processes: discrete stochastic dynamic
  programming}.
\newblock John Wiley \& Sons, 2014.

\bibitem{Rabiner86anintroduction}
L.~R. Rabiner and B.~H. Juang.
\newblock An introduction to hidden markov models.
\newblock {\em IEEE ASSp Magazine}, 1986.

\bibitem{Seg06-CONCUR}
Roberto Segala.
\newblock Probability and nondeterminism in operational models of concurrency.
\newblock In {\em Proc. of the 17th International Conference on Concurrency
  Theory (CONCUR'06)}, volume 4137 of {\em Lecture Notes in Computer Science},
  pages 64--78. Springer, 2006.

\bibitem{SegalaL94}
Roberto Segala and Nancy~A. Lynch.
\newblock Probabilistic simulations for probabilistic processes.
\newblock In {\em Proc. of the 5th International Conference on Concurrency
  Theory (CONCUR'94)}, volume 836 of {\em Lecture Notes in Computer Science},
  pages 481--496. Springer, 1994.

\bibitem{shafer1976mathematical}
Glenn Shafer.
\newblock {\em A Mathematical Theory of Evidence}.
\newblock Princeton University Press, Princeton, 1976.

\bibitem{SokolovaV04}
Ana Sokolova and Erik~P. de~Vink.
\newblock Probabilistic automata: System types, parallel composition and
  comparison.
\newblock In Christel Baier, Boudewijn~R. Haverkort, Holger Hermanns,
  Joost{-}Pieter Katoen, and Markus Siegle, editors, {\em Validation of
  Stochastic Systems - {A} Guide to Current Research}, volume 2925 of {\em
  Lecture Notes in Computer Science}, pages 1--43. Springer, 2004.

\bibitem{DBLP:conf/lics/StatonYWHK16}
Sam Staton, Hongseok Yang, Frank~D. Wood, Chris Heunen, and Ohad Kammar.
\newblock Semantics for probabilistic programming: higher-order functions,
  continuous distributions, and soft constraints.
\newblock In Martin Grohe, Eric Koskinen, and Natarajan Shankar, editors, {\em
  Proceedings of the 31st Annual {ACM/IEEE} Symposium on Logic in Computer
  Science, {LICS} '16, New York, NY, USA, July 5-8, 2016}, pages 525--534.
  {ACM}, 2016.

\bibitem{stroock1972}
Daniel~W. Stroock and S.~R.~S. Varadhan.
\newblock On the support of diffusion processes with applications to the strong
  maximum principle.
\newblock In {\em Proceedings of the Sixth Berkeley Symposium on Mathematical
  Statistics and Probability, Volume 3: Probability Theory}, pages 333--359,
  Berkeley, Calif., 1972. University of California Press.

\bibitem{TixKeimelPlotkin2009}
Regina Tix, Klaus Keimel, and Gordon Plotkin.
\newblock Semantic domains for combining probability and non-determinism.
\newblock {\em Electr. Notes Theor. Comput. Sci.}, 222:3--99, 01 2009.

\bibitem{DBLP:conf/icml/TolpinZRY21}
David Tolpin, Yuan Zhou, Tom Rainforth, and Hongseok Yang.
\newblock Probabilistic programs with stochastic conditioning.
\newblock In Marina Meila and Tong Zhang, editors, {\em Proceedings of the 38th
  International Conference on Machine Learning, {ICML} 2021, 18-24 July 2021,
  Virtual Event}, volume 139 of {\em Proceedings of Machine Learning Research},
  pages 10312--10323. {PMLR}, 2021.

\bibitem{meent2018introduction}
Jan-Willem van~de Meent, Brooks Paige, Hongseok Yang, and Frank Wood.
\newblock An introduction to probabilistic programming, 2018.

\bibitem{DBLP:journals/entcs/WangHR19}
Di~Wang, Jan Hoffmann, and Thomas~W. Reps.
\newblock A denotational semantics for low-level probabilistic programs with
  nondeterminism.
\newblock In Barbara K{\"{o}}nig, editor, {\em Proceedings of the Thirty-Fifth
  Conference on the Mathematical Foundations of Programming Semantics, {MFPS}
  2019, London, UK, June 4-7, 2019}, volume 347 of {\em Electronic Notes in
  Theoretical Computer Science}, pages 303--324. Elsevier, 2019.

\end{thebibliography}
\clearpage
\appendix

In this supplementary material, we first collect all the missing proofs. Then, with reference to Footnote~\ref{eriuyoiu}, we include a short discussion of how to extend our model by relaxing the restriction that probability spaces should all be at most denumerable.

\section{Addendum and Proofs Regarding Mixed Systems}

\subsection{Comparison with imperative probabilistic programming, see Discussion~\ref{isedkufghsvjuy}}
\label{kuyfgksuy}
In this appendix, we compare our model of Mixed Systems with imperative probabilistic programming following the approach promoted by Mc Iver and Morgan~\cite{DBLP:series/mcs/McIverM05,DBLP:conf/isola/McIverM20}. This line of work addresses probabilistic extensions of Hoare logic for imperative programs, focusing on evaluating the probability of weakest preconditions of properties. We like to compare our approach with one aspect of this work, namely the modeling of the blending of probability and nondeterminism---this is only a minor aspect of the work of Mc Iver and Morgan, which focuses on decidability issues and computational cost of their proposed logic. 

\subsubsection{Demonic/angelic nondeterminism}
We chosed to base our comparison on a different work in the same direction:~\cite{DBLP:journals/toplas/ChatterjeeFNH18}, which provides the most extensive developement on \emph{demonic/angelic} blending of probability and nondeterminism in the language \textsc{Apps}. 
We do not claim to cover all aspects of \textsc{Apps}, since the focus of this reference is on the checking of almost sure termination using supermartingale techniques. Since our scope is more modest in this appendix, 
we will only develop an informal comparison based on the following example corresponding to Fig.\,2 of~\cite{DBLP:journals/toplas/ChatterjeeFNH18}, reproduced here as Figs.\,\ref{kufsgkuyf} and~\ref{krfhsrkufykuy}.
\begin{figure}[h!]
\begin{minipage}{4cm}
$
\bea{l}
x := 0; \\
\prog{\keyw{while}}\; x \geq 0 \;\prog{\keyw{do}} \\
   \hspace*{1em}\prog{\keyw{if}}\; \prog{\keyw{prob}}(0.6)\; \prog{\keyw{then}} \\
	 \hspace*{1em}\hspace*{1em}    \prog{\keyw{if}}\; \prog{\keyw{angel}} \; \prog{\keyw{then}} \\
		\hspace*{1em}\hspace*{1em}\hspace*{1em}	    x := x+1 \\
		\hspace*{1em}\hspace*{1em}	 \prog{\keyw{else}} \\
		\hspace*{1em}\hspace*{1em}\hspace*{1em}	    x := x-1 \\
		\hspace*{1em}\hspace*{1em}	 \prog{\keyw{fi}} \\
	 \hspace*{1em}\prog{\keyw{else}} \\
	   \hspace*{1em}\hspace*{1em}  \prog{\keyw{if demon then}} \\
		\hspace*{1em}\hspace*{1em}\hspace*{1em}	    x := x+1 \\
			\hspace*{1em}\hspace*{1em} \prog{\keyw{else}} \\
			\hspace*{1em}\hspace*{1em}\hspace*{1em}    x := x-1 \\
			\hspace*{1em}\hspace*{1em} \prog{\keyw{fi}} \\
   \hspace*{1em}\prog{\keyw{fi}} \\
\prog{\keyw{od}}	
\eea
\bea{c} ~\\
\left.\bea{c}\vspace*{12mm}
\eea
\right\}Q_1 \\ ~\\ 
\left.\bea{c}\vspace*{12mm}
\eea
\right\}Q_2
\eea 
$ 
\end{minipage} ~~
\begin{minipage}{7.8cm}
Verbatim from~\cite{DBLP:journals/toplas/ChatterjeeFNH18}: There is only one program variable $x$ and no random variables. There is a while loop, where given a probabilistic choice, one of two statement blocks $Q_1$ or $Q_2$ is executed. The block $Q_1$ (resp., $Q_2$) is chosen to execute stochastically w.r.t. the probabilistic choice ($Q_1$ \emph{is selected with probability} $0.6$). The statement block $Q_1$ (resp., $Q_2$) is an angelic (resp., demonic) conditional statement to either increment or decrement $x$.

Following~\cite{DBLP:journals/toplas/ChatterjeeFNH18}, call $Q_3$ the body of the $\prog{\keyw{while}}$ loop of this example: $\prog{\keyw{while}}\; x \geq 0 \;\prog{\keyw{do}}\;Q_3$.
\end{minipage}
\caption{Example of Fig.\,2 of~\cite{DBLP:journals/toplas/ChatterjeeFNH18}}
	\label{kufsgkuyf} 
	\centerline{\includegraphics[width=10cm]{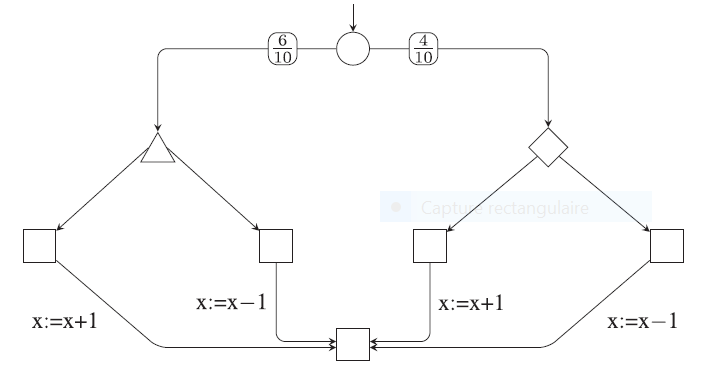}}
	\caption{Semantics: SGS (Stochastic Game Structure) of $Q_3$, Fig.\,6 of~\cite{DBLP:journals/toplas/ChatterjeeFNH18}. The execution begins with the probabilistic choice. The left branch (corresponding to $Q_1$) is selected according to demonic nondeterminism figured by a triangle, and the right branch (corresponding to $Q_2$) is selected according to angelic nondeterminism, figured by a diamond.}
	\label{krfhsrkufykuy}
\end{figure}

The program and its semantics are self-speaking. A key point here is the role of demonic and angelic nondeterminisms, and their combination in this program. Let us consider the post-condition 
\beq
P: x \mbox{ gets increased by one by performing } Q_3\,.
\label{weksuysviu}
\eeq
The question is: how do we assess $P$? Under demonic choice, $P$ is violated if there exists some branch in the nondeterministic choice under which $P$ is violated. Under angelic choice, $P$ is violated if for all branches in the nondeterministic choice, $P$ is violated. Inspecting Fig.\,\ref{krfhsrkufykuy} shows that $P$ is violated if and only if $Q_1$ is selected. Thus the probabilistic score that $P$ is violated is $0.6$---we do not use the term ``probability'' since $P$ combines both probabilistic and nondeterministic features, and cannot be given a true probability.

Can we cast this example into Mixed Systems?

\subsubsection{Casting this example to Mixed Systems?}
Consider the following attempt by defining the Mixed System $S_{Q_3}=\{(\Omega,\proba),\cons,\{x,x'\}\}$, where:
\begin{itemize}
	\item $\Omega=\{Q_1,Q_2\}$ and $\proba(\omega{=}Q_1)=0.6,\proba(\omega{=}Q_2)=0.4$; 
	\item Variable $x,x'$ correspond to the statuses of variable $x$ of $Q_3$ from Fig.\,\ref{krfhsrkufykuy}, before and after executing $Q_3$; the value of $x$ is assumed and the value of $x'$ will be established by sampling $S_{Q_3}$;
	\item It remains to define relation $\cons$ involving $\omega,x,x'$. To mimic Fig.\,\ref{krfhsrkufykuy}, we would like to write something like 
	\[\bea{rr}
	x'=\mathbf{if}\; \omega=Q_1 & \mathbf{then ~~ angel} \; x'\in\{x-1,x+1\}
	\\
	&\mathbf{else \; demon} \;  x'\in\{x-1,x+1\}
	\eea
	\]
\end{itemize}
Unfortunately, \textbf{angelic/demonic choice} are not concepts of our Mixed Systems model following Definition~\ref{slergiuhpiu}. With regard to probabilistic evaluation of state properties (item~\ref{ausytdfxuajy} of Definition~\ref{slergiuhpiu}), we could specify whether we use $\outproba$ (mirroring demonic) or $\inproba$ (mirroring angelic). Still, this does not allow to combine both alternatives for different parts of the system. 

We propose to refine  Definition~\ref{slergiuhpiu} so that both types of nondeterminism can be freely combined. Let us investigate this on the above example.
Consider the Mixed System 
\beq
\system=(\Omega,\proba,X,\cons)\,,
\label{kaujhfvbdsmh}
\eeq
where: 
\begin{itemize}
	\item $\Omega=\{Q_1,Q_2\}$ and $\proba(\omega=Q_1)=0.6,\proba(\omega=Q_2)=0.4$; 
	\item Variable $x,x'$ correspond to the statuses of variable $x$ of $Q_3$ from Fig.\,\ref{krfhsrkufykuy}, before and after executing $Q_3$; the value of $x$ is assumed and the value of $x'$ will be established by sampling $S_{Q_3}$;
	\item Relation $\cons$ is (yet informally) defined by
\beq
\omega\cons{x'} \mbox{ iff }\left\{\bea{lcr}
\omega=Q_1 
&\wedge& {\bf angel} ~x'\in\{x{-}1,x{+}1\} 
\\
\mbox{or}
\\
\omega=Q_2 
&\wedge& {\bf demon} ~ x'\in\{x{-}1,x{+}1\}
\eea\right.
\label{kuvakjvk}
\eeq
\end{itemize}
This definition for $\cons$ is informal, since keywords \textbf{demon} and \textbf{angel} have no mathematical meaning by themselves. We will give a semantics to (\ref{kuvakjvk}) by assigning, to each state predicate, a \emph{probabilistic score} $\proba^*$. More precisely, we define $\proba^*(\neg P)$, the probabilistic score of predicate $\neg P$,  by the following formula:
\beq\bea{rcr}
\proba^*(\neg P) &\eqdef& \consist{\proba}\left(\left\{\omega\mid\omega{=}Q_1 \wedge \exists x'{\in}\{x{-}1,x{+}1\}: \neg P\right\}\right) \\ [0.5mm]
&& + \consist{\proba}\left(\left\{\omega\mid\omega{=}Q_2 \wedge \forall x'{\in}\{x{-}1,x{+}1\}: \neg P\right\}\right)
\eea
\label{ksdjchvkj}
\eeq
In this formula, we give a semantics to \textbf{angel} in (\ref{kuvakjvk}) by using the existential quantifier, i.e., we use the outer probability to evaluate the corresponding state predicate; we give a semantics to \textbf{demon} in (\ref{kuvakjvk}) by using the universal quantifier, i.e., we use the inner probability to evaluate the corresponding state predicate. Now, for this example, $\consist{\proba}=\proba$ since, with relation (\ref{kuvakjvk}), for both choices $\omega=Q_1$ and $\omega=Q_2$, related values for state $x'$ exist. Formula (\ref{ksdjchvkj}) finally yields $\proba^*(\neg P)=0.6$.

The above coding applies only to a restricted class of relations $\cons$. In formula (\ref{ksdjchvkj}), we exploited the fact that, in relation $\cons$ defined by (\ref{kuvakjvk}), a partition of $\Omega$ is performed first (probabilistic choice), and then, each branch of this choice involves a pure state predicate, independent from $\omega$. 

Here follow some hints to extend this link beyond the particular example. Our starting point is the semantics of \textsc{Apps}, which is expressed in terms of \emph{Stochastic Game Structures} (SGS), see Definition~2.3 of~\cite{DBLP:journals/toplas/ChatterjeeFNH18}. Since Mixed Systems do not support recursion, we consider only the subclass of SGS that are DAGs. Picking a probabilistic location $\ell$ of this SGS, we consider the maximal subgraph of this SGS that has $\ell$ as its only minimal location, and contains no other probabilistic location. For our example (\ref{kaujhfvbdsmh},\ref{kuvakjvk},\ref{ksdjchvkj}), this yields the whole SGS. For each such subgraph, a coding similar to (\ref{kaujhfvbdsmh},\ref{kuvakjvk},\ref{ksdjchvkj}) can be given. The partially ordered execution of the whole SGS is then mapped to a Bayesian network following Definition~\ref{lerifugeroigf}, and the incremental sampling of this Bayesian Network would correspond to the execution of the SGS as a game.

We preferred not to refine our Mixed System model with this additional feature, since, first, it applies only to a restricted class of relations $\cons$, and, second, we believe it to be incompatible with having a parallel composition. 

\subsection{Proof of Lemma~\ref{wjdetyfuuy}}
\label{riuygfttyiohjih}
\begin{proof}\rm  It is enough to prove the result for compressed systems.
  For $i=1,2$, let $\system_i\equiv\system'_i$ and let $\varphi_i$ be
  the bijections defining the two equivalences. We define
	\beqq \varphi(\omega,q_1\join{q_2}) &=&
  \left( (\omega'_1,\omega'_2),q'_1\join{q'_2} \right) \mbox{ where }
  (\omega'_i,q'_i) =\varphi_i(\omega_i,q_i), i=1,2
	\eeqq
	and we have
  to verify that $\varphi$ defines the desired equivalence between
  $\system\eqdef\system_1\mpara\system_2$ and
  $\system'\eqdef\system'_1\mpara\system'_2$. Using the fact that
  $\proba=\proba_1\times\proba_2$, we get
	 \[\bea{rl}
	 \cons_\proba=&\{
(p_1{\join}{p_2},\omega,q_1{\join}{q_2})
\mid
q_1{\compat}{q_2} \,\wedge\,
\omega_1\cons_1{q_1} \,\wedge\, \proba_1(\omega_1){>}0 \,\wedge\,
\omega_2\cons_2{q_2} \,\wedge\,  \proba_2(\omega_2){>}0
\} \\
=& \{
(p,\omega,q_1\join{q_2})
\mid
q_1\compat{q_2} \,\wedge\,
\omega_1\cons_{1\proba}q_1  \,\wedge\,
\omega_2\cons_{2\proba}q_2
\}
\eea
	 \]
	 Thus, for every $(p,\omega,q_1\join{q_2})\in\cons_\proba$, we have
$q'_1=q_1\compat{q_2}=q'_2  \mbox{ and }
\omega'_i\cons_{i\proba}q'_i, i=1,2$,
whence $\omega'\cons'_{\proba}{q'}$ and $\varphi$ is a bijection. Since $\proba'=\proba'_1\times\proba'_2$ we get $\proba'(\omega')=\proba(\omega)$, which finishes the proof.
\end{proof}

\subsection{Proof of Theorem~\ref{reoifueorwui}}
\label{lertughetpr}
\begin{proof}\rm  
We will repeatedly use notation (\ref{elriuei}). Without loss of generality we can assume that $\system$ is compressed. We first compress $\margin{Y}{\system}$ by considering the following equivalence relation, where $Z=X\setminus Y$ and $q_Y,q_Z$ are valuations for $Y$ and $Z$:
\beqq
\omega'\sim_Y\omega &\mbox{iff}&\forall q_Y:\left\{\bea{c}
		\exists q_Z:\omega\cons(q_Y,q_Z) \\ \Updownarrow \\ \exists q'_Z:\omega'\cons(q_Y,q'_Z)
\eea		\right. ~~ ;\mbox{ let $\omega_Y$ be the equivalence class of $\omega$.}
\eeqq
Let
\[
\cons_Y\eqdef\{(\omega_Y,q_Y)\in\Omega_Y\times{Q_Y}\mid\exists\omega\in\omega_Y:\omega\,\proj{Y}{\cons}\,q_Y\}
\]
 be the associated relation, and let $\proba_Y$ be the compressed probability defined by $\proba_Y(\omega_Y)=\sum_{\omega\in\omega_Y}\proba(\omega)$. Let us denote by
$$\system_Y=(\Omega_Y,\proba_Y,Y,\cons_Y)$$
 the resulting compressed system, and we recall that
$\consist{\Omega_Y}=\{\omega_Y\mid\exists{q_Y}:\omega_Y\cons_Y{q_Y}\}$.
In the sequel, we feel free to identify $\omega_Y\in\Omega_Y$, an element of the set of equivalence classes, with $\omega_Y$ seen as a subset of $\Omega$ saturated for $\sim_Y$. This way, a subset of $\Omega_Y$ can also be interpreted as a subset of $\Omega$.

To prove the theorem, we compare the two probabilistic semantics, namely: which state can be output and what is the outer probability of producing it.
By definition of the sequential composition of kernels, $\margin{Y}{\system}\bayestimes\cond{Y}{\system}$
\begin{enumerate}
	\item samples $\produces{\margin{Y}{\system}}{q_Y}$; and, then
	\item given $q_Y$, samples $(Y{=}q_Y)\mpara\system$.
\end{enumerate}
Regarding the relations governing the nondeterministic choice, the combination of these two steps is identical to $\cons$. Let $q_*$ be such that $\system\leadsto{q_*}$, implying that \mbox{$\margin{Y}{\system}\leadsto{q_{*Y}}$}, where $q_{*Y}\eqdef\proj{Y}{q_*}$.
Let us evaluate the outer probabilistic score of $q_*$ for the Bayesian network $\margin{Y}{\system}\bayestimes\cond{Y}{\system}$, i.e., the probability that $q_*$ is a possible outcome of sampling $\margin{Y}{\system}\bayestimes\cond{Y}{\system}$. We need to prove that it is equal to the probability that $q_*$ is a possible outcome of $\system$, namely $\consist{\proba}(\cons_{q_*})$---we used notation (\ref{ltriguhruiph}).
 To show this, we note the following:
\begin{enumerate}

	\item To output $q_*$ we first must output $q_{*Y}$, which amounts to selecting $\omega_Y$ such that $\omega_Y\cons_Y{q_{*Y}}$. Using (\ref{jdywetfdewjty}), (\ref{dfksjgksjhjh}) and notation (\ref{ltriguhruiph}), the probabilistic score of $q_{*Y}$, i.e., the probability that $q_{*Y}$ is a possible outcome of  $\margin{Y}{\system}$, is equal to 
	\beq
	\consist{\proba}_Y\bigl((\cons_Y)_{q_{*Y}}\bigr)\, \label{kdkfgkdysj}
	\eeq
	which is $>0$ since $\margin{Y}{\system}\leadsto{q_{*Y}}$.
	
	\item Then, we must select $\omega$ using $\system$, under the additional constraint that $\proj{Y}{q}$ ${=}q_{*Y}$, which requires that we sample $\omega\in\Omega$ under the constraint that $\omega\in\omega_Y$  for some $\omega_Y\in(\cons_Y)_{q_{*Y}}$.
	The corresponding probabilistic score is thus equal to the conditional probability 
	\beq
	\consist{\proba}\bigl(\cons_{q_*}\mid(\cons_Y)_{q_{*Y}}\bigr)\,, \label{shgfkgkhlfk}
	\eeq
	which is well defined since $\consist{\proba}_Y((\cons_Y)_{q_{*Y}})>0$.
	
	\item By (\ref{kslufhsdbkj}), the probabilistic score of $q_*$ is equal to the product of the two scores (\ref{kdkfgkdysj}) and (\ref{shgfkgkhlfk}):
	\[
	\consist{\proba}\bigl(\cons_{q_*}\mid(\cons_Y)_{q_{*Y}}\bigr)\;\consist{\proba}_Y\bigl((\cons_Y)_{q_{*Y}}\bigr)~=~\consist{\proba}\bigl(\cons_{q_*}\;{\cap}\;(\cons_Y)_{q_{*Y}}\bigr)~=~\consist{\proba}\bigl(\cons_{q_*}\bigr)\,,\]
	where the last equality follows from $\cons_{q_*}\subseteq(\cons_Y)_{q_{*Y}}\,.$
\end{enumerate}
This shows that $q_*$ possesses identical probabilistic semantics, for the left and right hand side  of Bayes formula.
\end{proof}

\subsection{Proof of Theorem~\ref{eriuhpwiu}}
\label{litrughuihyutrd}
As a prerequisite, we need the following result:
	\begin{lemma}
		\label{keleuyo}
		Let $\system_1$ and $\system_2$ be any two Mixed Systems, and let $Y$ be a set of variables containing $X_1{\cap}X_2$. Then, we have: $
		\margin{X_1{\cup}Y}{\system_1{\mpara}\system_2}\equiv\system_1{\mpara}\margin{Y}{\system_2}
		$.
	\end{lemma}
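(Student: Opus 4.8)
The plan is to prove the claimed equivalence $\margin{X_1{\cup}Y}{\system_1{\mpara}\system_2}\equiv\system_1{\mpara}\margin{Y}{\system_2}$ by unfolding both sides according to Definitions~\ref{lrtguiodtrhleguip} (parallel composition), \ref{eruigtyweouy} (marginal), and~\ref{oeruihytersd} (equivalence), then exhibiting the natural bijection between the (compressions of the) two relations-with-probability. First I would write out the left-hand side: $\system_1{\mpara}\system_2$ has probability space $\Omega_1{\times}\Omega_2$ with product probability $\proba_1{\times}\proba_2$, variable set $X_1{\cup}X_2$, and relation $\cons=\{((\omega_1,\omega_2),q_1{\join}q_2)\mid q_1\compat q_2,\ \omega_1\cons_1 q_1,\ \omega_2\cons_2 q_2\}$. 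Taking the marginal on $X_1{\cup}Y$ keeps the same probability space and projects the relation: a pair $((\omega_1,\omega_2),r)$, with $r$ a state over $X_1{\cup}Y$, is in $\proj{X_1{\cup}Y}{\cons}$ iff there is an extension of $r$ to $X_1{\cup}X_2$ that lies in $\cons$, i.e., iff there exist $q_1$ over $X_1$ and $q_2$ over $X_2$ with $q_1\compat q_2$, $\omega_1\cons_1 q_1$, $\omega_2\cons_2 q_2$, and $q_1{\join}q_2$ restricts to $r$ on $X_1{\cup}Y$. Since $X_1{\cap}X_2\subseteq Y$, the $X_1$-part of such an extension is forced to equal $\restrict{X_1}{r}$, so this simplifies to: $\omega_1\cons_1\restrict{X_1}{r}$ and there exists $q_2$ over $X_2$ with $\omega_2\cons_2 q_2$, $q_2$ compatible with $\restrict{X_1}{r}$ on $X_1{\cap}X_2$, and $\restrict{Y\cap X_2}{q_2}=\restrict{Y\cap X_2}{r}$.

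Next I would write out the right-hand side: $\margin{Y}{\system_2}=(\Omega_2,\proba_2,Y\cap X_2,\proj{Y}{\cons_2})$, and then $\system_1{\mpara}\margin{Y}{\system_2}$ has probability space $\Omega_1{\times}\Omega_2$ with $\proba_1{\times}\proba_2$, variable set $X_1\cup(Y\cap X_2)=X_1\cup Y$ (using $Y\supseteq X_1\cap X_2$, so the only $X_2$-variables retained are those also in $Y$; note $X_1\cup Y$ here means the variables of the two systems being composed), and relation obtained by composing $\cons_1$ with $\proj{Y}{\cons_2}$: a pair $((\omega_1,\omega_2),s_1{\join}s_2)$ is in the composite relation iff $s_1\compat s_2$ (compatibility on $X_1\cap(Y\cap X_2)=X_1\cap X_2$, since $Y\supseteq X_1\cap X_2$), $\omega_1\cons_1 s_1$, and $\omega_2\,\proj{Y}{\cons_2}\,s_2$, the last meaning there is $q_2$ over $X_2$ with $\omega_2\cons_2 q_2$ and $\restrict{Y\cap X_2}{q_2}=s_2$. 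Unwinding this gives exactly the same description of admissible triples $(\omega_1,\omega_2,r)$ as on the left, with $r=s_1{\join}s_2$. The identity map on $\Omega_1{\times}\Omega_2$ and on states over $X_1\cup Y$ is then the required bijection $\varphi$ of Definition~\ref{oeruihytersd}, and it trivially preserves the product probability $\proba_1{\times}\proba_2$.

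The one subtlety — and the step I expect to be the main obstacle — is that Definition~\ref{oeruihytersd} requires equivalence to be checked between \emph{compressed} systems, and both sides here involve a projection that may collapse distinct $\omega$'s (distinct $(\omega_1,\omega_2)$ pairs can become $\sim$-equivalent once the relation is projected to the smaller variable set). So the clean identity-bijection argument above establishes that the two \emph{uncompressed} systems have literally the same probability space, same variable set, and same relation; I would then invoke the fact that equal Mixed Systems have equal compressions, hence are equivalent. Concretely, I would first prove the sharper statement that the two uncompressed tuples $(\Omega_1{\times}\Omega_2,\proba_1{\times}\proba_2,X_1{\cup}Y,\cdot)$ agree on the nose — for which the key computation is the ``$X_1\cap X_2\subseteq Y$ forces the $X_1$-coordinate'' observation, ensuring the existential over extensions of $r$ on the left matches the $\proj{Y}{\cons_2}$ existential on the right — and only at the very end pass to compressions. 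An alternative, slightly slicker route is to observe that this lemma is a special case of associativity/commutativity of $\mpara$ together with the obvious fact that $\margin{Y}{\system_2}=\margin{Y}{\nil\mpara\system_2}$ and that marginalizing a composite over a superset of one component's variables commutes appropriately; but I think the direct unfolding above is cleaner and avoids having to state auxiliary marginalization-commutation facts, so that is the route I would write up.
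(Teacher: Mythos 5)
Your proposal is correct and follows essentially the same route as the paper's proof, which likewise observes that both sides share the probability space $(\Omega_1,\proba_1){\times}(\Omega_2,\proba_2)$ and that $\proj{X_1\cup{Y}}{\cons_1{\wedge}\cons_2}=\cons_1\wedge\proj{Y}{\cons_2}$ because $X_1{\cap}X_2\subseteq{Y}$ pins down the $X_1$-coordinate of any extension. Your additional care about passing to compressions only at the end is a sound (and slightly more explicit) handling of a point the paper's one-line proof leaves implicit.
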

\begin{proof}\rm 
		This is immediate by observing that, first, $\margin{X_1{\cup}{Y}}{\system_1\mpara\system_2}$ on the one hand, and $\system_1\mpara\margin{Y}{\system_2}$ on the other hand, possess identical probability spaces, namely $(\Omega_1,\proba_1){\times}(\Omega_2,\proba_2)$, and, second, they possess identical relations	$\proj{X_1\cup{Y}}{\cons_1{\wedge}\cons_2}=\cons_1\wedge\proj{X_1\cup{Y}}{\cons_2}=\cons_1\wedge\proj{Y}{\cons_2}$.\eproof
	\end{proof}
	
The proof of Theorem~\ref{eriuhpwiu} relies on the following lemma, which is a corollary of Bayes formula. This lemma provides the basic reasoning step of message passing algorithms:
\begin{lemma}
	\label{elriufek} Let $\system_1$, $\system_2$, and $Y$ be as in Lemma~$\ref{keleuyo}$. Then:
	\beq
	\system_1\mpara\system_2  &\equiv_P& \bigl(\system_1\mpara\margin{Y}{\system_2}\bigr)\bayestimes\,\cond{Y}{\system_2}\,.
	\label{lreiugfepiu}
	\eeq
\end{lemma}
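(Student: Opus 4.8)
The strategy is to apply the Bayes formula (Theorem~\ref{reoifueorwui}) to the compound system $\system_1\mpara\system_2$ with the distinguished subset of variables being $X_1\cup Y$, and then simplify both the marginal and the conditional that appear on the right-hand side using Lemma~\ref{keleuyo} and elementary properties of parallel composition. First I would instantiate Theorem~\ref{reoifueorwui} with $\system\eqdef\system_1\mpara\system_2$ and $Y'\eqdef X_1\cup Y$ (note $Y'\subseteq X_1\cup X_2$ since $Y$ may contain variables of $X_2$ but by hypothesis $X_1\cap X_2\subseteq Y$, so this is a legitimate subset of the variable set of $\system_1\mpara\system_2$). This gives
\[
\system_1\mpara\system_2 \;\equiv_P\; \margin{X_1\cup Y}{\system_1\mpara\system_2}\,\bayestimes\,\cond{X_1\cup Y}{\system_1\mpara\system_2}\,.
\]

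Next I would rewrite the marginal factor. By Lemma~\ref{keleuyo}, $\margin{X_1\cup Y}{\system_1\mpara\system_2}\equiv\system_1\mpara\margin{Y}{\system_2}$, and since $\equiv$ implies $\equiv_P$ and parallel composition preserves equivalence (Lemma~\ref{wjdetyfuuy}), substituting this into the Bayes decomposition is harmless for $\equiv_P$ provided I also check that replacing a factor by an equivalent one inside a Bayesian network preserves probabilistic semantics — this follows from the remark after Definition~\ref{esolfhsbdkuy} together with Lemma~\ref{wjdetyfuuy}. The remaining work is to identify the conditional factor: I claim $\cond{X_1\cup Y}{\system_1\mpara\system_2}$ and $\cond{Y}{\system_2}$ induce the same Bayesian-network factor when composed with $\system_1\mpara\margin{Y}{\system_2}$. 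Unfolding Definition~\ref{kerufygoieuyg}, $\cond{X_1\cup Y}{\system_1\mpara\system_2}(q_{X_1\cup Y})=(X_1\cup Y{=}q_{X_1\cup Y})\mpara\system_1\mpara\system_2$. Because the variables of $\system_1$ are exactly $X_1$ and they are all pinned by the input valuation $q_{X_1\cup Y}$, while the shared variables $X_1\cap X_2$ lie in $X_1\cup Y$, the factor $\system_1$ contributes only the trivial (nil-like) probability mass $1$ to the compressed conditional on $X_2$; hence up to $\equiv$ this conditional coincides with $(Y{=}q_Y)\mpara\system_2=\cond{Y}{\system_2}(q_Y)$ once $q_{X_1}$ has already been produced by the marginal side. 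I would make this precise by comparing, via formula (\ref{kslufhsdbkj}), the probabilistic score that each side of (\ref{lreiugfepiu}) assigns to an arbitrary output $q\in Q_{X_1\cup X_2}$, exactly as in the proof of Theorem~\ref{reoifueorwui}: both reduce to $\consist{(\proba_1\times\proba_2)}\bigl((\cons_1\wedge\cons_2)_q\bigr)$.

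The main obstacle I anticipate is the bookkeeping around compression and the identification of the conditional factor. The subtlety is that $\system_1$ appears on both the marginal side (as part of $\system_1\mpara\margin{Y}{\system_2}$) and potentially on the conditional side of the naive Bayes decomposition; I must argue that all the ``$\system_1$ content'' has been absorbed into the marginal and that the conditional is genuinely only $\cond{Y}{\system_2}$, not $\cond{X_1\cup Y}{\system_1\mpara\system_2}$ with a residual dependence on $\system_1$'s probability space. The cleanest route is probably not to manipulate the kernels symbolically but to verify the probabilistic-semantics equality directly, state by state, using the same three-step product-of-scores argument as in Appendix~\ref{lertughetpr}: compute $\outproba(q)$ for the right-hand Bayesian network of (\ref{lreiugfepiu}) as the product of the marginal score of $\proj{X_1\cup Y}{q}$ under $\system_1\mpara\margin{Y}{\system_2}$ and the conditional score of the remaining coordinates under $\cond{Y}{\system_2}$, then collapse the product of conditional probabilities into $\consist{(\proba_1\times\proba_2)}\bigl((\cons_1\wedge\cons_2)_q\bigr)$ using $\consist{\proba_1}$ acting trivially on the $\Omega_1$-factor once $q_{X_1}$ is fixed. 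This mirrors Theorem~\ref{reoifueorwui} closely enough that the calculation should go through without new ideas.
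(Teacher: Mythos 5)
Your proposal is correct and follows essentially the same route as the paper's proof: apply Theorem~\ref{reoifueorwui} to $\system_1\mpara\system_2$ with the subset $X_1\cup Y$, rewrite the marginal via Lemma~\ref{keleuyo}, and then argue that the conditional factor reduces to $\cond{Y}{\system_2}$ because the outcome of $\system_1$ is already determined by the left-hand factor of ``$\bayestimes$''. Your extra care in distinguishing $\cond{X_1\cup Y}{\system_1\mpara\system_2}$ from $\cond{Y}{\system_2}$, and your fallback of checking the score product state by state, only make explicit what the paper dispatches in one line.
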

\begin{proof}\rm 
For proving formula (\ref{lreiugfepiu}), we first apply Theorem~\ref{reoifueorwui} with $\system$ replaced by $\system_1{\mpara}\system_2$, which yields:
	$\system_1{\mpara}\system_2\equiv_P\margin{X_1{\cup}Y}{\system_1{\mpara}\system_2}\bayestimes\cond{Y}{\system_1{\mpara}\system_2}$. Then, by Lemma~\ref{keleuyo}, 
	\linebreak
	$\margin{X_1{\cup}Y}{\system_1{\mpara}\system_2}\equiv\system_1\mpara\margin{Y}{\system_2}$ and then we conclude by observing that
	$$\bigl(\system_1\mpara\margin{Y}{\system_2}\bigr)\bayestimes\cond{Y}{\system_1{\mpara}\system_2}~\equiv_P~\bigl(\system_1\mpara\margin{Y}{\system_2}\bigr)\bayestimes\cond{Y}{\system_2}\,,$$ since the outcome of $\system_1$ is determined by the left hand factor of ``$\bayestimes$''.\eproof
\end{proof}
Having proved this lemma, the proof of Theorem~\ref{eriuhpwiu} reproduces exactly the reasoning steps establishing the message passing algorithm mapping factor graphs to Bayesian Networks in the classical setting~\cite{Loeliger2004}; thus we only sketch here the argument of the proof. 
\begin{proof}\rm 
Since $\cG_\bsystem$ is a tree, a natural distance can be defined on the set of vertices of $\cG_\bsystem$ by taking the length of the unique path linking two vertices. Select an arbitrary system $\system_o$ as an {origin} and partially order other systems according to their distance to the origin, let $\preceq$ be this partial order. We have thus made $\cG_\bsystem$ a rooted tree, which we can see as a DAG. Then, the following two rules, known as \emph{message passing,} are considered:
\begin{enumerate}
	\item [R1:] Pick $\system\in\cG_\bsystem$, let $\ancestor{\system}$ be its (unique) ancestor in the tree and let $\ancestor{X}$ be the set of common variables of $\ancestor{\system}$ and ${\system}$. Then, let  $\prev{\system\,}$ denote the parallel composition of all strict ancestors of $\system$ in $\cG_\bsystem$ and let $\prev{X}$ be the set of variables of $\prev{\system\,}$. Using Bayes formula, factor $\system$ as
	\beqq
	\system ~~\equiv_P~~
	\margin{\ancestor{X}}{\system}\bayestimes\cond{\ancestor{X}}{\system} ~~\equiv_P~~
	\margin{\prev{X}}{\system}\bayestimes\cond{\prev{X}}{\system}\,,
	\eeqq
	where the second equivalence follows from the fact that additional variables belonging to $\prev{X}\setminus\ancestor{X}$ are not shared with $\system$.
	\item [R2:] Using formula (\ref{lreiugfepiu}) of Lemma~\ref{elriufek}, reorganize $\bsystem$ by rewriting
	\beqq
	\prev{\system\,}\mpara\system &~\equiv_P~&
	\left(\prev{\system\,}\mpara\margin{\prev{X}}{\system}\right)\,\bayestimes\;\cond{\prev{X}}{\system}\,.
	\eeqq
\end{enumerate}
Rules R1 followed by R2 are successively applied starting from the leaves of the tree, down to its root. The result is a Bayesian Network.
\end{proof}

\section{Proofs Regarding Mixed Automata}
\subsection{Proof of Lemma~\ref{egfuioehrpo}}
\label{elrgfuilyu}
\begin{proof}\rm 
The result is immediate if both $\system_1$ and $\system'_1$ are compressed, see Definition~\ref{lighlalegfr}. It is thus sufficient to prove the lemma for the following two particular cases: $\system_1$ compresses to $\system'_1$, and the converse.

Consider first the case: $\system_1$ compresses to $\system'_1$. Let $w(\omega_1,\omega_2)$ be the weighting function associated to the lifting $\system_1\NMPlift{\simu}\system_2$, and let $\proba'_1(\omega'_1)=\sum_{\omega_1\in\omega'_1}\proba_1(\omega_1)$ be the relation between $\proba'_1$ and $\proba_1$ in the compression of $\system_1$ to $\system'_1$. Then $w'(\omega'_1,\omega_2)=\sum_{\omega_1\in\omega'_1}w(\omega_1,\omega_2)$ defines the weighting function associated to the lifting $\system'_1\NMPlift{\simu}\system_2$. The other properties required to deduce $\system'_1\NMPlift{\simu}\system_2$ are immediate to prove.

Now, consider the alternative case: $\system'_1$ compresses to $\system_1$, with relation
\beq\bea{c}
\proba_1(\omega_1)=\sum_{\omega'_1\in\omega_1}\proba'_1(\omega'_1)
\eea
\label{ofoihiouhgliuh}
\eeq
 between $\proba'_1$ and $\proba_1$, where $\omega'_1\in\omega_1$ means that $\omega_1$ is the equivalence class of $\omega'_1$ with respect to relation $\sim$ defined in (\ref{eoguheogihio}) when compressing $\system'_1$. This case is slightly more involved since the weighting function $w'(\omega'_1,\omega_2)$ needs to be constructed. We need $w'(\omega'_1,\omega_2)$ to satisfy the following relations:
\begin{equation}\bea{rl}
\forall \omega'_1:&
\proba'_1(\omega'_1)=\sum_{\omega_2}w'(\omega'_1,\omega_2)
\\
\forall \omega_2:&
\proba_2(\omega_2)=\sum_{\omega'_1}w'(\omega'_1,\omega_2)
\\
[2mm]
\forall(\omega'_1,\omega_2;q_1):&
\left[\bea{c}
w'(\omega'_1,\omega_2)>0 \\ \omega'_1\,\cons'_1\,{q_1}\eea\right] \Ra \exists q_2:\left[\bea{c}\omega_2\,\cons_2\,{q_2} \\ q_1\,\simu\,{q_2}\eea\right]
\\
[-3mm]
\wemph{.}
\eea
\label{eruithpeuhpoh}
\end{equation}
Focus first on the first two lines of (\ref{eruithpeuhpoh}). The following calculation shows that
\beqq
w'(\omega'_1,\omega_2) &\eqdef& w(\omega_1,\omega_2) \times \frac{\proba'_1(\omega'_1)}{\proba_1(\omega_1)} \times \uun({\proba_1(\omega_1){>}0})\,,
\eeqq
where $\omega_1$ is such that $\omega'_1\in\omega_1$ and $\uun(B)$ equals $1$ if predicate $B$ is true and $0$ otherwise, yields a weighting function $w' $ satisfying the first two lines of (\ref{eruithpeuhpoh}):
\beqq
\sum_{\omega_2}w'(\omega'_1,\omega_2) &=& \sum_{\omega_2}\left(
w(\omega_1,\omega_2) \times \frac{\proba'_1(\omega'_1)}{\proba_1(\omega_1)} \times \uun({\proba_1(\omega_1){>}0})\right)
\\
&=& \frac{\proba'_1(\omega'_1)}{\proba_1(\omega_1)} \times \uun({\proba_1(\omega_1){>}0}) \times \sum_{\omega_2} w(\omega_1,\omega_2)  ~=~ \proba'_1(\omega'_1)
\\
\sum_{\omega'_1}w'(\omega'_1,\omega_2) &=& \sum_{\omega'_1}\left(
w(\omega_1,\omega_2) \times \frac{\proba'_1(\omega'_1)}{\proba_1(\omega_1)} \times \uun({\proba_1(\omega_1){>}0})\right)
\\
&=& \sum_{\omega_1}\left(
w(\omega_1,\omega_2) \times \frac{1}{\proba_1(\omega_1)} \times \uun({\proba_1(\omega_1){>}0})\right) \underbrace{\sum_{\omega'_1\in\omega_1}\proba'_1(\omega'_1)}_{=\proba_1(\omega_1)}
\\
&=& \sum_{\omega_1}w(\omega_1,\omega_2) ~=~ \proba_2(\omega_2)\,.
\eeqq
We move to the third line of (\ref{eruithpeuhpoh}). The conditions $w'(\omega'_1,\omega_2)>0$ and $\omega'_1\,\cons'_1\,{q_1}$ together imply
 $w(\omega_1,\omega_2)>0$ and $\omega_1\,\cons_1\,{q_1}$ where $\omega_1$ is the equivalence class of $\omega'_1$, i.e., $\omega'_1\in\omega_1$. The right hand side then follows since we have $\system_1\NMPlift{\simu}\system_2$. This finishes the proof.
\end{proof}

\subsection{Proof of Lemma~\ref{glrtukghtrllsdukfg}}
\label{wroifuwgopi}

\begin{proof}\rm 
Set $M'\eqdef M'_1\mpara{M'_2}$ and $M\eqdef M_1\mpara{M_2}$. Define the relation $\leq$ between $Q'$ and $Q$ by: $q'\leq{q}$ iff
$q'_1\leq_1{q_1}$ and $q'_2\leq_2{q_2}$. Let us prove that $\leq$ is a
simulation.
Let $q'$ be such that $\trans{q'}{\action}{\system'}{M'}$ for some consistent $\system'$. Then,
$q'=q'_1\join{q'_2}$ and $\system'=\system'_1\mpara\system'_2$. By
definition of the parallel composition, we have
$\trans{q'_i}{\action_i}{\system'_i}{M'_i}$ for $i=1,2$, with $\action_1\acompat\action_2$ and $\action=\action_1\ajoin\action_2$. Since
$q'_i\leq{q_i}$, we derive the existence (and uniqueness) of
consistent systems $S_i,i=1,2$ such that
$\trans{q_i}{\action_i}{\system_i}{M_i}$. Since $q=q_1\join{q_2}$ we
have $q_1\compat{q_2}$ and, thus, by definition of the parallel
composition, we deduce
\mbox{$\trans{r}{\action}{\system_1\mpara\system_2}{M}$}.
It remains to show that $\system_1\mpara\system_2$ is consistent. To
prove this, remember that $\system'=\system'_1\mpara\system'_2$ is
consistent. Thus, there exist compatible $q'_1$ and $q'_2$ such that
$\produces{\system'_i}{q'_i}, i=1,2$.  By definition of the
simulations $\leq_i$, we deduce that
$\produces{\system_i}{q_i}, i=1,2$, which shows that
$\system_1\mpara\system_2$ is consistent.
\end{proof}

\section{Proofs Regarding the comparison with Probabilistic Automata}

\subsection{Proof of Theorem~\ref{erlgfuierhlpiu} regarding Simple Probabilistic Automata}

\subsubsection{Statement~\ref{reufygeky} of Theorem~\ref{erlgfuierhlpiu}: from SPA to Mixed Automata}
\label{leirugheltorwu}
\begin{proof}\rm 
The sampling of \spa\ $P$ is: if $P$ is in state
$q{\in}Q$, performing $\action{\in}\alphabet$ leads to some target set
of probability distributions over $Q$, of which one is selected,
nondeterministically, and used to draw at random the next state $q'$.

We can reinterpret this sampling as follows: performing
$\action{\in}\alphabet$ while being in state $q{\in}Q$ leads to the
same target set of probability distributions over $Q$, that we use
differently. We form the direct product of all distributions belonging
to the target set and we perform one trial according to this
distribution, i.e., we perform independent random trials for all
probabilities belonging to the target set. This yields a tuple of
candidate values for the next state, of which we select one,
nondeterministically.

Clearly, these two samplings produce identical
outcomes. The latter is the sampling of
\mmdp\
\beq
M_P&=&(\alphabet,\{\xi\},q_0,\ra_P)\,,
\label{lerfgbleuy}
\eeq
defined as follows:
\begin{enumerate}
	\item Alphabet $\alphabet$ of $M_P$ is identical to that of $P$;
\item The unique variable $\xi$ of $M_P$ enumerates the values of $Q$, and initial state $q_0$ is identical to that of $P$; hence, $P$ and $M_P$ possess identical sets of states, related via the identity map;
\item $\ra_P$ maps a pair $(q,\action)\in{Q}{\times}\alphabet$ to the mixed system $S(q)=(\Omega,\Proba,\xi,q,\cons)$, where:
\begin{enumerate}
	\item 	$\Omega$ is the product of $n$ copies of $Q$, where $n$ is the cardinality of the set $\{\proba\mid(q,\action,\proba){\in}\ra\}$; thus, $\omega$ is an $n$-tuple of states: $\omega{=}(q_1,\dots,q_n)$.
  \item \label{lriueliu} $\Proba$ is the product of all probabilities belonging to \mbox{$\{\proba\mid(q,\action,\proba){\in}\ra\}$};
  \item \label{leguihuiliugh} Relation $C$ is defined by
	 $(\omega,q')\in\cons$ if and only if $q'\in\{q_1,\dots,q_n\}$.
\end{enumerate}
\end{enumerate}
So, we map \spa\ $P$ to \mmdp\ ${M_P}$, defined in (\ref{lerfgbleuy}).

\myparagraph{Mapping simulation relations}
Defining simulation relations for $\pa$ requires lifting relations,
from states to distributions over states. The formal definition for
this lifting, as given in Section\,4.1 of~\cite{Seg06-CONCUR},
corresponds to our Definition~\ref{hrgfuihsk}, when restricted to
purely probabilistic mixed systems.
The same holds for the strong simulation relation defined in
Section\,4.2 of the same reference: it is verbatim our
Definition~\ref{def:simulation}, when restricted to purely
probabilistic mixed systems. This proves the part of
Theorem~\ref{erlgfuierhlpiu} regarding simulation.

\myparagraph{Mapping parallel composition}
We move to parallel composition, for which the reader is referred to~\cite{DBLP:conf/concur/LynchSV03}, Section 3. For $P_1=(\alphabet,Q_1,q_{0,1},\ra_1)$ and $P_2=(\alphabet,Q_2,q_{0,2},\ra_2)$ two PA, their parallel composition is  $P=P_1\mpara{P_2}=(\alphabet,Q_1\times{Q_2},(q_{0,1},q_{0,2}),\ra)$, where
\beq
\trans{(q_1,q_2)}{\action}{\proba_1 {\times} \proba_2}{}
 &\mbox{ iff }&
 \trans{q_i}{\action}{\proba_i}{i} \mbox{ for }i=1,2
\label{rtogtrilko}
\eeq
So, on one hand we consider the \mmdp\ $M_P$.
On the other hand, we consider the parallel composition of their images $M_{P_1}$ and $M_{P_2}$, namely $M=M_{P_1}\mpara M_{P_2}=(\alphabet,\{\xi_1,\xi_2\},(q_{0,1},q_{0,2}),\ra_{12})$. In $M$,
the state space is the domain of the pair $(\xi_1,\xi_2)$, namely $Q_1\times{Q_2}$,
and, since there is no shared variable between the two \mmdps, the transition relation $\ra_{12}$ is given by:
\beq
\trans{(q_1,q_2)}{\action}{\system_1 {\mpara} \system_2}{12}
&\mbox{ iff }& \trans{q_i}{\action}{\system_i}{i} \mbox{ for }i=1,2
\label{toguihtuiokuygf}
\eeq
We thus need to show that
\beq
\mbox{$M_P$ and $M$ are simulation equivalent.}
\label{gtiohrgtio}
\eeq
We will actually show that the identity relation between the two state spaces (both are equal to $Q_1\times{Q_2}$) is a simulation relation in both directions.

Observe first that (\ref{rtogtrilko}) and (\ref{toguihtuiokuygf}) differ in that the former involves a nondeterministic transition relation, whereas the latter involves a deterministic transition function, mapping states to mixed systems.
Pick $(q_1,q_2)\in{Q_1}\times{Q_2}$ and consider a transition for $M_P$:
\[
\trans{(q_1,q_2)}{\action}{S}{M_P}=((\Omega,\Proba),\xi,(q_{1},q_{2}),\cons)
\]
where we have, for $S$:
\begin{itemize}
	\item $\Omega$ is the product of $n_1$ copies of $Q_1$ and $n_2$ copies of $Q_2$, where, for $i=1,2$, $n_i$ is the cardinality of the set $\{\proba_i\mid(q_i,\action,\proba_i)\in\ra_i\}$, so that $\omega$ identifies $n_1\times{n_2}$-tuple of states: $\omega=(q_{11},\dots,q_{1n_1};q_{21},\dots,q_{2n_2})$;
	\item $\Proba$ is the product of all probabilities belonging to set \mbox{$\{\proba_1\times\proba_2\mid(q_i,\action,\proba_i)\in\ra_i\}$};
	\item $\xi$ has domain $Q_1\times{Q_2}$;
	\item $(\omega,(q''_1,q''_2))\in\cons$ if and only if \[
	(q''_1,q''_2)\in\{(q_{1i_1},q_{2i_2})\mid i_1\in\{1,\dots,n_1\} \mbox{ and } i_2\in\{1,\dots,n_2\}\}\,.
	\]
\end{itemize}
Next, pick $(q_1,q_2)\in{Q_1}\times{Q_2}$ and consider a transition for $M$, see (\ref{toguihtuiokuygf}). We need to detail what $S_1\mpara{S_2}=((\Omega' ,\Proba' ),\xi' ,(q_1,q_2),\cons')$ is. We have, for $S_1\mpara{S_2}$:
\begin{itemize}
	\item $\Omega'$ is still the product of $n_1$ copies of $Q_1$ and $n_2$ copies of $Q_2$;
	\item $\Proba'$ is the product $\Proba_1\times\Proba_2$, where $\Proba_i$ is the product of all probabilities belonging to set \mbox{$\{\proba_i\mid(q_i,\action,\proba_i)\in\,\ra_i\}$};
	\item $\xi'$ has domain $Q_1\times{Q_2}$;
	\item $(\omega,(q'_1,q'_2))\in\cons'$ if and only if \[
	(q'_1,q'_2)\in\{(q_{1i_1},q_{2i_2})\mid i_1\in\{1,\dots,n_1\} \mbox{ and } i_2\in\{1,\dots,n_2\}\}\,.
	\]
\end{itemize}
By associativity of $\times$, $\Proba'=\Proba$, whereas other items for $S$ on the one hand and other items for $S_1\mpara{S_2}$ on the other hand, are synctatically identical. Thus (\ref{gtiohrgtio}) follows.
\end{proof}

\subsubsection{Statement~\ref{reoiufui} of Theorem~\ref{erlgfuierhlpiu}: from Mixed Automata to SPA}
\label{loitrghkliughip}
\begin{proof}\rm 
Consider the following reverse mapping $M{\mapsto}{P_M}$, from {\mmdps} to \spa:
\begin{enumerate}
	\item \label{loeruighweruio} The alphabet $\alphabet$ of $P_M$ is identical to that of $M$;
\item The set of states $Q$ of $P_M$ is equal to the set of states of $M$, namely the domain of its set $X$ of variables;

\item \label{eorigfuhoiu} For $\system=(\Omega,\proba,X,p,\cons)$,  decompose  relation
$
\{(\omega,q)|\omega\cons{q}\}$ as $\bigcup_{\psi\in\Psi_\cons}\mbox{graph}(\psi)
$, where $\Psi_\cons$ denotes the set of all partial functions $\Omega\ra{Q}$, mapping each $\omega\in\exists{q}.\cons$ to some $q$ such that $\omega\cons{q}$. Then, we consider, for each $\psi\in\Psi_\cons$, the measure defined by $\psi[\proba](q)\eqdef\proba(\psi^{-1}(q))$, where $\psi^{-1}(q)=\{\omega|\psi(\omega){=}q\}$ ($\psi[\proba]$ is the image of $\proba$ by $\psi$), and we renormalize it by considering
\[
\frac{\psi[\proba]}{\psi[\proba](Q)}\,,
\]
 thus obtaining a probability distribution over $Q$. This defines a subset $\bP_\system\subseteq\Probas(Q)$ of probability distributions.

\item \label{oweriuthwpiweru} The transition relation of $P_M$ is defined as follows:
\beq
\ra_{P_M} &=&
\left\{
(p,\action,\mu) \mid \exists\system:(p,\action,\system)\in\;\ra_M \mbox{ and } \mu\in\bP_\system
\right\}
\label{lierughpeih}
\eeq
\end{enumerate}
Consider two \mmdps\ $M,M'$ and let $\leq$ be a simulation relation between their state spaces $Q$ and $Q	'$: $q\leq{q'}$ and $\trans{q}{\action}{\system}{M}$ imply the existence of $\system'\in\Systems(Q')$  such that  $\system\NMPlift{\leq}\system'$ and $\trans{q'}{\action}{\system'}{M'}$. We need to show that the same relation $\leq\;\subseteq{Q}{\times}{Q'}$ is also a simulation relation for \spa. Let $\trans{q}{\action}{\mu}{P_M}$ be a transition of \spa\ $P_M$. By (\ref{lierughpeih}), there exists a Mixed System $\system$ such that
$\trans{q}{\action}{\system}{M}$ and $\mu\in\bP_\system$. Since $\leq$ is a simulation relation for \mmdps, there exists $\system'\in\Systems(Q')$  such that  $\system\NMPlift{\leq}\system'$ and $\trans{q'}{\action}{\system'}{M'}$. Now, $\system\NMPlift{\leq}\system'$ expands as follows: There exists a weighting function $w:\Omega\times\Omega'\ra[0,1]$ such that the following two conditions hold:
\begin{enumerate}
	\item \label{peroiguto} For every triple $(\omega,\omega';q)$ such that $w(\omega,\omega')>0$ and $\omega\cons{q}$, there exists $q'$ such that $\omega'\cons'{q'}$ and $q\leq{q'}$;
	\item $w$ projects to $\proba$ and $\proba'$, respectively.
\end{enumerate}
Let $\psi\in\Psi_\cons$ be the selection function giving rise to $\mu$ following step~\ref{eorigfuhoiu}, meaning that $\mu$ is obtained by renormalizing $\psi[\proba]$. Select any $\omega\in\exists{q}.\cons$ and let $q=\psi(\omega)$. Select any $\omega'$ such that $w(\omega,\omega')>0$
and assign to it one $q'$ such that
$\omega'\cons'{q'}$ and $q\leq{q'}$ (such an $q'$ exists by the above Condition~\ref{peroiguto}). This selection procedure defines a selection function $\psi':\exists{q'}\cons'\ra{Q'}$, mapping the $\omega'$ of the above Condition~\ref{peroiguto} to $q'$, which in turn defines a probability distribution $\mu'$, obtained by renormalizing $\psi'[\proba']$. Consider the following weighting function over $Q{\times}Q'$:
\beqq
v &=& (\psi,\psi').w\,, \mbox { which expands as} \\
v(q,q') &=& w\{(\hat{\omega},\hat{\omega}')\mid\psi(\hat{\omega})=q,\psi'(\hat{\omega}')=q'\}
\eeqq
In particular $v(q,q')\geq{w(\omega,\omega')}>0$ by construction of $\psi,\psi'$, and $v$. Then, $v$ projects to $\mu$, and to $\mu'$:
\beqq
\forall q:
\sum_{q'}v(q,q')&=&\sum_{q'}w\{(\hat{\omega},\hat{\omega}')\mid\psi(\hat{\omega})=q,\psi'(\hat{\omega}')=q'\}
\\
&=& \sum_{\omega'}w\{(\hat{\omega},\hat{\omega}')\mid\psi(\hat{\omega})=q\}=\mu(q)
\eeqq
and
\beqq
\forall q':
\sum_{q}v(q,q')&=&\sum_{q}w\{(\hat{\omega},\hat{\omega}')\mid\psi(\hat{\omega})=q,\psi'(\hat{\omega}')=q'\}
\\
&=& \sum_{\omega}w\{(\hat{\omega},\hat{\omega}')\mid\psi'(\hat{\omega'})=q'\}=\mu'(q')
\eeqq
To summarize, we have constructed a probability distribution $\mu'$ such that \mbox{$\mu\leq^{\Probas}\mu'$} and $\trans{q'}{\action}{\mu'}{P_{M'}}$, showing that $\leq$ was also a simulation relation for \spa.
\end{proof}
To complete our proof, it remains to show the following lemma:
\begin{lemma}
	\label{eoriuhwpoiu} There is no mapping $M\mapsto{P_M}$ that preserves the parallel composition.
\end{lemma}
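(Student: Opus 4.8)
The plan is to build a counterexample that drives the conditioning built into Definition~\ref{lrtguiodtrhleguip} to its extreme, so that the composed transition lands on an \emph{inconsistent} Mixed System. The asymmetry I want to exploit: the target of an \spa{} transition, viewed as a Mixed System via Example~\ref{wuydtfuty}, is an ordinary probability distribution and hence always consistent, and \spa{} parallel composition merely takes products of such distributions; so neither an \spa{} nor a parallel composition of two \spa\ can reproduce, up to simulation, a Mixed Automaton whose only move out of a state is a self‑contradictory one.

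Concretely, I would take a variable $x$ with three values $\mathtt{s},0,1$, the plain synchronization $\action\ajoin\action=\action$, and actions $\alpha,p,q$ with $p\neq q$. Let $M_1$ have initial state $\mathtt{s}$ and transitions $\trans{\mathtt{s}}{\alpha}{S_1}{1}$ and $\trans{0}{p}{T}{1}$, where $S_1=(\{\omega\},\dirac_\omega,\{x\},\{(\omega,0)\})$ forces $x{=}0$ and $T$ is any consistent system over $\{x\}$; let $M_2$ be the mirror image, with $\trans{\mathtt{s}}{\alpha}{S_2}{2}$, $\trans{1}{q}{T}{2}$, and $S_2$ forcing $x{=}1$. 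The distinct continuations $p\neq q$ make $M_1\not\equiv M_2$, which keeps the construction compatible with Lemma~\ref{glrtukghtrllsdukfg}. In $M_1\mpara M_2$ the only move out of $\mathtt{s}$ is $\trans{\mathtt{s}}{\alpha}{S_1\mpara S_2}{}$, and by~(\ref{sidtycfgdity}) the relation of $S_1\mpara S_2$ is empty — $x$ cannot be both $0$ and $1$ — so $S_1\mpara S_2$ is inconsistent. Since the proof of Lemma~\ref{glrtukghtrllsdukfg} only ever matches transitions whose target is consistent, $M_1\mpara M_2$ is simulation equivalent to the one‑state deadlocked automaton $D$, even though neither $M_1$ nor $M_2$ is.

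Then I would argue as follows. Assume a reverse mapping $M\mapsto P_M$ preserves parallel composition and is faithful, i.e. $M\equiv P_M$ — the latter holds for the construction of Statement~\ref{reoiufui}, and in any case is only needed for Mixed Automata that are already simulation equivalent to an \spa, which is all that occurs here. From $P_{M_i}\equiv M_i$, the initial state of each $P_{M_i}$ carries an $\alpha$-transition to some probability distribution $\mu_i$; hence, by the definition of \spa{} parallel composition, the initial state of $P_{M_1}\mpara P_{M_2}$ carries the $\alpha$-transition to $\mu_1{\times}\mu_2$, which is a genuine (consistent) distribution, so $P_{M_1}\mpara P_{M_2}\not\equiv D$. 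But $P_{M_1\mpara M_2}\equiv M_1\mpara M_2\equiv D$, contradicting $P_{M_1\mpara M_2}\equiv P_{M_1}\mpara P_{M_2}$.

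The hard part is not the construction but pinning down the statement. One must commit to the reading — the one forced by the proof of Lemma~\ref{glrtukghtrllsdukfg} — under which the simulation preorder of Definition~\ref{def:simulation} disregards transitions to inconsistent systems (without it, such transitions become bottom elements and the same conclusion still goes through, but with different bookkeeping); and one must either take ``reverse mapping'' to include faithfulness $M\equiv P_M$ or add an argument that every simulation‑preserving reverse mapping is faithful on the relevant subclass. Beyond these technicalities, the phenomenon is exactly the ``deep'' difference announced earlier: \spa{} composition multiplies probability distributions and can never produce a contradictory posterior, whereas Mixed Automata composition is genuine Bayesian conditioning, of which inconsistency is the degenerate case.
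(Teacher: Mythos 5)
Your argument is essentially correct, but it takes a genuinely different route from the paper's. The paper's counterexample (Example~\ref{oeriguheol}) stays entirely inside the consistent, purely probabilistic fragment: $\system_1$ and $\system_2$ put joint distributions on $(x_1,x)$ and $(x,x_2)$, so $\system_1\mpara\system_2$ is the \emph{conditioned} distribution $\proba_1{\times}\proba_2(\,\cdot\mid \bx(\omega_1){=}\bx(\omega_2))$, while \spa{} composition can only ever produce the uncorrelated product $\mu_1{\times}\mu_2$; the obstruction is the coupling, not a deadlock. You instead push the conditioning to its degenerate extreme, an empty posterior, and derive the contradiction from ``deadlocked vs.\ not deadlocked.'' Your version makes the final contradiction very crisp, but it pays for that by resting on exactly the corner of the theory that Definition~\ref{def:simulation} leaves ambiguous: whether transitions to inconsistent systems must be matched. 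You flag this honestly, and your chosen reading is indeed the one implicit in the proof of Lemma~\ref{glrtukghtrllsdukfg}; but note that under the literal reading your fallback is not just ``different bookkeeping'' --- there the composite $M_1\mpara M_2$ has \emph{no} \spa{} representative at all (an inconsistent target is $\NMPlift{\leq}$-below every consistent one but never above a genuine distribution), so the contradiction comes from faithfulness on the composite rather than from composition-preservation per se. The paper's example avoids this sensitivity entirely and carries the additional information that the failure is not confined to degenerate deadlocks. Both proofs share the same unavoidable informality about what ``reverse mapping'' must mean beyond preserving $\mpara$ (the paper asserts ``the only candidate way,'' you assume faithfulness $M\equiv P_M$); you are at least explicit about it.
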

To support the above claim, we consider the following counter-example, where {$\system(\mem)$} indicates that $\system$ has previous state $\mem$:
\begin{example}\rm 
	\label{oeriguheol}
	Let $X=\{x_1,x,x_2\}$ be a set of three variables with finite domains $Q_{x_1},Q_{x},Q_{x_2}$. Consider the two systems $\system_i(p_i)=(\Omega_i,\proba_i,X_i,p_i,\cons_i), i=1,2$, where: $X_1=\{x_1,x\}$, $X_2=\{x,x_2\}$; $p_1\compat{p_2}$; $\Omega_i=Q_i$ with $Q_1=Q_{x_1}{\times}Q_{x}$ and $Q_2=Q_{x}{\times}Q_{x_2}$; $\proba_i$ is a probability over $\Omega_i$; and $\omega_i\cons_i{q_i}$ iff $\omega_i={q_i}$. Define
	\beq
	\bx:\Omega_1\uplus\Omega_2\ra{Q_x}, \mbox{ such that }\left\{\bea{l}
	\bx(\omega_1)=q \;\mbox{ if }\omega_1=(q_1,q) \\
	\bx(\omega_2)=q' \mbox{ if }\omega_2=(q',q_2)
	\eea\right.
	\label{elriguferhog}
	\eeq
	 System $\system_1$ amounts to defining the pair $(x_1,x)$ as random variables with joint distribution $\proba_1$; similarly, $\system_2$ amounts to defining the pair $(x,x_2)$ as random variables with joint distribution $\proba_2$.	We assume that the set of all
	 $q{\in}Q_x$ such that $\proba_1(Q_1{\times}\{q\}){>}0$ $\mbox{and }\proba_2(\{q\}{\times}Q_2){>}0$ is non empty.
	Forming the composition $\system_1{\mpara}\system_2$ yields the system
	$\system(p){=}(\Omega,\proba,X,p,\cons)$, where $X=X_1{\cup}X_2=\{x_1,x,x_2\}$, $Q=Q_{x_1}{\times}Q_{x}{\times}Q_{x_2}$, $p=p_1\join{p_2}$, $\Omega=\Omega_1\times\Omega_2$, $\proba=\proba_1\times\proba_2$, and $\omega\cons{(q_1,q,q_2)}$ iff $\omega_1\cons_1(q_1,q)$ and $\omega_2\cons_2(q,q_2)$.
	According to Definition~\ref{slergiuhpiu}, the sampling of $\system$ is the following:
	draw $(\omega_1,\omega_2)$ at random with the conditional distribution
$
\proba_1\times\proba_2\bigl((\omega_1,\omega_2)| \bx(\omega_1){=}\bx(\omega_2)\bigr)
$,
	where the map $\bx$ was defined in (\ref{elriguferhog});
	the resulting $(\omega_1,\omega_2)$ uniquely defines $(q_1,q,q_2)\in{Q}$ (no nondeterminism).
In words, the parallel composition $\system_1\mpara\system_2$ amounts to making the triple of variables $(x_1,x,x_2)$ to be random with the joint distribution $
\proba_1\times\proba_2\bigl((\omega_1,\omega_2)\mid \bx(\omega_1)=\bx(\omega_2)\bigr)
$.

Next, consider the \mmdp\ $M{=}(\{\action\},{X},q_0,\ra)$, where $X{=}\{x_1,x,x_2\}$, set $Q$ of states is defined accordingly $Q{=}Q_{x_1}{\times}Q_{x}{\times}Q_{x_2}$, and $\ra$ maps, through action $\action$, any state $p{\in}Q$ to the above system $\system(p)$. Similarly, we consider the two Mixed Automata $M_i=(\{\action\},{X_i},q_{i,0},\ra_i),i{=}1,2$, where $X_i$ is as above, $q_{i,0}$ is the projection of $q_0$ on $Q_i$, and $\ra_i$ maps, through action $\action$, any state $p_i{\in}Q_i$ to the above system $\system_i(p_i)$. We have $M=M_1{\mpara}M_2$.

The only candidate way of mapping $M_i$ to a \spa\ is by considering the two \spa\ $P_i$ with sets of states $Q_i$ and transition relation $\trans{p_i}{\action}{\proba_i}{i}$, where $\proba_i$ was defined above. Now, $P_1{\mpara}P_2$ has transition relation  $\trans{p}{\action}{\proba_1\times\proba_2}{}$, which reflects no interaction between the two \spa, so it cannot represent $M_1{\mpara}M_2$.
%
\end{example}

\subsection{Proof of Theorem~\ref{elriguhui} regarding Probabilistic Automata}
\label{klueirghirugh}
\begin{proof}\rm 
We consider the mapping $P\mapsto{M_P}=(\{1\},X,q_0,\ra_{M_P})$, from \pa\ to Mixed Automata, defined as follows:
\begin{enumerate}
	\item Alphabet $\{1\}$ is the trivial singleton (the particular element does not matter);
	\item $X=\{\xi_\alphabet,\xi_Q\}$, where the variables $\xi_\alphabet$ and  $\xi_Q$ enumerate $\alphabet$ and $Q$;
	\item \label{lseuioghu} Transition $\ra_{M_P}$ maps state $p$ to system $\system(p)=((\Omega,\proba),X,p,\cons)$, where
\begin{itemize}
	\item $\Omega=(\alphabet{\times}Q)^n$, where $n$ is the cardinal of the image of $p$ by transition $\ra$;
	\item $\proba$ is the product of all the distributions selected by transition $\ra$ starting from $p$;
	\item $\cons$ is the nondeterministic selection of one component of $\omega$.
\end{itemize}
\end{enumerate}
	We only need to prove the positive statement related to simulation. Consider a simulation relation for \pa\ $q\leq{q'}$. We need to prove that $\leq$ is also a simulation relation for \mmdps. Let $\mu$ be such that $(q,\mu)\in\;\ra$. Since $\leq$ is a simulation relation for \pa, there exists $\mu'$ such that $(q',\mu')\in\;\ra'$ and $\mu\leq^\Probas\mu'$.
	Let $\system$ and $\system'$ be the mixed systems to which $q$ and $q'$ are mapped by step~\ref{lseuioghu} of the mapping $P\mapsto{M_P}$. We have to prove that
	$\system\NMPlift{\leq}\system'$. For each $\mu$ such that $(q,\mu)\in\;\ra$, let
	the function $\chi:\Probas(Q)\ra\Probas(Q')$ select one $\mu'$ such that $(q',\mu')\in\;\ra'$ and $\mu\leq^\Probas\mu'$ and let $v_\mu$ be a weighting function associated to relation $\mu\leq^\Probas\mu'$. The following weighting function
	\beqq
	w(\omega,\omega') &\eqdef&
	\prod_{\mu:(q,\mu)\in\;\ra}v_\mu(q_\mu,q'_\mu)
	\eeqq
	where $(q_\mu,q'_\mu)\in{Q{\times}Q'}$, solves the problem.
	\end{proof}

	\section{{Extending Mixed Systems to continuous probabilities}}
\label{reoiughofieurf}
In this appendix, we indicate how to relax the restriction that the considered probability spaces should all be discrete and we discuss technical difficulties. A recommended reference on probability theory is~\cite{DellacherieMeyer1978}. The reader is invited to compare the following writing with the corresponding material of Section~\ref{uwtrdytrd}. We begin with some notations and prerequisites.

\subsection{{Notations and prerequisites on probability theory}}
\label{eliguhpiu}
For $P$ and $Q$ two sets, $P\times{Q}$ their product, and $A\subseteq{P}\times{Q}$, we denote by $\proj{P}{A}$ the projection of $A$ over $P$.

\myparagraph{Probability spaces}
$(\Omega,\cF,\proba)$ shall generically denote a probability space, i.e., $\Omega$ is a set, $\cF$ is a \sigalg\ over $\Omega$ (i.e., a subset of $2^\Omega$, containing $\emptyset$ and stable under complement, and countable unions and intersections), and $\proba$ is a probability (i.e., a countably additive function, from \sigalg\ $\cF$ to $[0,1]$, such that $\proba(\emptyset)=0$ and $\proba(\Omega)=1$). Let $p:(\Omega,\cF)\mapsto\{0,1\}$ be a measurable predicate, say that $p$ \emph{holds almost everywhere} if $\proba\{\omega\mid p(\omega)=1\}=1$.
For a measurable function $f:(\Omega,\cF)\mapsto(\bR_+,\cL)$, where $\cL$ is the Borel \sigalg\ over $\bR_+$, we write 
\beqq
\bE(f)&\eqdef&\int f(\omega) \proba(d\omega)\,.
\eeqq
For $(\Omega_i,\cF_i,\proba_i)_{i=1,2}$ two probability spaces, $\cF_1\times\cF_2$ is defined as the smallest \sigalg\ over $\Omega_1\times\Omega_2$ making the two projections measurable, and $\proba_1\times\proba_2$ shall denote the cartesian product of the two probabilities, characterized by $(\proba_1\times\proba_2)(A_1\times{A_2})=\proba_1(A_1)\proba_2(A_2)$, where $A_i\in\cF_i$. Infinite products of probabilities $\proba=\prod_{i\in{I}}\proba_i$ with arbitrary index set $I$ can even be defined; they are characterized by the equalities
$
\proba\left(
\prod_{i\in{I}}A_i
\right) = \prod_{i\in{I}}\proba_i(A_i)
$,
where all but a finite number of $A_i$ are equal to $\Omega_i$.

\myparagraph{Conditional expectations and conditional probabilities}
For $\cG\subseteq\cF$ a sub-\sigalg\ of $\cF$, and $X:(\Omega,\cF)\mapsto\bR_+$, measurable, there exists $Y:(\Omega,\cG)\mapsto\bR_+$, measurable, such that $\bE(X{\times}Z)=\bE(Y{\times}Z)$ for any measurable $Z:(\Omega,\cG)\mapsto\bR_+$. $Y$ satisfying the above properties is almost surely unique: $\proba(Y'\neq{Y})=0$ for any two such random variables. $Y$ is called the 
\beq
\mbox{\emph{conditional expectation of $X$ given $\cG$,} written $\bE(X\mid\cG)$.}
\label{liughpriu}
\eeq
 For $A\in\cF$, let $\uun_A$ denote the characteristic function of set $A$, which equals $1$ on $A$ and $0$ elsewhere; then, we write 
\beq
\proba(A\mid\cG)&\eqdef&\bE(\uun_A\mid\cG).
\label{leriukgjpofyugfio}
\eeq
If $B\in\cF$ satisfies $\proba(B)>0$ and $\cF_B=\{\emptyset,\Omega,B,\Omega{\setminus}{B}\}$ is the smallest \sigalg\ containing set $B$, then, the conditional expectation $f\eqdef\proba(A\mid\cF_B)$ is such that $f(\omega)=\proba(A\mid{B})=\frac{\proba(A{\cap}B)}{\proba(B)}$ for almost every $\omega\in{B}$. To show this, we form 
\[\bea{c}
\bE(\proba(A\mid\cG)\times\uun_B)=\bE\left(\frac{\proba(A{\cap}B)}{\proba(B)}\times\uun_B\right)
=\frac{\proba(A{\cap}B)}{\proba(B)}\times\underbrace{\bE\left(\uun_B\right)}_{=\proba(B)}=\proba(A{\cap}B)
\eea\]
with a similar result for $\uun_{\Omega-B}$, showing that the characterization of the conditional expectation is satisfied.
This establishes the link between conditional expectation and conditional probability in its elementary setting. 

For $\cG_2,\cG_1\subseteq\cF$ two sub-\sigalg{s}, and $X:(\Omega,\cF)\mapsto\bR_+$, measurable, we write $\bE(X\mid\cG_1\mid\cG_2)\eqdef\bE(\,\bE(X\mid\cG_1)\mid\cG_2)$. Let $B\in\cF$ and $\cG\subseteq\cF$, and let $\cF_B$ be the smallest \sigalg\ containing the set $B$; we write 
\beq
\proba(A\mid \cF_B\mid\cG)&\eqdef&\bE(\,\proba(A\mid\cF_B)\mid\cG)=\bE(\,\bE(\uun_A\mid\cF_B)\mid\cG)\,.
\label{liughltkuygfk}
\eeq

\myparagraph{Disintegration} 
Consider $(\Omega,\cF,\proba)$ and $\cG\subseteq\cF$ as before.
So far we have defined $\proba(A\mid\cG)$ as a $\cG$-measurable random variable, for a given $A\in\cF$. Can we take it as a \emph{transition probability} $P(\omega,A)$, i.e., a map such that $A\mapsto P(\omega,A)$ is a probability for $\omega$ fixed, and $\omega\mapsto P(\omega,A)$ is $\cG$-measurable for $A$ fixed? Here is the formalization:

\begin{definition}[{\cite{DellacherieMeyer1978,halmos1976measure}}]
	\label{weuyfgwekuy} 
Call \emph{disintegration}\footnote{Depending on the authors, disintegration is also called \emph{regular version of the conditional expectation}.} $\proba(A\mid\cG)$, where $A$ ranges over $\cF$, a map
$(\omega,A)\mapsto{P}(A\mid\omega)$ from $\Omega\times\cF$ to $[0,1]$ such that:
\begin{enumerate}
	\item \label{lieruhfpeioru} For $A$ fixed, $\omega\mapsto{P}(A\mid\omega)$ is $\cG$-measurable, and ${P}(A\mid\cdot)$ is a version of the conditional expectation $\proba(A\mid\cG)$; 
	\item For $\omega$ fixed, $A\mapsto{P}(A\mid\omega)$ is a probability.
\end{enumerate}
If $P(A\mid\omega)$ and $P'(A\mid\omega)$ are two such regular versions, then, probabilities $P(\cdot\mid\omega)$ and $P'(\cdot\mid\omega)$ must be equal outside a set of $\omega$ of $\proba$-probability zero.
\end{definition}

\myparagraph{Existence of a disintegration}
The existence of a disintegration is not always guaranteed. It is obvious for discrete probability spaces. It is not true in general, however, see, e.g.,~\cite{blackwell1956}. Failure to exist typically occurs when working with measurable spaces completed with subsets of zero probability sets. 
The existence of a a disintegration is only guaranteed under specific topological properties for the underlying set. Jirina Theorem is an example of broad sufficient topological conditions for the existence of regular versions for conditional expectations, see~\cite{DellacherieMeyer1978,halmos1976measure}. The Blackwell spaces, however, are adequate tools for this, so we introduce them next. For the following material, the reader is referred to~\cite{blackwell1956}.

If $\Omega$ is a metric space, the smallest \sigalg\ containing all open sets of $\Omega$ is its \emph{Borel \sigalg,} denoted by $\cB$. Borel \sigalg\ $\cB$ is called \emph{separable} if there is a sequence $B_n{\in}\cB$ such that $\cB$ is the smallest Borel \sigalg\ containing all $B_n$. In particular, if $\Omega$ is a separable metric space, its Borel \sigalg\ is separable. The \emph{atoms} of $\cB$ are the sets $B\in\cB$ such that no proper nonempty subset of it belongs to $\cB$. Any two nonidentical atoms are disjoint and every Borel set  is a union of atoms. 

A metric space $A$ will be called \emph{analytic} if $A$ is the continuous image of the set of irrational numbers. The following properties hold, showing which cases are covered by this notion:
\begin{enumerate}
	\item If $A_n$ is a sequence of analytic sets in a metric space $\Omega$, then $\bigcup_nA_n$, $\bigcap_nA_n$ if nonempty, the product space $A_1{\times}A_2$ and the infinite product space $A_1{\times}A_2{\times}\cdots$, are
analytic sets.
\item  If $A$ is analytic, so is every Borel subset of $A$.
\item Every Borel set of the Euclidean $n$-space is analytic.
\item If $A,B$ are disjoint analytic subsets of a metric space $\Omega$, there is a Borel set $D$ of $\Omega$ such that $D\supset{A}$ and $D\cap{B}=\emptyset$.
\item If $f$ is a Borel-measurable mapping of an analytic set $A$ into a separable metric space $Q$, then $f(A)$, the range of $f$, is an analytic set.
\end{enumerate}
Pairs $(\Omega,\cB)$, where $\Omega$ is analytic and $\cB$ is its Borel \sigalg, are called \emph{Blackwell spaces}.\footnote{They are actually called ``Lusin spaces'' in~\cite{blackwell1956}, but the term ``Backwell spaces'' has been used since then in the literature to avoid the confusion with the hierarchy of Polish topological spaces.} The following two results are proved in~\cite{blackwell1956}:
\begin{theorem}
	\label{keirufgher}  For $(\Omega,\cB,\proba)$ a Blackwell space:
	\begin{enumerate}
	\item Two separable sub-\sigalg{s} of $\cB$ with the same atoms  are identical.
		\item For $\proba$ any probability on $(\Omega,\cB,\proba)$ and $\cB'$ any separable sub-\sigalg\ of $\cB$, there exists a disintegration for $\proba(A\mid\cB')$. 
	\end{enumerate}
\end{theorem}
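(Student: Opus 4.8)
The plan is to reproduce, in the present setting, the two arguments of Blackwell~\cite{blackwell1956}. Both rest on a single device: a separable (equivalently, countably generated) sub-\sigalg\ $\cG\subseteq\cB$ can be coded by a measurable map into Cantor space. Concretely, if $\cG$ is generated by a sequence $(B_n)_{n\in\bN}$, set $f_\cG:\Omega\ra\{0,1\}^\bN$ by $f_\cG(\omega)=(\uun_{B_n}(\omega))_{n\in\bN}$; then $\cG=f_\cG^{-1}(\cB_0)$, where $\cB_0$ is the Borel \sigalg\ of $\{0,1\}^\bN$, and the atoms of $\cG$ are exactly the fibers $f_\cG^{-1}(t)$, $t\in\{0,1\}^\bN$. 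From here the two statements follow by combining this coding with the listed closure properties of analytic sets, most crucially that Borel-measurable images of analytic sets are analytic, that Borel subsets of analytic sets are analytic, and that two disjoint analytic sets can be separated by a Borel set.

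For statement~1, I would let $\cB_1,\cB_2$ be separable sub-\sigalg{s} with the same atoms, and fix codings $f_i:\Omega\ra\{0,1\}^\bN$ as above. ``Same atoms'' means $f_1(\omega)=f_1(\omega')\iff f_2(\omega)=f_2(\omega')$, i.e.\ the fibers of $f_1$ and of $f_2$ coincide. By symmetry it suffices to show $\cB_1\subseteq\cB_2$. Take $B\in\cB_1$; being a union of atoms of $\cB_1$, it is a union of atoms of $\cB_2$, so both $B$ and its complement are saturated for the fibers of $f_2$, whence $f_2(B)\cap f_2(B^c)=\emptyset$. Since $\Omega$ is analytic and $B,B^c$ are Borel, they are analytic, so $f_2(B)$ and $f_2(B^c)$ are disjoint analytic subsets of $\{0,1\}^\bN$; choose a Borel set $D$ with $f_2(B)\subseteq D$ and $D\cap f_2(B^c)=\emptyset$. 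Then $f_2^{-1}(D)\in\cB_2$, and saturation of $B$ for the fibers of $f_2$ forces $f_2^{-1}(D)=B$; hence $B\in\cB_2$.

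For statement~2, I would reduce to a standard model and invoke the classical construction of regular conditional probabilities on Polish spaces. First, $\cB$ itself is countably generated (it is the Borel \sigalg\ of a separable metric space) and separates points, so its coding map $h:\Omega\ra\{0,1\}^\bN$ is injective and is a Borel isomorphism of $(\Omega,\cB)$ onto the analytic set $h(\Omega)$ with its relative Borel \sigalg. Transport $\proba$ and the separable sub-\sigalg\ $\cB'$ forward along $h$; since analytic sets are universally measurable, the image measure extends to a Borel probability on the compact metric space $\{0,1\}^\bN$ carried by $h(\Omega)$. On $\{0,1\}^\bN$ a disintegration relative to a countably generated sub-\sigalg\ exists by the standard argument~\cite{DellacherieMeyer1978}: choose consistent versions of the conditional expectations $\bE(g\mid\cdot)$ for $g$ ranging over a countable algebra generating $C(\{0,1\}^\bN)$, represent, for each $\omega$, the resulting positive normalized functional by a Borel probability via Riesz, and verify measurability in $\omega$. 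Because the total mass sits on $h(\Omega)$, for $\proba$-almost every $\omega$ the conditional measure is carried by $h(\Omega)$, and pulling back through $h^{-1}$ yields the sought disintegration of $\proba(A\mid\cB')$ on $(\Omega,\cB)$; almost-sure uniqueness is the clause of Definition~\ref{weuyfgwekuy}.

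I expect the main obstacle to be the bookkeeping in statement~2 around the non-Borel analytic set $h(\Omega)$: one must extend both the measure and the sub-\sigalg\ $\cB'$ off $h(\Omega)$ to the ambient Polish space $\{0,1\}^\bN$ in a mutually compatible way, and then argue that almost every fiber measure returns to $h(\Omega)$. This is precisely where universal measurability of analytic sets enters, and it is the only place where the Blackwell (analyticity) hypothesis is essential beyond statement~1; statement~1 itself is routine once the analytic separation theorem is available.
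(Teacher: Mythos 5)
The paper does not prove this theorem at all: it is stated as a quoted classical result, with the sentence ``The following two results are proved in~\cite{blackwell1956}'' immediately preceding it, so there is no in-paper argument to compare against. Your reconstruction is therefore being measured against Blackwell's original proofs, and on that score it is essentially faithful. For statement~1 your argument is exactly Blackwell's: code each separable sub-\sigalg\ by a map into $\{0,1\}^\bN$, observe that a set of $\cB_1$ is saturated for the fibers of $f_2$, push $B$ and $B^c$ forward to disjoint analytic sets, separate them by a Borel set $D$, and recover $B=f_2^{-1}(D)\in\cB_2$. The only hypotheses used are precisely the closure properties of analytic sets listed in the paper (Borel subsets of analytic sets are analytic, Borel images are analytic, disjoint analytic sets are Borel-separated), so this part is complete and correct.

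For statement~2 your outline is the standard route (embed $(\Omega,\cB)$ Borel-isomorphically onto an analytic subset of Cantor space, build a regular conditional probability there via a countable algebra of continuous functions and the Riesz representation, then pull back), and you correctly identify the one genuinely delicate step: showing that the fiber measures are almost surely carried by $h(\Omega)$. Be aware that ``universal measurability of analytic sets'' alone is not quite the working tool here; what one actually uses is capacitability (Choquet): since $h(\Omega)$ is analytic and has outer measure $1$ under the pushforward, it contains a $\sigma$-compact, hence Borel, subset $B_0$ of full measure, and then $\int P(B_0\mid t)\,d\mu=1$ forces $P(B_0\mid t)=1$ almost surely, after which the pullback through $h^{-1}$ is routine. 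A second piece of bookkeeping you gloss over is the transport of $\cB'$: one should pick Borel sets $D_n$ in Cantor space with $h^{-1}(D_n)$ generating $\cB'$ and disintegrate with respect to the \sigalg\ they generate, so that separability of $\cB'$ is actually used. With those two points made explicit, your proof is a correct rendering of the cited result.
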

In the following we will assume (unless otherwise stated) that all considered measurable spaces are Blackwell, so that Theorem~\ref{keirufgher} can be applied.


\subsection{{Definition and basic properties}}
\myparagraph{Relations}
Upper case letters $X,Y,Z$ shall denote finite sets of \emph{variables}, and variables are denoted by corresponding lower case letters $x,y,z\dots$ Let the domain of $x$ be denoted by $Q_x$ and be equipped with a \sigalg\ $\cG_x$; the domain of $X$ is $Q_X\eqdef\prod_{x\in{X}}Q_x$, equipped with the product \sigalg\ $\cG_X=\prod_{x\in{X}}\cG_x$. We will consider \emph{equations} (also called \emph{relations} or \emph{constraints}): an equation on $X$ identifies with its set of solutions, i.e., a measurable subset of $Q_X$; if $Y\subseteq{X}$, an equation on $Y$ can be seen as an equation on $X$. We consider \emph{systems of equations}, which are sets of equations implicitly composed via intersection.

\begin{definition}[Mixed System]
  \label{erfgeiuy} \it
	A \emph{Mixed System} is a tuple 
	\beq
		\system &=& \bigl((\Omega,\cF,\proba),(Q_x,\cG_x)_{x\in{X}},\cons\,\bigr),
	\label{keurygfekrygu}
	\eeq
	where $(\Omega,\cF,\proba)$ is a private probability space; $(Q_x,\cG_x)_{x\in{X}}$ is a finite set of measurable state spaces with product $
	(Q,\cG)\eqdef\prod_{x\in{X}}(Q_x,\cG_x)$, and $\cons\in{\cF{\times}\cG}$ is a measurable relation over ${\Omega{\times}{Q}}$. In the sequel, we also write $\omega\cons{q}$ to mean $(\omega,q)\in\cons$, and we identify the set of variables $X$ with the measurable state space $(Q_x,\cG_x)_{x\in{X}}$ it defines, thus we write
	\beq
	\system &=&\bigl((\Omega,\cF,\proba),{X},\cons\,\bigr),
	\label{lfeiyugeroiu}
	\eeq
for short instead of $(\ref{keurygfekrygu})$.
\end{definition} 

Defining the semantics of Mixed Systems in the general case requires some care, as the following example shows.
\begin{example}\rm [discussing consistency]
	\label{lkujygkregohro} Let $X$ and $Y$ be two real random variables with continuous joint distribution $\proba$. Formally, $\Omega=\bR^2$, $\cF$ is the Lebesgue \sigalg\ over $\Omega$, $\proba$ is a continuous probability over $(\Omega,\cF)$ and $X$ and $Y$ are the first and second coordinates of $\bR^2$. For $y$ a given value for $Y$, consider $\cons=\{(\omega,y)\mid\omega\in\Omega\}$ completing the definition of Mixed System $\system=\bigl((\Omega,\cF,\proba),{X},\cons\,\bigr)$. The intuition is that $\system$ models the conditional distribution of $(X,Y)$ given that $Y=y$. We would like this to be a consistent system, despite $\proba(Y{=}y)=0$. Thus, elementary Definition~\ref{slergiuhpiu} for the operational semantics cannot be used since it would lead to considering system $\system$ as inconsistent.
	
	For this case, the correction is easily guessed. The aim is that prior probability $\proba$ should be replaced by the posterior conditional distribution $\proba(\cdot\mid y)$, rather a disintegration for it. This amounts to making $y$ ``variable'' by considering a disintegration $\proba(A\mid\cF_Y)$ according to Definition~\ref{weuyfgwekuy}, where $\cF_Y\subset\cF$ is the \sigalg\ generated by random variable $Y$. Recall that the existence of such a disintegration is subject to topological conditions, see the comment following Definition~\ref{weuyfgwekuy}---such conditions are satisfied by this example. Then, we take $\consist{\proba}=\proba(\cdot\mid y)$ by taking the corresponding disintegration.
\eproof\end{example}

How can we extend this to general Mixed Systems? Informally, how can we make relation $\cons$ ``variable''? 
\begin{definition}[consistency and sampling]
	\label{elriugheilru} 
	Mixed System $\system$ is called \emph{consistent} if the following conditions hold:
	\begin{enumerate}
		\item \label{olitrughu} There exists a sub-\sigalg\ $\cH\subseteq\cF$ such that a disintegration $\proba(\cdot\mid\cH)$ exists; we denote by $\atom$ a generic atom of $\cH$, thus conditional probability $\proba(\cdot\mid\cH)$ becomes a function of atom $\atom$, so we write it $\proba(\cdot\mid{\atom})$;
		\item \label{trghoiejihojtloi} There exists a measurable relation $\Cons\in\cF\times\cG$ such that 
		\begin{enumerate}
				\item \label{etprwghtprgu}
		Relation $\cons$ takes the form $\cons=\Cons\cap({\atom}\times{Q})$	for some atom $\atom$ of $\cH$;
			\item \label{lritughuiyjui} $\proba(\consist{\Omega}\mid{\atom})>0$ where $\consist{\Omega}\eqdef$ \mbox{$\{\omega\mid\exists{q}:\omega\Cons{q}\}$}.
\end{enumerate}
	\end{enumerate}
	If $\system$ is consistent, define $\consist{\proba}$ by 
	\beq
	\consist{\proba}(A) &\eqdef& \frac{\proba(A\cap\consist{\Omega}\mid{\atom})}{\proba(\consist{\Omega}\mid{\atom})}
	\label{troihyjtpijpol}
	\eeq
	The \emph{sampling} of $\system$ consists in: $(1)$ drawing $\omega$ at random using $\consist{\proba}$, and $(2)$ nondeterministically selecting $q$ such that $\omega\cons{q}$. This two-step procedure is denoted by $\produces{\system}{q}$.
\end{definition}

The ``variable embedding'' of $\cons$ is the relation $\Cons$, from which $\cons$ is retrieved by selecting the atom $w$ at step~\ref{etprwghtprgu}.
\begin{example}\rm 
	\label{loerufgehro} 
	Consider the \plang\ program ``$S_1\,\progpara\,S_2\,\progpara\,S_3\,\progpara\,S_4$'' of the introduction. Now, the white noise model for $\prog{w}$ in $\prog{Noise}$ is truly Gaussian (or any other distribution on $\bR$, possibly continuous). The prior probability of this system is 
	\beq
	\mbox{prior proba}:\left\{\bea{l}
	\mathit{rf}_n\sim{\bf Bernoulli}(10^{-6}) \\ v_n\sim\mu \\ \mbox{by semantic convention, $\mathit{rf}$ and $w$ are independent}
	\eea\right.
	\label{deujhysetfduyt}
	\eeq
	and relation $\cons$ is the following system of equations:
	\beq	
	\cons&:&
	\left\{\bea{l}
	\prog{\keyw{observe}}\; u,y \\
x_0 = c_x \,,\, v_0=c_v \,,\, f_0 = \mfalse \\
x_n = \varphi(u_n,x_{n-1}) \\
y_n = \mbox{if } f_n \mbox{ then } \psi(x_n,v_n) \mbox{ else } x_n \\
f_n = (\mathit{rf}_n \mbox{ or } f_{n-1}) \mbox{ and not } \mathit{bk}_{n} 
	\eea\right.
	\label{elriufgheroui}
	\eeq
	The following observation is the key to handle the model (\ref{deujhysetfduyt},\ref{elriufgheroui}): if we forget for a while the first constraint $\bemph{\bf observe}\;y$ in (\ref{elriufgheroui}), then the resulting dynamical system can be seen as an input/output system with inputs $u,v,{\it rf}$, of which $u$ is a measured input, whereas $v,{\it rf}$ are random inputs: there is nothing unusual. In this i/o system, the prior probability is not subject to any constraint, hence the posterior probability equals the prior. 
	
	The difficulty comes with the consideration of the output constraint $\prog{\keyw{observe}}\;y$. This suggests taking for the instrumental \sigalg\ $\cH$ the \sigalg\ generated by $y$. Accordingly, we partition (\ref{elriufgheroui}) as
	\beq	
	\cons&:&
	\left\{\bea{rclll}
	&&\bemph{\bf observe}\;y &&\mbox{defining }\cH \mbox{, the \sigalg\ generated by }y, \\
	&&&&\mbox{whose atom is represented by a value for }y.\\
	y_n &=& f(v_n,\mathit{rf}_n) &&\mbox{defining }\Cons \mbox{ and consistency set } \consist{\Omega}, \mbox{ equal to }\Omega
	\eea\right.
	\label{elriufheruil}
	\eeq
	where $f$ is the function resulting from computing $y$ from the pair $(w,{\it rf})$ by using system of equations (\ref{elriufgheroui}) in which the first equation has been deleted (other variables are also computed). This defines the auxiliary relation $\Cons$ and we have $\consist{\Omega}=\Omega$. The \sigalg\ $\cF$ is generated by the pair $(w,{\it rf})$ of random variables, and $\cH$ is the \sigalg\ generated by $y\eqdef f(w,{\it rf})$. Atoms of $\cH$ consist of any reachable value for $y$. Then, $\proba(\cdot\mid\cH)=\proba(\cdot\mid{y})$ is the conditional distribution of the pair $(w,{\it rf})$ given a reachable value for $y$, and $\consist{\Omega}=\Omega$, showing that the considered system is consistent.\eproof
\end{example}
As a side result, the discussion of this example suggests how sampling can be performed in practice for \plang\ programs involving continuous distributions.
\end{document}